\documentclass[accepted]{uai2026} 
                        
\usepackage[american]{babel}

\usepackage{natbib} 
\usepackage{mathtools} 
\usepackage{booktabs} 
\usepackage{tikz} 
\usepackage{bm}
\usepackage{amssymb}
\usepackage{amsmath}
\usepackage{amsthm}
\usepackage{enumitem}
\usepackage{subcaption}
\usepackage{graphicx}
\usepackage{hyperref}

\usepackage[capitalize,noabbrev,nameinlink]{cleveref}
\hypersetup{linktoc=all}

\newlist{assumplist}{enumerate}{1}
\setlist[assumplist]{label=\textbf{\arabic*.}, ref=\theassumption.\arabic*, leftmargin=1.2em, itemsep=1pt, parsep=0pt, topsep=2pt}

\crefname{assumplisti}{Assumption}{Assumptions}
\theoremstyle{plain}
\newtheorem{theorem}{Theorem}[section]
\newtheorem{proposition}{Proposition}
\newtheorem{lemma}{Lemma}
\newtheorem{corollary}{Corollary}
\theoremstyle{definition}
\newtheorem{definition}{Definition}
\newtheorem{assumption}{Assumption}
\theoremstyle{remark}
\newtheorem{remark}{Remark}

\makeatletter

\newcommand{\printsupplementarytoc}{%
  \section*{Contents of the Supplementary Material}%
  \begingroup
    \setcounter{tocdepth}{2}%
    \setlength{\parindent}{0pt}%
    \setlength{\parskip}{0pt}%
    \renewcommand{\@pnumwidth}{2.25em}%
    \renewcommand{\@tocrmarg}{3.0em}%
    \renewcommand{\@dotsep}{3.2}%
    \renewcommand*{\l@section}[2]{%
      \ifnum \c@tocdepth >\z@
        \addpenalty{\@secpenalty}%
        \addvspace{0.55em plus 1pt}%
        \@tempdima 2.3em\relax
        \begingroup
          \parindent \z@
          \rightskip \@pnumwidth
          \parfillskip -\@pnumwidth
          \leavevmode
          \bfseries
          \advance\leftskip\@tempdima
          \hskip -\leftskip
          ##1\nobreak
          \leaders\hbox{$\m@th\mkern \@dotsep mu.\mkern \@dotsep mu$}\hfill
          \nobreak\hb@xt@\@pnumwidth{\hfil ##2}\par
        \endgroup
      \fi}%
    \renewcommand*{\l@subsection}{\@dottedtocline{2}{2.3em}{3.4em}}%
    \renewcommand*{\l@subsubsection}{\@dottedtocline{3}{5.7em}{4.2em}}%
    \begin{center}%
      \begin{minipage}{0.92\linewidth}%
        \noindent\rule{\linewidth}{0.45pt}\par
        \@starttoc{stoc}%
        \noindent\rule{\linewidth}{0.45pt}\par
      \end{minipage}%
    \end{center}%
  \endgroup
}

\newcommand{\suppsection}[1]{%
  \section{#1}%
  \addcontentsline{stoc}{section}{\protect\numberline{\thesection}#1}%
}

\newcommand{\suppsubsection}[1]{%
  \subsection{#1}%
  \addcontentsline{stoc}{subsection}{\protect\numberline{\thesubsection}#1}%
}

\makeatother

\crefname{assumption}{Assumption}{Assumptions}

\title{Provable Subspace Identification of Nonlinear Multi-view CCA}

\author[1]{Zhiwei Han}
\author[1]{Stefan Matthes}
\author[1,2]{Hao Shen}
\affil[1]{%
    Chair of Data Processing\\
    Technical University of Munich\\
    Germany
}
\affil[2]{%
    fortiss GmbH\\
    Munich\\
    Germany
}
  
\begin{document}
\maketitle

\begin{abstract}
We investigate the identifiability of nonlinear canonical 
correlation analysis (CCA) in a multi-view setup, in which 
each view is generated by applying an unknown nonlinear map 
to a linear mixture of shared latent variables plus view-private 
noise. Rather than pursuing exact unmixing, which is known to be 
ill-posed under general nonlinear mixing, we instead reframe 
multi-view CCA as a basis-invariant subspace identification problem.
Under suitable latent priors and spectral separation conditions, we prove 
that the pairwise population CCA objective recovers correlated 
signal subspaces up to view-wise orthogonal ambiguity. For $N \geq 3$ 
views, their multi-view aggregation provably isolates the jointly 
correlated subspaces shared across all views while eliminating view-private variation.
We further establish finite-sample statistical consistency guarantees by 
translating the concentration of empirical cross-covariances into explicit 
subspace error bounds via spectral perturbation theory. 
Experiments on synthetic and rendered image datasets support our theoretical 
findings and illustrate the necessity of the assumed conditions.
\end{abstract}
\section{Introduction}
Aligned multi-view data are ubiquitous, arising from multimodal sensing, 
multi-camera systems, and synchronized instruments. 
A primary objective of representation learning is to recover latent 
structures shared across views from \emph{nonlinear} observations while 
discarding view-private variation, yielding representations that are 
interpretable, robust to nuisance perturbations, and transferable across 
modalities.
Correlation-based methods, most notably canonical correlation analysis (CCA) 
and its multi-view generalizations 
\citep{kettenring1971canonical}, address this objective by learning 
whitened representations that are maximally aligned in second-order statistics. 
Such methods and their whitening-based variants are also widely used in 
self-supervised learning to prevent feature collapse and stabilize 
training 
\citep{zbontar2021barlow,weng2022investigation,ermolov2021whitening,he2022exploring,kalapos2024whitening}.

Despite their empirical success, what nonlinear CCA \emph{identifies} under 
realistic nonlinear distortions remains insufficiently characterized. 
Given the impossibility of unsupervised source recovery from general nonlinear 
mixtures \citep{hyvarinen1999nonlinear,locatello2019challenging}, 
multi-view constraints have emerged as a prominent structural prior 
to restore identifiability.
Existing CCA-based guarantees either impose post-nonlinear assumptions 
\citep{lyu2020nonlinear} or 
establish equivalence up to arbitrary invertible transforms 
\citep{lyu2022understanding}. 
While recent work establishes affine identifiability under general nonlinear mixing
in the two-view regime \citep{han2026provable},
what additional views enable in a basis-invariant sense remains unclear.

In this work, we address this question through the lens of \emph{subspace identification}. 
We study an additive multi-view generative process where each view is produced by an 
unknown smooth invertible map applied to a view-specific source. Each source decomposes 
into a linearly mixed shared latent vector plus view-private noise, 
relaxing the coordinate-wise independence assumption standard in ICA and 
aligning with the content--style modeling paradigm in causal representation learning
\citep{von2021self,yao2024multiview}. 
Because the mixing matrices are not identifiable, we target their basis-invariant 
\emph{signal subspaces} instead. 
We show that multi-view CCA acts as an \emph{intersection filter} over 
latent factors: it isolates the jointly correlated structure shared across views while 
eliminating view-private variation. We further provide finite-sample guarantees for the 
empirical estimator.

We summarize our contributions as follows:
\begin{itemize}
    \item We propose an $N$-view ($N \ge 3$) additive latent model that relaxes the
    coordinate-wise independence assumption and casts nonlinear CCA as a basis-invariant 
    subspace identification problem.
    \item We prove that generalized nonlinear CCA isolates the jointly correlated signal subspaces, formalizing the objective as an intersection filter over latent factors.
    \item We establish finite-sample consistency for empirical multi-view CCA, translating concentration of second-order statistics into explicit subspace recovery rates via spectral perturbation bounds.
    \item We validate the theory on synthetic and rendered image datasets, corroborating correlated-subspace recovery and the necessity of the stated conditions.
\end{itemize}

\section{Related Work}
\paragraph{Identifiable Disentangled Representations.}
Identifiable representation learning seeks to recover latent
structure up to unavoidable symmetries, an objective rooted
in blind source separation and independent component analysis (ICA) 
\citep{barlow1989finding,comon1994independent,cardoso1998blind}. 
Without additional assumptions, exact recovery from single-view 
i.i.d.\ observations under general nonlinear mixing is 
fundamentally impossible 
\citep{hyvarinen1999nonlinear,locatello2019challenging}. 
To restore identifiability, prior work has introduced symmetry-breaking inductive biases,
such as temporal dynamics, nonstationarity, auxiliary variables, interventions, or 
multi-view constraints 
\citep{hyvarinen2016unsupervised,hyvarinen2017nonlinear,hyvarinen2019nonlinear,khemakhem2020variational,locatello2020weakly,shu2020weakly}. 

\paragraph{Identifiability from Multiple Views.}
Multi-view observations provide cross-view constraints that 
mitigate the non-identifiability of single-view mixing. 
In linear settings, higher-order moment methods and multi-view 
ICA can isolate shared and view-specific sources up to standard 
permutation and scaling ambiguities \citep{podosinnikova2016beyond,pandeva2023multi}. 
Depending on the model class, contrastive objectives can identify 
shared factors in nonlinear mixtures up to specific transformations. 
Assuming independent sources and structured noise, these identifiability 
guarantees range from component-wise transformations to linear or 
block-level equivalence classes
\citep{zimmermann2021contrastive,gresele2020rosetta,matthes2023towards,galvez2023role}. 
Recent advances relax the independent-source assumption by exploiting causal 
latent structures, shifting the target to \emph{block-identifiability}. 
These methods are increasingly unified through partial-observability frameworks, 
where identifiability follows from underlying graphical structures and suitable 
regularity conditions
\citep{von2021self,scholkopf2016modeling,sturma2023unpaired,yao2024multiview,yao2025unifying,ahuja2022weakly,daunhawer2023multimodal,xu2024sparsity,locatello2020weakly}.

\paragraph{Identifiability of Nonlinear Multi-view CCA.}
CCA aligns views through second-order statistics after whitening.
Recent research on CCA-based identifiability has made progress 
on several related problems, including factor-level identifiability 
\citep{lyu2020nonlinear,lyu2022understanding,karakasis2023revisiting}, 
nonlinear dynamical system identification \citep{sidiropoulos2022canonical}, 
subspace identification \citep{sorensen2021generalized}, and subspace 
clustering \citep{karakasis2025subspace}. However, arbitrary nonlinear 
mixing makes nonlinear CCA substantially more challenging: existing 
guarantees are typically limited to recovery up to arbitrary invertible 
transformations, a substantially weaker conclusion than what is available 
in the linear case. Recent work establishes affine identifiability for 
nonlinear CCA in the two-view regime \citep{han2026provable} by leveraging 
Lancaster distribution priors \citep{lancaster1958structure,eagleson1964polynomial}.
It does not characterize what additional views identify from a basis-invariant 
perspective. To bridge this gap, we shift the identifiability target 
from exact mixing recovery to signal subspace identification. 
We prove that, under suitable latent priors, generalized CCA identifies 
the jointly correlated subspaces shared across views, thereby yielding 
a basis-invariant guarantee stronger than equivalence up to arbitrary 
invertible transformations.

\section{Problem Formulation}
\paragraph{Multi-view Generative Process.}
Let \(N\ge 2\) denote the number of views. We observe \(M\) i.i.d.\ 
multi-view samples
\[
\{(\mathbf{x}_1^{(m)},\ldots,\mathbf{x}_N^{(m)})\}_{m=1}^M .
\]
We suppress the sample index and write
\((\mathbf{x}_1,\ldots,\mathbf{x}_N)\) for a generic multi-view 
observation. For each view $i\in[N]:=\{1,\ldots,N\}$, the observation 
$\mathbf{x}_i\in\mathcal{X}_i\subseteq\mathbb{R}^{d_{\mathcal{X}_i}}$ is 
generated from an \emph{unobserved} view-specific latent source 
$\mathbf{s}_i\in\mathcal{S}_i\subseteq\mathbb{R}^{d_{\mathcal{S}_i}}$ 
via an \emph{unknown, generally nonlinear}
generative mapping
$\mathbf{g}_i:\mathcal{S}_i\to\mathcal{X}_i$, i.e.,
\(
\mathbf{x}_i = \mathbf{g}_i(\mathbf{s}_i).
\)
Throughout our identifiability analysis, we assume that each
\(\mathbf{g}_i\) is smooth and invertible on the support of
\(\mathbf{s}_i\).
Here, $\mathcal{S}_i$ and $\mathcal{X}_i$ denote the source and observation spaces of view $i$, 
respectively, and $d_{(\cdot)}$ is the dimension of the underlying ambient space.

Formally, we define the multi-view generative process via an additive-noise 
latent construction at the source level
as follows:
for each $i \in [N]$, we have
\begin{equation}
\label{equ:additive}
\mathbf{x}_i = \mathbf{g}_i(\mathbf{s}_i), \quad \mathbf{s}_i = \mathbf{A}_i \mathbf{c} + \bm{\epsilon}_i, \quad
\mathbf{c}\sim p_{\mathbf{c}},\quad
\bm{\epsilon}_i\sim p_{\bm{\epsilon}_i},
\end{equation}
where $\mathbf{c}\in\mathbb{R}^{d_{\mathcal{C}}}$ is a latent vector shared across views,
$\bm{\epsilon}_i\in\mathbb{R}^{d_{\mathcal{S}_i}}$ captures view-private variation, and
$\mathbf{A}_i\in\mathbb{R}^{d_{\mathcal{S}_i}\times d_{\mathcal{C}}}$ is a
view-specific mixing matrix.

Our generative process generalizes the single-view model of \citet{lyu2022provable} 
to $N$ views, 
with view-dependent mixing matrices $\{\mathbf{A}_i\}_{i=1}^N$ and 
nonlinear maps $\{\mathbf{g}_i\}_{i=1}^N$. 
Unlike standard ICA assumptions that enforce coordinate-wise independence 
\citep{hyvarinen2016unsupervised,hyvarinen2017nonlinear,hyvarinen2019nonlinear,khemakhem2020variational}, 
the shared signal term $\mathbf{A}_i\mathbf{c}$ induces structured 
dependencies within each source $\mathbf{s}_i$, 
e.g., as in causal latent models, consistent with the checkerboard-type 
dependencies of \citet{matthes2023towards}. 
Interpreting $\mathbf{c}$ and $\bm{\epsilon}_i$ as shared \emph{content} and 
view-private \emph{style}, 
our model differs from concatenation-based multi-view formulations in 
causal representation learning 
\citep{daunhawer2023multimodal,yao2024multiview}. 
Moreover, the model connects to partial observability:
if \(\ker(\mathbf A_i)\neq\{0\}\), then components of
\(\mathbf c\) lying in \(\ker(\mathbf A_i)\) do not affect
\(\mathbf s_i\) and are therefore unobservable from view \(i\).

\paragraph{Latent Structure.}
The additive construction induces nontrivial cross-view dependence among
the source variables. Since generalized CCA aggregates pairwise
correlations, our analysis first characterizes the joint distribution of
each source pair \((\mathbf s_i,\mathbf s_j)\) over 
$\mathcal{S}_i \times \mathcal{S}_j$ with 
$d_{\mathcal{S}_i}, d_{\mathcal{S}_j} \ge 2$. We then combines the
resulting pairwise structures across views.

\begin{assumption}[Latent Distributional Prior]
\label{ass:latents_iid_family}
Let \(\mathbf c\) and \(\{\bm\epsilon_i\}_{i=1}^N\) be as in
\autoref{equ:additive}.
\begin{assumplist}
    \item \label{ass:latent_factorization} \textbf{Latent Factorization.}
    The shared latent vector and private latent vectors are mutually
    independent. Within each vector, coordinates are i.i.d.:
    \begin{align*}
       &p_{\mathbf c}(\mathbf c)=\prod_{j=1}^{d_{\mathcal C}}\phi(c_j),\qquad
p_{\bm\epsilon_i}(\bm\epsilon_i)=\prod_{j=1}^{d_{\mathcal S_i}}\phi(\epsilon_{ij}),\\
        &\text{and}\qquad 
        p(\mathbf{c},\bm{\epsilon}_1,\ldots,\bm{\epsilon}_N)
        =
        p_{\mathbf{c}}(\mathbf{c})\prod_{i=1}^N p_{\bm{\epsilon}_i}(\bm{\epsilon}_i),
    \end{align*}
    where $\phi$ denotes standardized Gaussian density.
    \item \label{ass:isotropy} \textbf{Standardization.}
    The latents are centered and isotropic:
    \[
    \mathbb{E}[\mathbf{c}]=\mathbf{0}, \; \mathrm{Cov}(\mathbf{c})=\mathbf{I}_{d_{\mathcal{C}}}, \quad \mathbb{E}[\bm{\epsilon}_i]=\mathbf{0}, \; \mathrm{Cov}(\bm{\epsilon}_i)=\mathbf{I}_{d_{\mathcal{S}_i}}.
    \]

    \item \label{ass:admissible_prior} \textbf{Admissible Marginals.}
    In the main text, we use standardized Gaussian marginals, which are 
    sufficient for our main theory. Extensions to other standardized 
    Lancaster-type marginals are discussed in the supplementary material.
\end{assumplist}
\end{assumption}

\paragraph{Learning Objective.} 
The exact linear mixing matrices \(\{\mathbf A_i\}_{i=1}^N\) are 
fundamentally not identifiable as bases. Instead, we target the 
basis-invariant column-space information encoded by these matrices, 
formalized later as correlated signal subspaces in 
\autoref{def:subspaces}. Specifically, we learn view-specific encoders
\[
\mathbf f_i:\mathcal X_i\to\mathcal Z\subseteq\mathbb R^{d_{\mathcal Z}}, \qquad \forall i \in [N]
\]
and analyze the induced source-domain maps
\(\mathbf h_i:=\mathbf f_i\circ\mathbf g_i\). Our identifiability
analysis asks whether CCA forces these maps to recover the identifiable
correlated signal subspaces, up to the unavoidable orthogonal ambiguity
introduced by whitening. For simplicity, we use a common representation
dimension \(d_{\mathcal Z}\) across views.

\section{Preliminaries}
\paragraph{Notations.}
Throughout, scalars are denoted by plain letters. 
Random vectors and vector-valued functions are denoted by bold lowercase letters. 
Matrices are denoted by bold uppercase letters. 
Sets and spaces are denoted by calligraphic letters.
\subsection{Nonlinear Multi-view CCA}
CCA learns encoders whose whitened representations maximize cross-view correlation
measured by singular values of normalized cross-covariances.
\paragraph{Generalized CCA for Multi-view Learning.} 
Under the multi-view generative process in \autoref{equ:additive}, 
generalized CCA \citep{kettenring1971canonical} aggregates
pairwise objectives across all view pairs as
\begin{equation}
\label{eq:mvcca_obj}
J := \sum_{1\le i<j\le N}
\left\|
\bm\Sigma_{ii}^{-1/2}\bm\Sigma_{ij}\bm\Sigma_{jj}^{-1/2}
\right\|_*,
\end{equation}
where $\|\cdot\|_*$ is the nuclear norm,
$\bm{\Sigma}_{ii}: = \mathrm{Cov}(\mathbf{f}_i(\mathbf{x}_i))\succ0$ and
$\bm{\Sigma}_{jj}: = \mathrm{Cov}(\mathbf{f}_j(\mathbf{x}_j))\succ0$,
while
$\bm{\Sigma}_{ij}: = \mathrm{Cov}(\mathbf{f}_i(\mathbf{x}_i), \mathbf{f}_j(\mathbf{x}_j))$ denotes the cross-covariance matrix between
$\mathbf{f}_i$ and $\mathbf{f}_j$.

\paragraph{CCA in the source domain.} 
Let the induced representation mapping for view $i \in [N]$ 
be $\mathbf{h}_i := \mathbf{f}_i \circ \mathbf{g}_i : \mathcal{S}_i \to \mathcal{Z}$.
By reparameterization invariance (\autoref{lemma:rep-inv};\citep{han2026provable}), 
for all $1 \leq i < j \leq N$,
$\mathrm{Cov}(\mathbf f_i(\mathbf x_i),\mathbf f_j(\mathbf x_j))
=\mathrm{Cov}(\mathbf h_i(\mathbf s_i),\mathbf h_j(\mathbf s_j))$ and 
$\mathrm{Cov}(\mathbf f_i(\mathbf x_i))
=\mathrm{Cov}(\mathbf h_i(\mathbf s_i))$.
Consequently, optimizing \autoref{eq:mvcca_obj}
over $\{\mathbf f_i\}$ is mathematically equivalent to optimizing it over 
the induced representation mapping $\mathbf h_i$.
Under the regularity and representability conditions of 
\autoref{lemma:rep-inv}, this reparameterization allows us to analyze
population maximizers in the source domain without changing the CCA 
objective.

\subsection{Whitening and Canonicalization}
\label{sec:preliminaries}
Whitening, or its regularized variants such as soft-whitening 
\citep{zbontar2021barlow}, imposes a
non-degenerate normalization \citep{weng2022investigation} that 
makes the CCA objective well posed and prevents representational 
collapse \citep{ermolov2021whitening}.
Both properties are critical to our 
identifiability analysis. 

\paragraph{Whitened Representations.}
For each view $i \in [N]$, let 
$\bm\mu'_i:=\mathbb E[\mathbf h_i(\mathbf s_i)]$ and 
$\bm\Sigma'_{ii}:=\mathrm{Cov}(\mathbf h_i(\mathbf s_i))\succ0$.
A representation-whitening matrix 
$\mathbf B_i\in\mathbb R^{d_{\mathcal Z}\times d_{\mathcal Z}}$ satisfies
\(
\mathbf B_i\bm\Sigma'_{ii}\mathbf B_i^\top=\mathbf I
\)
and defines
\[
\tilde{\mathbf h}_i(\mathbf s_i)
:=
\mathbf B_i\big(\mathbf h_i(\mathbf s_i)-\bm\mu'_i\big),
\qquad
\mathrm{Cov}(\tilde{\mathbf h}_i(\mathbf s_i))=\mathbf I .
\]
A canonical choice is the symmetric inverse square root 
$\mathbf B_i={\bm\Sigma'_{ii}}^{-1/2}$.
Under this whitening, the normalized cross-covariance in 
\autoref{eq:mvcca_obj} becomes the cross-covariance of the whitened
source-domain representations:
\[
\tilde{\bm\Sigma}_{ij}
:=
\mathrm{Cov}\!\big(
\tilde{\mathbf h}_i(\mathbf s_i),
\tilde{\mathbf h}_j(\mathbf s_j)
\big)
=
\mathbf B_i\,\bm\Sigma'_{ij}\,\mathbf B_j^\top ,
\]
where $\bm\Sigma'_{ij}:=\mathrm{Cov}(\mathbf h_i(\mathbf s_i),\mathbf h_j(\mathbf s_j))$.
Thus the pairwise CCA term equals $\|\tilde{\bm\Sigma}_{ij}\|_*$.
While orthogonal left-multiplication of $\mathbf{B}_i$ defines the 
equivalence class for our identifiability theorem, the CCA objective 
remains strictly invariant to this ambiguity since it depends solely on 
the orthogonally invariant singular values of $\tilde{\bm\Sigma}_{ij}$.

\paragraph{Canonicalization via Whitening.}
Whitening removes scale and non-orthogonal within-view linear ambiguities,
reducing the feasible set to orthonormal directions in $L^2$; the remaining
ambiguity is orthogonal.
Fix a view $i\in[N]$ and consider the whitened representation mapping
$\tilde{\mathbf h}_i(\mathbf s_i)$.
Let $\{\xi_{i,n}\}_{n\ge1}$ be a fixed orthonormal basis of 
$L^2(P_{\mathbf s_i})$ restricted to centered square-integrable functions.
Since $\tilde{\mathbf h}_i$ is centered, the constant basis component is zero
and is omitted from the expansion.
Each component of $\tilde{\mathbf h}_i$ 
admits the following orthonormal basis expansion: for the component 
$j\in[d_{\mathcal Z}]$,
\[
\tilde{h}_{i,j}(\mathbf s_i)
=
\sum_{n\ge1}c_{i,j,n}\,\xi_{i,n}(\mathbf s_i),
\ \  c_{i,j,n}:=\langle \tilde h_{i,j},\xi_{i,n}\rangle_{L^2(\mu_i)},
\]
where $\mathbf{c}_{i,j}=[c_{i,j,1},\cdots,c_{i,j,\infty}]$ 
denotes the corresponding coefficient vector under the basis 
$\{\xi_{i,n}\}_{n \ge 1}$.
The whitening constraint is equivalent to the orthonormality 
of these coefficient vectors
across components within the same view $i$:
\begin{equation}
    \label{eq:whitening_induced_canonicalization}
\mathbf{c}_{i,p}^\top\mathbf{c}_{i,q}=\delta_{pq},\qquad
p,q\in[d_{\mathcal Z}],
\end{equation}
where $\delta_{pq}$ is the Kronecker-delta function, i.e.,
$\delta_{pq}=1$ iff $p=q$ and $0$ otherwise.
Whitening canonically enforces
(i) decorrelation between representation components and
(ii) unit normalization of each component in function space.
Equivalently, the coefficient matrix $\mathbf C_i$ with rows 
$(\mathbf{c}_{i,1}^\top, \dots, \mathbf{c}_{i,d_{\mathcal Z}}^\top)$ satisfies 
$\mathbf C_i\mathbf C_i^\top=\mathbf I$.

\section{Theory}
\label{sec:theory}
\subsection{Orthogonal Polynomial Expansion of the Canonical Density}
\label{subsec:canonical_factorization}
We first analyze the pairwise source distribution induced by the additive
multi-view model in \autoref{equ:additive}. After whitening each source, 
the normalized cross-covariance
admits an SVD whose canonical coordinates diagonalize the Gaussian dependence
between two views. This yields a product factorization of the joint density
and, via the Mehler--Hermite expansion, a spectral decomposition of nonlinear
cross-view correlations.
\paragraph{Canonical Coordinate System.}
Under the mutual independence and i.i.d.\ conditions in 
\cref{ass:latent_factorization},
let 
\[
\mathbf{W}_i := \rm{Cov}(\mathbf s_i)^{-\frac{1}{2}}=
\big(\mathbf{A}_i\mathbf{A}_i^\top + \mathbf{I}\big)^{-\frac{1}{2}},\quad
i \in [N]
\] 
be a whitening matrix for 
the view-specific source $\mathbf{s}_i$.
For any pair of views $1 \leq i < j \leq N$, the dependency structure 
of the view-specific sources $(\mathbf{s}_i, \mathbf{s}_j)$ is captured 
by their normalized cross-covariance matrix:
\begin{equation}
    \label{eq:normalized_cross_covariance_matrix}
    \mathbf{R}_{ij} := \mathrm{Cov}(\mathbf{W}_i \mathbf{s}_i, \mathbf{W}_j \mathbf{s}_j) = 
    \mathbf{W}_i \mathbf{A}_i \mathbf{A}_j^\top \mathbf{W}_j^\top.
\end{equation}
We consider the following singular value decomposition,
\[
\mathbf{R}_{ij} = \mathbf{U}_{ij} \mathbf{T}_{ij} \mathbf{V}_{ij}^\top, \quad
\mathbf{U}_{ij} \in O(d_{\mathcal{S}_i}), \quad
\mathbf{V}_{ij} \in O(d_{\mathcal{S}_j})
\]
where 
$\mathbf{T}_{ij} \in \mathbb{R}^{d_{\mathcal{S}_i} \times d_{\mathcal{S}_j}}$ 
is a rectangular diagonal matrix.
Let $r_{ij} := \mathrm{rank}(\mathbf A_i \mathbf A_j^\top)$, 
the nonzero singular values of $\mathbf{T}_{ij}$ satisfy
\[
1>t_{ij,1}\ge\cdots\ge t_{ij,r_{ij}}>0,
\]
and all remaining singular values are zero.
\begin{lemma}[Canonical Factorization]
\label{lemma:canonical_factorization}
Under the isotropic Gaussian prior in \cref{ass:isotropy}, define the
canonical coordinates
\begin{equation}
    \label{equ:canonicalizers}
\mathbf u=\mathbf U_{ij}^\top\mathbf W_i\mathbf s_i, 
\qquad
\mathbf v=\mathbf V_{ij}^\top\mathbf W_j\mathbf s_j,
\end{equation}
where $\mathbf U_{ij}$ and $\mathbf V_{ij}$ are orthogonal canonicalizers
of the whitened cross-covariance between $\mathbf s_i$ and $\mathbf s_j$.
By the change-of-variables formula for this invertible linear
transformation,
\begin{align*}
p_{\mathbf{u},\mathbf{v}}(\mathbf{u},\mathbf{v}) 
&= p_{\mathbf{s}_i,\mathbf{s}_j}
\bigl(
\mathbf{W}_i^{-1} \mathbf{U}_{ij} \mathbf{u},
\mathbf{W}_j^{-1} \mathbf{V}_{ij} \mathbf{v}
\bigr) \\
&\quad \cdot 
|\det \mathbf{W}_i|^{-1}
|\det \mathbf{W}_j|^{-1}.
\end{align*}
Moreover, the joint density factorizes as
\[
p_{\mathbf u,\mathbf v}(\mathbf u,\mathbf v)
=
\underbrace{\prod_{k=1}^{r_{ij}} \phi_{t_{ij,k}}(u_k,v_k)}_{\mathcal K_{ij}(\mathbf u[1:r_{ij}],\mathbf v[1:r_{ij}])}
\prod_{m=r_{ij}+1}^{d_{\mathcal S_i}}\phi(u_m)
\prod_{n=r_{ij}+1}^{d_{\mathcal S_j}}\phi(v_n)\]
where $\phi$ is the standard normal density and $\phi_t$ is the
standardized bivariate normal density with correlation $t$.
\end{lemma}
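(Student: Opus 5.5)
The plan is to show that $(\mathbf u,\mathbf v)$ is jointly Gaussian, centered, with an explicitly block-diagonal covariance. Once that is done, the factorization is just the statement that a Gaussian vector with block-diagonal covariance has independent blocks.

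The change-of-variables formula is immediate. The map $(\mathbf s_i,\mathbf s_j)\mapsto(\mathbf U_{ij}^\top\mathbf W_i\mathbf s_i,\mathbf V_{ij}^\top\mathbf W_j\mathbf s_j)$ is linear and block-diagonal. It is invertible because $\mathbf W_i=(\mathbf A_i\mathbf A_i^\top+\mathbf I)^{-1/2}\succ0$ and $\mathbf U_{ij},\mathbf V_{ij}$ are orthogonal. Its Jacobian determinant has modulus $|\det\mathbf W_i||\det\mathbf W_j|$, since orthogonal factors contribute $\pm1$, and this gives the displayed formula.

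\textbf{Step 1 (joint Gaussianity).} Under \cref{ass:latent_factorization} and \cref{ass:isotropy}, the stacked vector $(\mathbf c,\bm\epsilon_i,\bm\epsilon_j)$ is a standard Gaussian vector. The pair $(\mathbf s_i,\mathbf s_j)=(\mathbf A_i\mathbf c+\bm\epsilon_i,\mathbf A_j\mathbf c+\bm\epsilon_j)$ is a linear image of it, so it is jointly Gaussian and centered. Its covariance blocks are $\mathrm{Cov}(\mathbf s_i)=\mathbf A_i\mathbf A_i^\top+\mathbf I$ and $\mathrm{Cov}(\mathbf s_i,\mathbf s_j)=\mathbf A_i\mathbf A_j^\top$; the cross term uses independence of $\bm\epsilon_i,\bm\epsilon_j,\mathbf c$. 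Hence $(\mathbf u,\mathbf v)$ is also centered jointly Gaussian.

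\textbf{Step 2 (covariance computation).} First, $\mathrm{Cov}(\mathbf u)=\mathbf U_{ij}^\top\mathbf W_i(\mathbf A_i\mathbf A_i^\top+\mathbf I)\mathbf W_i\mathbf U_{ij}=\mathbf I$, because $\mathbf W_i$ is the symmetric inverse square root; likewise $\mathrm{Cov}(\mathbf v)=\mathbf I$. Next, $\mathrm{Cov}(\mathbf u,\mathbf v)=\mathbf U_{ij}^\top\mathbf R_{ij}\mathbf V_{ij}=\mathbf T_{ij}$ by \eqref{eq:normalized_cross_covariance_matrix} and the SVD. So the only nonzero cross-correlations are $\mathrm{Cov}(u_k,v_k)=t_{ij,k}$ for $k\le r_{ij}$. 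Here I also confirm that the number of nonzero singular values equals $r_{ij}$, because $\mathbf W_i,\mathbf W_j$ are invertible and so $\mathrm{rank}(\mathbf R_{ij})=\mathrm{rank}(\mathbf A_i\mathbf A_j^\top)$.

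\textbf{Step 3 (strict bound and factorization).} Reorder the coordinates into the pairs $(u_k,v_k)$ for $k\le r_{ij}$, followed by the remaining singletons $u_m$ and $v_n$. The covariance is then block-diagonal, with $2\times2$ blocks $\begin{pmatrix}1&t_{ij,k}\\ t_{ij,k}&1\end{pmatrix}$ and $1\times1$ identity blocks. A jointly Gaussian vector with block-diagonal covariance has independent blocks, so the density is the product $\prod_k\phi_{t_{ij,k}}(u_k,v_k)\prod_m\phi(u_m)\prod_n\phi(v_n)$. For each $\phi_{t_{ij,k}}$ to be a genuine density, the block must be nonsingular, i.e.\ $t_{ij,k}<1$.

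The one step I expect to need care is exactly this strict bound $t_{ij,1}<1$, which is asserted before the lemma. I would prove it by showing $\|\mathbf R_{ij}\|_{\mathrm{op}}<1$. Let $\mathbf M_i:=\mathbf W_i\mathbf A_i$. Then $\mathbf M_i\mathbf M_i^\top=\mathbf W_i\mathbf A_i\mathbf A_i^\top\mathbf W_i=\mathbf I-\mathbf W_i^2\prec\mathbf I$, so $\|\mathbf M_i\|_{\mathrm{op}}<1$, and the same holds for $\mathbf M_j$. Therefore $\|\mathbf R_{ij}\|_{\mathrm{op}}=\|\mathbf M_i\mathbf M_j^\top\|_{\mathrm{op}}\le\|\mathbf M_i\|_{\mathrm{op}}\|\mathbf M_j\|_{\mathrm{op}}<1$, which closes the argument.
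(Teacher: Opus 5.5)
Your proposal is correct and follows essentially the same route as the paper's proof: joint Gaussianity of $(\mathbf s_i,\mathbf s_j)$, then $\mathrm{Cov}(\mathbf u)=\mathbf I$, $\mathrm{Cov}(\mathbf v)=\mathbf I$, $\mathrm{Cov}(\mathbf u,\mathbf v)=\mathbf T_{ij}$, then reordering into a block-diagonal covariance to get independence, and finally the Jacobian check for the change of variables. Your extra verifications, that $\mathrm{rank}(\mathbf R_{ij})=r_{ij}$ and that $t_{ij,1}<1$ via $\mathbf W_i\mathbf A_i\mathbf A_i^\top\mathbf W_i=\mathbf I-\mathbf W_i^2\prec\mathbf I$, are valid and supply facts the paper asserts before the lemma without proving them.
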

\begin{remark}[Information-Theoretic Interpretation]
The factorization in \autoref{lemma:canonical_factorization} implies 
that the cross-view interaction decouples into $r_{ij}$ independent 
Gaussian coordinates:
\[
v_k = t_{ij, k}\,u_k + \sqrt{1-t_{ij, k}^2}\,\varepsilon_k,
\quad \varepsilon_k\sim\mathcal N(0,1), \quad k \in [r_{ij}],
\]
where the canonical correlation $t_{ij, k}$ directly governs the effective
\textit{signal-to-noise ratio} (SNR)
$\gamma_{ij, k} := t_{ij, k}^2 / (1-t_{ij, k}^2)$ for the pair $(u_k, v_k)$.
By definition, $r_{ij}=\mathrm{rank}(\mathbf A_i\mathbf A_j^\top)\le \min(d_{\mathcal S_i},d_{\mathcal S_j})$, 
we have $t_{ij,k}=0$ for all $k>r_{ij}$.
This indicates that only $r_{ij}$ correlated canonical 
coordinates share cross-view information, i.e., $\mathrm{SNR} > 0$, 
while the remaining modes are independent noise. 
\end{remark}
To cast this probabilistic correlation structure into a 
functional-analytic framework, we expand the product bivariate 
Gaussian density
$\mathcal{K}_{ij}$ in \autoref{lemma:canonical_factorization} 
using normalized multivariate Hermite polynomials as shown in 
\citet{mehler1866ueber} and \citet{han2026provable}.
\begin{lemma}[Normalized Multivariate Mehler--Hermite Expansion]
\label{lemma:coupling_density}
Let $r:=r_{ij}$ and let
\(
\psi_n(z):=\frac{1}{\sqrt{n!}}He_n(z),
n\in\mathbb N_0,
\)
be the normalized probabilists' Hermite polynomials. For
$\mathbf n=(n_1,\ldots,n_r)\in\mathbb N_0^r$, define
normalized multivariate Hermite polynomials as
\[
\Psi_{\mathbf n}(\mathbf u)
:=
\prod_{k=1}^{r}\psi_{n_k}(u_k),
\qquad
\phi_r(\mathbf u)
:=
\prod_{k=1}^{r}\phi(u_k).
\]
Then $\{\Psi_{\mathbf n}\}_{\mathbf n\in\mathbb N_0^r}$ is an
orthonormal basis of $L^2(\nu_r)$, where
$\nu_r(d\mathbf u)=\phi_r(\mathbf u)d\mathbf u$ is the
multi-dimensional Gaussian measure.

Let $(\mathbf U,\mathbf V)$ be a centered Gaussian pair in canonical
coordinates such that
\(
\operatorname{Cov}(\mathbf U)=\operatorname{Cov}(\mathbf V)=I_r\)
and \(
\operatorname{Cov}(\mathbf U,\mathbf V)
=
\operatorname{diag}(\rho_1,\ldots,\rho_r),
\)
where $\rho_k:=t_{ij,k}\in(-1,1)$. If $\mathcal K_{ij}$ denotes the joint
density of $(\mathbf U,\mathbf V)$, then
\[
\mathcal K_{ij}(\mathbf u,\mathbf v)
=
\phi_r(\mathbf u)\phi_r(\mathbf v)
\sum_{\mathbf n\in\mathbb N_0^r}
t_{\mathbf n}\,
\Psi_{\mathbf n}(\mathbf u)
\Psi_{\mathbf n}(\mathbf v),
\]
where
\(
t_{\mathbf n}:=\prod_{k=1}^{r}\rho_k^{\,n_k}.
\)
\end{lemma}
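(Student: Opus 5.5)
The proof has two parts: an orthonormal-basis claim, reduced to the univariate case by a tensor-product argument, and a density identity, reduced by the product structure of $\mathcal K_{ij}$ to the classical one-dimensional Mehler formula.

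\emph{Basis claim.} In one dimension, $\{\psi_n\}_{n\ge0}$ is orthonormal in $L^2(\phi\,dz)$ by the standard identity $\mathbb E[He_m(Z)He_n(Z)]=n!\,\delta_{mn}$ for $Z\sim\mathcal N(0,1)$. This identity follows from the generating function $e^{sz-s^2/2}=\sum_n He_n(z)s^n/n!$, since
\[
\mathbb E\bigl[e^{sZ-s^2/2}e^{tZ-t^2/2}\bigr]=e^{st}.
\]
Completeness holds because polynomials are dense in $L^2$ of the Gaussian measure. One way to see this: if $f\perp$ all polynomials, then $\int f(z)e^{i\theta z}\phi(z)\,dz=0$ for all $\theta$, because the Gaussian tail lets the power series of the exponential be integrated term by term. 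Hence $f\phi$ has zero Fourier transform and $f=0$ a.e. For the multivariate statement, $L^2(\nu_r)\cong\bigotimes_{k=1}^r L^2(\phi\,du_k)$, since $\nu_r$ is a product measure. Products of orthonormal bases of the factors form an orthonormal basis of the Hilbert tensor product, which gives the claim for $\{\Psi_{\mathbf n}\}$.

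\emph{Density expansion.} By \autoref{lemma:canonical_factorization}, or directly from the diagonal covariance assumption, the pairs $(U_k,V_k)$ are independent and each has the standardized bivariate normal density $\phi_{\rho_k}$. So $\mathcal K_{ij}=\prod_k\phi_{\rho_k}(u_k,v_k)$, and it suffices to prove the univariate Mehler formula
\[
\phi_\rho(u,v)=\phi(u)\phi(v)\sum_{n\ge0}\rho^n\psi_n(u)\psi_n(v).
\]
I would prove it by comparing Fourier transforms (or moment generating functions) in $(u,v)$. The moment generating function of the left side is $\exp\bigl(\tfrac12(a^2+2\rho ab+b^2)\bigr)$. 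For the right side, use $\int e^{au}He_n(u)\phi(u)\,du=a^n e^{a^2/2}$, which is read off from the generating function. Summing gives
\[
\sum_n \rho^n\frac{a^nb^n}{n!}e^{(a^2+b^2)/2}=e^{(a^2+b^2)/2+\rho ab},
\]
which matches. Taking the product over $k$ and expanding the product of series yields $\sum_{\mathbf n}\prod_k\rho_k^{n_k}\Psi_{\mathbf n}(\mathbf u)\Psi_{\mathbf n}(\mathbf v)$ with $t_{\mathbf n}=\prod_k\rho_k^{n_k}$.

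\emph{Main obstacle: convergence.} The identification by moment generating functions must be justified, including the interchange of sum and integral. The plan is to show that the series converges absolutely and in $L^2(\nu_r\otimes\nu_r)$. Since $\sum_{\mathbf n}t_{\mathbf n}^2=\prod_k(1-\rho_k^2)^{-1}<\infty$ for $|\rho_k|<1$, the kernel $\sum_{\mathbf n}t_{\mathbf n}\Psi_{\mathbf n}(\mathbf u)\Psi_{\mathbf n}(\mathbf v)$ is a Hilbert--Schmidt kernel in $L^2(\nu_r\otimes\nu_r)$. To make the transform comparison rigorous, it is cleanest to avoid moment generating functions and instead compute $L^2$ coefficients. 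The density ratio $\mathcal K_{ij}/(\phi_r\otimes\phi_r)$ lies in $L^2(\nu_r\otimes\nu_r)$ exactly when $|\rho_k|<1$, and its coefficients are $\mathbb E[\Psi_{\mathbf m}(\mathbf U)\Psi_{\mathbf n}(\mathbf V)]=\delta_{\mathbf m\mathbf n}t_{\mathbf n}$. This follows from $\mathbb E[He_m(U)He_n(V)]=\delta_{mn}\,n!\,\rho^n$ for correlated standard normals, obtained from the joint generating function $\mathbb E\bigl[e^{sU-s^2/2}e^{tV-t^2/2}\bigr]=e^{\rho st}$. Since $\{\Psi_{\mathbf m}\otimes\Psi_{\mathbf n}\}$ is an orthonormal basis of $L^2(\nu_r\otimes\nu_r)$ by the first part, the expansion holds in $L^2$ and hence almost everywhere along a subsequence. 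Pointwise equality then follows from the pointwise absolute convergence of Mehler's series for $|\rho|<1$, which can be verified using Cramér's bound on Hermite functions.
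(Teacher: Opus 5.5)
Your proposal is correct, but it runs in the opposite logical direction from the paper's proof.

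\textbf{The paper's route.} It checks orthonormality of $\{\Psi_{\mathbf n}\}$ by Fubini. For the one-dimensional expansion it uses its bivariate lemma, which multiplies the cited closed-form Mehler identity by $\phi(u)\phi(v)$ and recognizes $\phi_\rho$. It then multiplies over $k$ using independence of the canonical pairs, and adds a remark that the series converges in $L^2$ because $\sum_{\mathbf n}t_{\mathbf n}^2=\prod_k(1-\rho_k^2)^{-1}<\infty$. In the paper, the cross-moment identity $\mathbb E[\psi_m(U)\psi_n(V)]=\rho^n\delta_{mn}$ is a consequence of the density expansion.

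\textbf{Your route.} You compute those cross-moments first, from the joint generating function $e^{\rho st}$. You then recover the density expansion as the Hermite--Fourier expansion of the likelihood ratio $\mathcal K_{ij}/(\phi_r\otimes\phi_r)\in L^2(\nu_r\otimes\nu_r)$. This forces you to prove completeness of the basis, via your Fourier argument and the Hilbert tensor-product identification. The paper asserts completeness (the statement says ``orthonormal basis'') but only verifies orthonormality.

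\textbf{Comparison.} The paper's argument is shorter but rests on the classical closed-form Mehler formula taken as known. Yours is self-contained, establishes the full basis claim, and makes explicit in what sense the identity holds: first in $L^2$, then pointwise via Cramér's bound. The paper's $L^2$ remark only shows the series defines an $L^2$ function, relying on the pointwise Mehler formula for its identification with the density ratio. Your MGF sketch can also be made rigorous with the same Cramér bound and Tonelli, so switching to the coefficient argument is a matter of taste rather than necessity.

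\textbf{One detail to make explicit.} To pass from a.e.\ equality to equality everywhere, note that both sides are continuous. The bound $|\psi_n(x)|\le K e^{x^2/4}$ gives locally uniform convergence of the series by the M-test, and any enumeration of the multi-indices yields the same pointwise sum. That sum agrees a.e.\ with the $L^2$ limit along a subsequence.
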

Together with reparameterization invariance (\autoref{lemma:rep-inv})
and whitening-induced canonicalization
(\autoref{eq:whitening_induced_canonicalization}),
\autoref{lemma:coupling_density} yields a functional-analytic framework
for studying nonlinear representation maps in canonical coordinates.
Leveraging the orthonormality of Hermite polynomials under the Gaussian
measure \citep{dunkl2014orthogonal}, this framework diagonalizes
cross-view correlations in the Hermite basis, thereby enabling a
principled separation of first-order linear modes from higher-order
nonlinear Hermite components.

\subsection{Subspace Identifiability}
\label{subsec:subspace_identifiability}
After whitening, CCA-like objectives remain 
invariant to within-view orthogonal transformations, 
such as rotations.
Since precise recovery of mixing matrices 
$\{\mathbf{A}_i\}_{i=1}^N$ is fundamentally impossible, 
we instead target the basis-invariant signal 
subspaces they span. 
\begin{definition}[Correlated Signal Subspaces]
\label{def:subspaces}
For each view $i \in [N]$, the \emph{signal subspace} is defined as 
$\mathcal{U}_i := \mathrm{col}(\mathbf{A}_i) \subseteq \mathbb{R}^{d_{\mathcal{S}_i}}$. 
For any pair of distinct views $1 \leq i < j \leq N$, let 
$r_{ij} := \mathrm{rank}(\mathbf{A}_i\mathbf{A}_j^\top)$. We define 
the \emph{pairwise correlated signal subspace} of view $i$ with respect to view $j$ as 
\[
\mathcal{U}_{i \mid j} := \mathrm{col}\!\big(\mathbf{U}_{ij}(:,1:r_{ij})\big), 
\quad 
\mathcal{U}_{j \mid i} := \mathrm{col}\!\big(\mathbf{V}_{ij}(:,1:r_{ij})\big).
\]
Note that $\mathcal{U}_{i \mid j} \subseteq \mathcal{U}_{i}$ and 
$\mathcal{U}_{j \mid i} \subseteq \mathcal{U}_{j}$, with equality holding 
if and only if $r_{ij} = \mathrm{rank}(\mathbf A_i)$ and $r_{ij} = \mathrm{rank}(\mathbf A_j)$, 
respectively. 
Finally, the \emph{multi-view jointly correlated subspace} for view $i$ is defined 
as the intersection of its pairwise correlated subspaces: 
\(
\mathcal{U}_i^{\mathrm{mv}} := \bigcap_{j \neq i} \mathcal{U}_{i \mid j}.
\)
\end{definition}
We then give the identifiability of the correlated signal subspace 
 in two-view setup.
\begin{theorem}[Infinite-Dimensional Two-view Subspace Identifiability]
\label{thm:two_view_identifiability_infinite}
Fix a pair of distinct views $i\ne j$ with 
$r_{ij}:=\mathrm{rank}(\mathbf A_i\mathbf A_j^\top)>0$. 
Consider the population two-view CCA objective for this pair in the 
infinite-dimensional whitened function class.
Suppose the observed data are 
generated by the multi-view generative process defined in 
\autoref{equ:additive}, where the generating functions 
$\mathbf{g}_i$ and $\mathbf{g}_j$ are smooth and invertible and 
\autoref{ass:latents_iid_family} holds. Let 
$(\tilde{\mathbf{f}}_i^\star, \tilde{\mathbf{f}}_j^\star)$ be any 
whitened population maximizer of the two-view CCA 
objective in \autoref{eq:mvcca_obj}, and define the composition mappings 
$\tilde{\mathbf{h}}_\ell^\star := \tilde{\mathbf{f}}_\ell^\star \circ \mathbf{g}_\ell$ 
for $\ell \in \{i, j\}$. 

Assuming an infinite feature dimension, i.e., $d_\mathcal{Z} \to \infty$, 
there exist orthogonal matrices $\mathbf{O}_i, \mathbf{O}_j \in O(r_{ij})$ 
such that, 
\begin{equation}
\label{eq:two_view_identifiability_rankaware}
\begin{aligned}
    \mathbf{P}_{r_{ij}} \tilde{\mathbf{h}}_i^\star(\mathbf{s}_i) &= \mathbf{O}_i \mathbf{U}_{ij}(:,1:r_{ij})^\top \mathbf{W}_i \mathbf{s}_i, \\
    \mathbf{P}_{r_{ij}} \tilde{\mathbf{h}}_j^\star(\mathbf{s}_j) &= \mathbf{O}_j \mathbf{V}_{ij}(:,1:r_{ij})^\top \mathbf{W}_j \mathbf{s}_j,
\end{aligned}
\end{equation}
where $\mathbf P_{r_{ij}}$ is a rank-$r_{ij}$ coordinate selector, 
defined up to the unavoidable within-view orthogonal ambiguity of CCA, 
that extracts the coordinates corresponding to the pairwise correlated 
linear Hermite modes. Consequently, the maximizers identify the 
$r_{ij}$-dimensional whitened correlated subspaces $\mathcal{U}_{i\mid j}$ 
and $\mathcal{U}_{j\mid i}$ up to orthogonal transformations.
\end{theorem}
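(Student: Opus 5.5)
We reduce to the source domain and work in canonical coordinates. By reparameterization invariance (\autoref{lemma:rep-inv}), maximizing the two-view term $\|\bm\Sigma_{ii}^{-1/2}\bm\Sigma_{ij}\bm\Sigma_{jj}^{-1/2}\|_*$ over $(\mathbf f_i,\mathbf f_j)$ is the same as maximizing $\|\tilde{\bm\Sigma}_{ij}\|_*$ over whitened $(\tilde{\mathbf h}_i,\tilde{\mathbf h}_j)$. Because $\mathbf s_i\mapsto\mathbf u=\mathbf U_{ij}^\top\mathbf W_i\mathbf s_i$ is an invertible linear map, we may regard $\tilde{\mathbf h}_i$ as a centered function of $\mathbf u$, and likewise $\tilde{\mathbf h}_j$ as a function of $\mathbf v$.

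\textbf{Basis choice.} By \autoref{lemma:canonical_factorization}, $\mathbf u$ and $\mathbf v$ are standard Gaussian. We therefore take as orthonormal bases of $L^2$ the full tensor Hermite systems $\Psi_{\mathbf n}(\mathbf u)$, $\mathbf n\in\mathbb N_0^{d_{\mathcal S_i}}\setminus\{0\}$, and $\Psi_{\mathbf m}(\mathbf v)$. Combining the product structure of the canonical factorization with \autoref{lemma:coupling_density}, which applies to the first $r_{ij}$ coordinates, while the remaining coordinates are independent, we obtain
\[
\mathbb E[\Psi_{\mathbf n}(\mathbf u)\Psi_{\mathbf m}(\mathbf v)]=\delta_{\mathbf n\mathbf m}\,t_{\mathbf n},
\]
where $t_{\mathbf n}=\prod_k t_{ij,k}^{n_k}$. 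This quantity is nonzero only when $\mathbf n$ is supported on $[r_{ij}]$.

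\textbf{Reduction to linear algebra.} Write $\tilde{\mathbf h}_i=\mathbf C_i\Psi$ and $\tilde{\mathbf h}_j=\mathbf D_j\Psi$. By \autoref{eq:whitening_induced_canonicalization}, $\mathbf C_i\mathbf C_i^\top=\mathbf D_j\mathbf D_j^\top=\mathbf I$. The whitened cross-covariance is then $\tilde{\bm\Sigma}_{ij}=\mathbf C_i\mathbf T\mathbf D_j^\top$, where $\mathbf T$ is the diagonal (in the Hermite index) operator with entries $t_{\mathbf n}$. Von Neumann's trace inequality gives
\[
\|\mathbf C_i\mathbf T\mathbf D_j^\top\|_*\le\sum_{k\le d_{\mathcal Z}}\sigma_k(\mathbf T),
\]
and equality holds iff the row spaces of $\mathbf C_i$ and $\mathbf D_j$ span the top singular subspaces of $\mathbf T$, suitably aligned. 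The singular values of $\mathbf T$ are $|t_{\mathbf n}|$. Since each $t_{ij,k}\in(0,1)$, every $|\mathbf n|\ge2$ product is strictly smaller than $t_{ij,k}$ for the largest factor involved. This is the \emph{spectral separation} we need: the $r_{ij}$ first-order modes $\mathbf n=e_k$, $k\le r_{ij}$, have $t_{\mathbf n}=t_{ij,k}$, and each dominates every higher-order mode in which coordinate $k$ appears. Consequently, any maximizer whose row space contains the nonzero modes must contain every $e_k$ with $k\le r_{ij}$: if some $e_k$ were missing, swapping in $e_k$ for a mode that uses $u_k$ would strictly increase the objective. In the regime $d_{\mathcal Z}\to\infty$, all modes with nonzero $t_{\mathbf n}$ are captured anyway. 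Either way, the linear modes $\psi_1(u_k)=u_k$, $k\le r_{ij}$, lie in the span of the components of $\tilde{\mathbf h}_i^\star$.

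\textbf{Defining $\mathbf P_{r_{ij}}$.} The equality case of von Neumann's inequality identifies the row space of $\mathbf C_i$, up to a left orthogonal matrix (the whitening ambiguity $\mathbf B_i\mapsto\mathbf Q\mathbf B_i$), with the span of the selected Hermite modes. We rotate $\tilde{\mathbf h}_i^\star$ by an orthogonal $\mathbf Q$ so that its first $r_{ij}$ coordinates span the first-degree block $\{u_1,\dots,u_{r_{ij}}\}$. Then $\mathbf P_{r_{ij}}$ selects those coordinates, and we get
\[
\mathbf P_{r_{ij}}\tilde{\mathbf h}_i^\star=\mathbf O_i\,\mathbf u[1:r_{ij}]=\mathbf O_i\mathbf U_{ij}(:,1:r_{ij})^\top\mathbf W_i\mathbf s_i .
\]
The same argument for view $j$ gives the analogous identity with $\mathbf V_{ij}$. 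Taking the column space of $\mathbf W_i^{-1}\mathbf U_{ij}(:,1:r_{ij})$, equivalently its whitened version, recovers $\mathcal U_{i\mid j}$ up to orthogonal transformations.

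\textbf{Main obstacle.} The hardest step is making the selector rigorous when canonical correlations are tied or when higher-order modes coincide in value with linear ones, such as $t_{ij,1}^2=t_{ij,2}$. In that case the top singular subspace of $\mathbf T$ mixes degree-one and degree-two Hermite functions, and a rotation could blend them. My first attempt is to exploit the Hermite degree grading. Since $\mathbf T$ is diagonal in the Hermite basis, we can project the optimal row space onto the degree-one subspace, which is an invariant subspace of $\mathbf T$, and define $\mathbf P_{r_{ij}}$ as the orthogonal projection onto it. The orthogonal ambiguity $\mathbf O_i\in O(r_{ij})$ then absorbs within-degree rotations. If ties across degrees still cause trouble, I would invoke the spectral separation implicitly assumed in the paper, namely a strict gap between $t_{ij,r_{ij}}$ and $\max_{|\mathbf n|\ge2}|t_{\mathbf n}|$.
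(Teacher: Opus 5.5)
Your proposal is correct and follows essentially the same route as the paper's proof. The shared steps are reduction to the source domain, Hermite diagonalization of the cross-covariance in canonical coordinates, row-orthonormal coefficient matrices from whitening, the trace/nuclear-norm bound attained as $d_{\mathcal Z}\to\infty$, and an orthogonal rotation after which a coordinate selector extracts $u_1,\dots,u_{r_{ij}}$. Your use of the full tensor basis over all coordinates of $\mathbf u,\mathbf v$ is, if anything, slightly more careful than the paper's expansion on the correlated coordinates only.

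The tie issue you flag does not arise in this infinite-dimensional statement. Equality $\|\mathbf C_i\mathbf T\mathbf D_j^\top\|_*=\|\mathbf T\|_*$ already forces every mode with $t_{\mathbf n}>0$, in particular each $u_k$ with $k\le r_{ij}$, into the closed span of the components of $\tilde{\mathbf h}_i^\star$ (and likewise for view $j$). You therefore need neither the exchange argument nor the first-order dominance condition, which this theorem does not assume.
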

\begin{assumption}[First-Order Canonical Dominance]
\label{ass:first_order_dominance}
For any pair of views $1 \leq i < j \leq N$ with 
$\mathrm{rank}(\mathbf{A}_i \mathbf{A}_j^\top) \ge 2$, 
the strictly positive canonical correlations satisfy $t_{ij,r_{ij}} > t_{ij,1}^2$.
\end{assumption}
\begin{corollary}[Finite-Dimensional Two-view Subspace Identifiability]
\label{corollary:two_view_identifiability_finite}
Suppose the conditions of \autoref{thm:two_view_identifiability_infinite} 
hold, along with \autoref{ass:first_order_dominance}. For any finite representation 
dimension $d_{\mathcal{Z}} \ge r_{ij}$, the identifiability guarantees 
in \autoref{eq:two_view_identifiability_rankaware} hold exactly, where 
the rank-$r_{ij}$ coordinate selector takes the explicit form 
$\mathbf{P}_{r_{ij}} := [\mathbf{I}_{r_{ij}} \ \mathbf{0}] \in \mathbb{R}^{r_{ij} \times d_{\mathcal{Z}}}$.
\end{corollary}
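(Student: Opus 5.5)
Our plan is to reduce the finite-dimensional problem to a spectral statement about the cross-covariance operator of $(\mathbf s_i,\mathbf s_j)$, diagonalized by \autoref{lemma:coupling_density}. By reparameterization invariance (\autoref{lemma:rep-inv}) we work with $\tilde{\mathbf h}_i,\tilde{\mathbf h}_j$ in the canonical coordinates $\mathbf u,\mathbf v$ of \autoref{lemma:canonical_factorization}. We expand each whitened component in the tensor Hermite basis: the $\Psi_{\mathbf n}$ on the correlated block $\mathbf u[1:r_{ij}]$, tensored with an orthonormal basis of centered functions of the independent coordinates $\mathbf u[r_{ij}+1:]$. We do the same for $\mathbf v$. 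By the factorization and the Mehler expansion, the conditional expectation operator $\mathcal T:L^2_0(P_{\mathbf v})\to L^2_0(P_{\mathbf u})$ is diagonal in these bases. Its singular values are $t_{\mathbf n}=\prod_k t_{ij,k}^{n_k}$ for pure correlated-block modes with $|\mathbf n|\ge1$, and zero for every mode that involves the independent coordinates. In coefficient form (\autoref{eq:whitening_induced_canonicalization}), $\tilde{\bm\Sigma}_{ij}=\mathbf C_i\mathbf D\mathbf C_j^\top$, where $\mathbf D$ is the diagonal of the $t_{\mathbf n}$ and $\mathbf C_i\mathbf C_i^\top=\mathbf C_j\mathbf C_j^\top=\mathbf I_{d_{\mathcal Z}}$.

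Next we bound the objective. By von Neumann's trace inequality (equivalently Ky Fan), for row-orthonormal $\mathbf C_i,\mathbf C_j$ with $d_{\mathcal Z}$ rows we have $\|\mathbf C_i\mathbf D\mathbf C_j^\top\|_*\le\sum_{\ell=1}^{d_{\mathcal Z}}\sigma_\ell(\mathbf D)$. The bound is attained exactly when the row spaces of $\mathbf C_i$ and $\mathbf C_j$ span the top-$d_{\mathcal Z}$ singular pairs, aligned through a common orthogonal matrix. The key ordering step comes from \autoref{ass:first_order_dominance}. Every mode with $|\mathbf n|\ge2$ has $t_{\mathbf n}\le t_{ij,1}^2<t_{ij,r_{ij}}$, and every mode involving independent coordinates has value $0$. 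Hence the $r_{ij}$ first-order modes $\mathbf n=\mathbf e_k$, whose Hermite functions are $\psi_1(u_k)=u_k$, are strictly the largest singular values. When $r_{ij}=1$ the assumption is vacuous, but $t_{ij,1}^2<t_{ij,1}$ holds anyway. The gap between the $r_{ij}$-th and $(r_{ij}+1)$-th singular values is therefore positive.

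Then we characterize the maximizers. Since $d_{\mathcal Z}\ge r_{ij}$, the top-$d_{\mathcal Z}$ set always contains all $r_{ij}$ first-order modes; ties occur only among later modes. The equality case of von Neumann's inequality with a strict gap at index $r_{ij}$ forces the row space of $\mathbf C_i$ to contain the span of $\{u_1,\dots,u_{r_{ij}}\}$. After absorbing the within-view orthogonal ambiguity, which leaves $J$ invariant, we may rotate the representation so that its first $r_{ij}$ coordinates span exactly this linear block and the remaining rows are orthogonal to it. Thus $[\mathbf I_{r_{ij}}\ \mathbf 0]\tilde{\mathbf h}_i^\star=\mathbf O_i\,\mathbf u[1:r_{ij}]=\mathbf O_i\mathbf U_{ij}(:,1:r_{ij})^\top\mathbf W_i\mathbf s_i$, and the analogous identity holds for view $j$. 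Because the top block is attained with a common alignment, $\mathbf O_i$ and $\mathbf O_j$ are in fact linked, but the statement only needs their existence.

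The main obstacle is this equality-case analysis. We must rigorously show that any exact maximizer, not just a constructed one, must include the full first-order eigenspace. This needs a careful equality condition for von Neumann/Ky Fan in the infinite-dimensional diagonal setting, with possible ties among the higher modes beyond index $r_{ij}$. We would first try the variational argument: if some first-order direction had a nonzero component outside $\mathrm{row}(\mathbf C_i)$, swapping it in for a row of value at most $t_{ij,1}^2$ strictly increases the Ky Fan sum. We would justify this swap using compactness of $\mathcal T$ (its singular values $t_{\mathbf n}\to0$) and the strict gap.
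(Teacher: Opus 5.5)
Your proposal is correct and follows essentially the same route as the paper. Both arguments write the cross-covariance in Hermite coefficient form $\mathbf C_i\mathbf D\mathbf C_j^\top$, apply the Ky Fan bound $\sum_{\ell\le d_{\mathcal Z}} t_{(\ell)}$, use first-order dominance to place the $r_{ij}$ linear modes strictly on top, and rotate to a canonical representative via post-orthogonal closure so that $[\mathbf I_{r_{ij}}\ \mathbf 0]$ selects them. If anything you are more careful than the paper: it expands only in $\mathbf u_{1:r}$, ignoring the independent coordinates that you include as zero-correlation modes, and its Step 5 simply asserts the equality case that you correctly flag as needing a strictness argument.
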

\begin{remark}[Spectral separation of higher-order Hermite modes]
The Mehler--Hermite expansion (\autoref{lemma:coupling_density})  
decomposes the cross-view coupling into linear and higher-order 
(nonlinear) polynomial modes. \autoref{ass:first_order_dominance} 
mathematically ensures that the weakest linear correlation 
($t_{ij, r_{ij}}$) strictly exceeds the strongest possible 
higher-order correlation, which is bounded by $t_{ij, 1}^2$. 
This spectral gap guarantees that in the finite-dimensional 
regime (\autoref{corollary:two_view_identifiability_finite}), the 
generalized CCA objective strictly prioritizes the 
first-order linear Hermite modes, securely isolating them in the leading $r_{ij}$ 
coordinates. 
In a canonical representative of the whitened solution, the remaining 
coordinates, i.e., those with indices larger than $r_{ij}$, span 
higher-order Hermite modes orthogonal to the recovered first-order 
linear subspace \citep{han2026provable}.
\end{remark}

\begin{theorem}[Infinite-Dimensional Multi-View Subspace Identifiability]
\label{thm:multi_view_identifiability_infinite}
Suppose the conditions of \autoref{thm:two_view_identifiability_infinite} 
hold for $N \ge 3$ view ensembles and let $\{\tilde{\mathbf{f}}_i^\star\}_{i=1}^N$
be any whitened population maximizer of the generalized CCA objective 
in \autoref{eq:mvcca_obj}.
For each view $i$, let $r_i := \dim(\mathcal{U}_i^{\mathrm{mv}})$ and let $\mathbf{U}_i^{\mathrm{mv}} \in \mathbb{R}^{d_{\mathcal{S}_i} \times r_i}$ be an orthonormal basis spanning the multi-view jointly correlated subspace $\mathcal{U}_i^{\mathrm{mv}}$. 

Assuming an infinite feature dimension, i.e., $d_{\mathcal Z}\to\infty$,
there exist matrices $\mathbf L_i\in\mathbb R^{r_i\times d_{\mathcal Z}}$
with orthonormal rows and orthogonal matrices $\mathbf O_i\in O(r_i)$ such that
\begin{equation}
\label{eq:multi_view_identifiability_infinite}
    \mathbf L_i \tilde{\mathbf h}_i^\star(\mathbf s_i)
    =
    \mathbf O_i(\mathbf U_i^{\mathrm{mv}})^\top
    \mathbf W_i\mathbf s_i,
    \quad \forall i\in[N].
\end{equation}
Here, $\mathbf L_i$ extracts the jointly correlated component from the 
whitened representation up to an allowable within-view orthogonal 
post-transformation. If a canonical representative of the CCA solution 
is chosen, $\mathbf L_i$ can be taken as a coordinate selector.
\end{theorem}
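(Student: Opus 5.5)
The plan is to reduce the multi-view statement to the pairwise one (\autoref{thm:two_view_identifiability_infinite}) and then intersect the pairwise conclusions. In the whitened infinite-dimensional class, the rows of each whitened $\tilde{\mathbf h}_i$ correspond to an orthonormal system $\mathbf C_i$ of coefficient vectors, as in \autoref{eq:whitening_induced_canonicalization}.

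\emph{Step 1: the optimum decouples into pairwise optima.} Since $d_{\mathcal Z}\to\infty$, a single whitened map can contain, as orthonormal coordinates, every centered normalized Hermite function on $\mathcal S_i$. I would show that $J$ attains
\[
\sum_{i<j}\sup_{\tilde{\mathbf h}_i,\tilde{\mathbf h}_j}\|\tilde{\bm\Sigma}_{ij}\|_*.
\]
By \autoref{lemma:coupling_density}, the cross-covariance operator between $L^2_0(P_{\mathbf s_i})$ and $L^2_0(P_{\mathbf s_j})$ is diagonal in the pair-$(i,j)$ canonical Hermite bases, with singular values $t_{\mathbf n}$. The supremum of the pairwise nuclear norm over whitened maps equals $\sum_{\mathbf n\neq 0}t_{\mathbf n}$, the trace norm of this operator, which is finite because $\prod_k(1-t_{ij,k})^{-1}<\infty$. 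A single map whose row space is all of $L^2_0$ (an orthonormal basis enumeration) achieves every pair's supremum simultaneously. Hence any maximizer of $J$ must be a maximizer of each pairwise term separately. This is the crux: the upper bound $\sum_{i<j}\sup$ is attained, so a global maximizer cannot trade one pair against another.

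\emph{Step 2: pairwise identification.} Fix $i$ and each $j\ne i$. Apply \autoref{thm:two_view_identifiability_infinite} to the pairwise-optimal $(\tilde{\mathbf h}_i^\star,\tilde{\mathbf h}_j^\star)$. It gives a row-orthonormal selector $\mathbf P^{(j)}$ with
\[
\mathbf P^{(j)}\tilde{\mathbf h}_i^\star=\mathbf O_{ij}\mathbf U_{ij}(:,1:r_{ij})^\top\mathbf W_i\mathbf s_i .
\]
Equivalently, the linear functions $\mathbf s_i\mapsto \mathbf a^\top\mathbf W_i\mathbf s_i$ with $\mathbf a\in\mathcal U_{i\mid j}$ all lie in the $L^2$ span $\mathcal R_i$ of the coordinates of $\tilde{\mathbf h}_i^\star$. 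The fact I will use is that a unit-norm linear Hermite function lying in $\mathcal R_i$ equals $\mathbf e^\top\tilde{\mathbf h}_i^\star$ for a unique unit vector $\mathbf e$, because the coefficient matrix has orthonormal rows.

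\emph{Step 3: intersection and extraction.} Take the orthonormal basis $\mathbf U_i^{\mathrm{mv}}$ of $\mathcal U_i^{\mathrm{mv}}\subseteq\mathcal U_{i\mid j}$ for every $j$. Each column $\mathbf u$ gives a function $\mathbf u^\top\mathbf W_i\mathbf s_i\in\mathcal R_i$ of unit $L^2$ norm, since $\mathrm{Cov}(\mathbf W_i\mathbf s_i)=\mathbf I$. These $r_i$ functions are mutually orthonormal in $L^2$. By the isometry between $\mathcal R_i$ and $\mathbb R^{d_{\mathcal Z}}$ induced by $\mathbf C_i$, there are orthonormal vectors $\mathbf e_1,\dots,\mathbf e_{r_i}$ with $\mathbf e_k^\top\tilde{\mathbf h}_i^\star=\mathbf u_k^\top\mathbf W_i\mathbf s_i$. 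Stacking them gives $\mathbf L_i$ with orthonormal rows and \autoref{eq:multi_view_identifiability_infinite}, with $\mathbf O_i=\mathbf I$. A different choice of basis of $\mathcal U_i^{\mathrm{mv}}$ produces a general $\mathbf O_i\in O(r_i)$. If the solution is rotated into a canonical representative whose leading coordinates are these functions, $\mathbf L_i$ becomes a coordinate selector.

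\emph{Main obstacle.} I expect Step 1 to be the hardest. Attaining the sum of pairwise suprema requires care with the infinite-dimensional nuclear norm, and the pairwise Hermite bases differ across pairs. The resolution I would try first is to note that the row space of an optimal map may be taken to be all of $L^2_0(P_{\mathbf s_i})$, which is basis independent. It then remains to check that the two-view theorem's conclusion depends only on the pair being optimal, not on how the maximizer was obtained.
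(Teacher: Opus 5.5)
Your proposal is correct and follows the paper's proof essentially step for step. The paper also shows that the bound $J\le\sum_{i<j}M_{ij}$ is attained, so each pair of a global maximizer is pairwise optimal; it then applies \autoref{thm:two_view_identifiability_infinite} to each pair and uses only $\mathcal U_i^{\mathrm{mv}}\subseteq\mathcal U_{i\mid j_0}$ to build $\mathbf L_i=\mathbf B_i^\top\mathbf O_{i\mid j_0}^\top\mathbf P^{(i\mid j_0)}_{r_{ij_0}}$, which is the extraction you obtain through the whitening isometry. Your single witness, a complete orthonormal basis of $L^2_0(P_{\mathbf s_i})$, attains all pairwise suprema at once even though the pair-specific Hermite bases differ, so it justifies the attainability step more carefully than the paper does.
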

\begin{remark}[Subspace dimensionality and partial observability]
Because the multi-view jointly correlated subspace 
$\mathcal{U}_i^{\mathrm{mv}}$ acts as a strict intersection filter, 
it isolates only those latent signals that are mutually visible 
across the entire $N$-view ensemble. Consequently, its dimension 
$r_i$ is strictly bounded by, and typically lower than, the pairwise 
ranks $r_{ij}$. Furthermore, for latent factors shared only among 
a strict subset of views, the generalized CCA objective lacks 
sufficient global constraints to enforce alignment, causing the 
corresponding unshared representation dimensions to inherently drift 
\citep{yao2024multiview}. The finite-dimensional counterpart of 
\autoref{eq:multi_view_identifiability_infinite} follows analogously to \autoref{corollary:two_view_identifiability_finite}.
\end{remark}
\subsection{Finite-Sample Statistical Consistency}
We now quantify the statistical error incurred when the population
second-order moments of fixed population-level CCA representations are
replaced by empirical estimates. Throughout this subsection, the encoders
$\{\mathbf f_i\}_{i=1}^N$ are treated as fixed and satisfy the population
identifiability conditions established above. Thus, the rank-$r_{ij}$
singular subspace of the population normalized cross-covariance coincides
with the pairwise correlated source subspace $\mathcal U_{i\mid j}$, up to
the orthogonal ambiguity characterized in \autoref{corollary:two_view_identifiability_finite}.
Since our identifiability results are driven by the singular/eigenspaces of
normalized cross-covariances, the analysis reduces to (i) concentration of
empirical covariances in operator norm and (ii) stability of 
singular-/eigenspaces under perturbations.
Our analysis below controls statistical estimation error only; optimization
and uniform generalization error for data-trained neural encoders are outside
the scope of our analysis.

\paragraph{Empirical Normalized Cross-Covariances.}
Given $n$ i.i.d.\ samples 
$\{(\mathbf{x}_1^{(t)},\dots,\mathbf{x}_N^{(t)})\}_{t=1}^n$ 
and fixed encoders 
$\{\mathbf{f}_i\}_{i=1}^N$, let $\mathbf{z}_i^{(t)} := \mathbf{f}_i(\mathbf{x}_i^{(t)})$. 
With $\bar{\mathbf{z}}_i$ and $\bar{\mathbf{z}}_j$ denoting 
the empirical means of the representations for views $i$ and $j$, 
define the sample cross-covariance matrices as
 $\widehat{\bm\Sigma}_{ij} := \frac{1}{n}\sum_{t=1}^n(\mathbf{z}_i^{(t)}-\bar{\mathbf{z}}_i)(\mathbf{z}_j^{(t)}-\bar{\mathbf{z}}_j)^\top$.
The empirical normalized cross-covariance is
\begin{equation}
    \label{eq:emp_Rij}
\widehat{\mathbf{R}}_{ij} := \widehat{\bm\Sigma}_{ii}^{-1/2} \widehat{\bm\Sigma}_{ij} \widehat{\bm\Sigma}_{jj}^{-1/2}, \quad 1 \le i < j \le N,
\end{equation}
with population counterpart $\mathbf{R}_{ij} := \bm\Sigma_{ii}^{-1/2}\bm\Sigma_{ij}\bm\Sigma_{jj}^{-1/2}$. Let $\widehat{\mathcal{U}}_{i\mid j}$ denote the rank-$r_{ij}$ left singular subspace of $\widehat{\mathbf{R}}_{ij}$. By \autoref{corollary:two_view_identifiability_finite}, this subspace estimates the pairwise correlated subspace $\mathcal{U}_{i\mid j}$ up to orthogonal ambiguity.

Additionally, we impose a mild tail condition on the \emph{population-whitened} representations
$\tilde{\mathbf z}_i := \bm\Sigma_{ii}^{-1/2}(\mathbf z_i-\mathbb E[\mathbf z_i])$.

\begin{assumption}[Sub-Gaussian whitened representations]
\label{ass:subgaussian_rep}
There exists $\kappa>0$ such that for all $i\in[N]$,
$\sup_{\|\mathbf u\|_2=1}\|\mathbf u^\top \tilde{\mathbf z}_i\|_{\psi_2}\le \kappa$.
\end{assumption}

\autoref{ass:subgaussian_rep} yields operator-norm concentration for
$\widehat{\bm\Sigma}_{ij}$ and controls the error of the empirical whitening factors
$\widehat{\bm\Sigma}_{ii}^{-1/2}$, which together imply concentration of the normalized
cross-covariances $\widehat{\mathbf R}_{ij}$ in \autoref{eq:emp_Rij}. To translate this
moment error into a \emph{subspace} error, we require a spectral separation of the target
singular subspaces: for each pair $(i,j)$, define
$\Delta_{ij}:=\sigma_{r_{ij}}(\mathbf R_{ij})-\sigma_{r_{ij}+1}(\mathbf R_{ij})>0$.
Under \autoref{ass:first_order_dominance}, whenever $r_{ij}\ge 2$ one has the explicit lower
bound $\Delta_{ij}\ge t_{ij,r_{ij}}-t_{ij,1}^2$, reflecting the strict dominance of linear
modes over higher-order Mehler--Hermite components.
We then provide finite-sample guarantees for subspace recovery.

\begin{theorem}[Finite-sample subspace recovery]
\label{thm:finite_sample_pairwise}
Suppose \autoref{ass:subgaussian_rep} holds, $d_{\mathcal Z}<\infty$, 
and $n \ge C_0\kappa^4(d_{\mathcal Z}+\log(N/\delta))$ so that the 
empirical within-view covariance matrices are nonsingular with high 
probability.
Then for any $\delta\in(0,1)$, with probability at least $1-\delta$,
simultaneously for all $1\le i<j\le N$,
\begin{equation}
\label{eq:Rij_conc}
\big\|\widehat{\mathbf R}_{ij}-\mathbf R_{ij}\big\|_2
\;\le\;
C\,\kappa^2\sqrt{\frac{d_{\mathcal Z}+\log(N/\delta)}{n}},
\end{equation}
for a universal constant $C>0$. Moreover, on the event
$\|\widehat{\mathbf R}_{ij}-\mathbf R_{ij}\|_2\le \Delta_{ij}/2$,
\begin{equation}
\label{eq:pairwise_sintheta}
\big\|\sin\Theta(\widehat{\mathcal U}_{i\mid j},\mathcal U_{i\mid j})\big\|_2
\;\le\;
\frac{2\|\widehat{\mathbf R}_{ij}-\mathbf R_{ij}\|_2}{\Delta_{ij}},
\end{equation}
and the same bound holds for the right singular subspace
$\widehat{\mathcal U}_{j\mid i}$.
\end{theorem}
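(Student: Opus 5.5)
We reduce everything to the population-whitened coordinates $\tilde{\mathbf z}_i=\bm\Sigma_{ii}^{-1/2}(\mathbf z_i-\mathbb E[\mathbf z_i])$. These satisfy $\mathrm{Cov}(\tilde{\mathbf z}_i)=\mathbf I$ and $\mathrm{Cov}(\tilde{\mathbf z}_i,\tilde{\mathbf z}_j)=\mathbf R_{ij}$. The empirical normalized cross-covariance is invariant under invertible affine reparameterization of each view up to an orthogonal factor. Concretely, write $\widehat{\mathbf S}_{ij}$ for the sample cross-covariance of the $\tilde{\mathbf z}$'s, so $\widehat{\mathbf S}_{ii}=\bm\Sigma_{ii}^{-1/2}\widehat{\bm\Sigma}_{ii}\bm\Sigma_{ii}^{-1/2}$. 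Then
\[
\widehat{\mathbf R}_{ij}=\mathbf Q_i\,\widehat{\mathbf S}_{ii}^{-1/2}\widehat{\mathbf S}_{ij}\widehat{\mathbf S}_{jj}^{-1/2}\mathbf Q_j^\top ,
\]
where $\mathbf Q_i:=\widehat{\bm\Sigma}_{ii}^{-1/2}\bm\Sigma_{ii}^{1/2}\widehat{\mathbf S}_{ii}^{1/2}$ is orthogonal. The symmetric-root choice makes $\mathbf Q_i$ close to, but not equal to, $\mathbf I$. So either we track $\|\mathbf Q_i-\mathbf I\|_2\lesssim\|\widehat{\mathbf S}_{ii}-\mathbf I\|_2$ as an extra error term, or we compare singular subspaces modulo the whitening ambiguity, as the theorem's orthogonal-ambiguity convention permits. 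I would do the former to get a clean operator-norm bound.

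\textbf{Step 1 (concentration).} Stack $\mathbf y=(\tilde{\mathbf z}_i,\tilde{\mathbf z}_j)\in\mathbb R^{2d_{\mathcal Z}}$. Each one-dimensional marginal of $\mathbf y$ is sub-Gaussian with parameter $\lesssim\kappa$ by \autoref{ass:subgaussian_rep} and the triangle inequality. The standard covariance concentration bound for sub-Gaussian vectors (net argument plus Bernstein) gives $\|\widehat{\mathrm{Cov}}(\mathbf y)-\mathrm{Cov}(\mathbf y)\|_2\le C\kappa^2\sqrt{(d_{\mathcal Z}+\log(1/\delta'))/n}$ when $n\gtrsim d_{\mathcal Z}+\log(1/\delta')$. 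The empirical-mean correction is a rank-one term of the same or lower order. Taking $\delta'=\delta/N^2$ and a union bound over the at most $N^2/2$ pairs yields $\log(N^2/\delta)\le 2\log(N/\delta)$. Hence simultaneously $\varepsilon:=\max(\|\widehat{\mathbf S}_{ii}-\mathbf I\|_2,\|\widehat{\mathbf S}_{ij}-\mathbf R_{ij}\|_2)\le C\kappa^2\sqrt{(d_{\mathcal Z}+\log(N/\delta))/n}$. The sample-size condition with $C_0$ forces $\varepsilon\le 1/2$, which gives nonsingularity.

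\textbf{Step 2 (whitening perturbation).} For $\|\mathbf E\|_2\le 1/2$, the functional calculus gives $\|(\mathbf I+\mathbf E)^{-1/2}-\mathbf I\|_2\le c\|\mathbf E\|_2$ and $\|(\mathbf I+\mathbf E)^{1/2}-\mathbf I\|_2\le c\|\mathbf E\|_2$, hence also $\|\mathbf Q_i-\mathbf I\|_2\le c'\varepsilon$. Note $\|\mathbf R_{ij}\|_2\le1$ and $\|\widehat{\mathbf S}_{ij}\|_2\le 1+\varepsilon$. Expand $\widehat{\mathbf R}_{ij}-\mathbf R_{ij}$ telescopically, changing one factor at a time. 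Each difference term is $O(\varepsilon)$ times bounded factors, giving \autoref{eq:Rij_conc} after absorbing constants. This is the step I expect to be most delicate. One must ensure the whitening errors enter linearly with $\kappa^2$ and not $\kappa^4$, and handle the non-identity orthogonal factors $\mathbf Q_i$ cleanly. Since $\kappa\gtrsim1$ (unit variances force $\kappa$ bounded below), lower-order terms can be absorbed.

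\textbf{Step 3 (subspace perturbation).} Apply Wedin's $\sin\Theta$ theorem to $\mathbf R_{ij}$ and $\widehat{\mathbf R}_{ij}$ with the rank-$r_{ij}$ split. By Weyl's inequality, on the event $\|\widehat{\mathbf R}_{ij}-\mathbf R_{ij}\|_2\le\Delta_{ij}/2$ the empirical gap satisfies $\sigma_{r_{ij}}(\widehat{\mathbf R}_{ij})-\sigma_{r_{ij}+1}(\mathbf R_{ij})\ge\Delta_{ij}/2$. Wedin then gives $\max\{\|\sin\Theta\|_2\text{ left},\|\sin\Theta\|_2\text{ right}\}\le\|\widehat{\mathbf R}_{ij}-\mathbf R_{ij}\|_2/(\Delta_{ij}/2)$, which is \autoref{eq:pairwise_sintheta} for both $\widehat{\mathcal U}_{i\mid j}$ and $\widehat{\mathcal U}_{j\mid i}$. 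Finally, \autoref{corollary:two_view_identifiability_finite} identifies the population top-$r_{ij}$ singular subspace with $\mathcal U_{i\mid j}$ up to orthogonal ambiguity, which completes the interpretation.
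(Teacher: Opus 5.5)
Apart from how you handle $\mathbf Q_i$, your outline is the paper's proof. The shared steps are: reduction to population-whitened coordinates, concentration of the stacked $2d_{\mathcal Z}$-dimensional covariance with a lower-order mean correction, Weyl control of the empirical whiteners, a telescoping bound, a union bound with $\delta/N^2$, and finally Weyl plus Wedin with gap $\sigma_{r_{ij}}(\widehat{\mathbf R}_{ij})-\sigma_{r_{ij}+1}(\mathbf R_{ij})\ge\Delta_{ij}/2$. You also correctly notice something the paper passes over: with symmetric roots, the reduction to whitened coordinates is exact only up to the orthogonal factors $\mathbf Q_i,\mathbf Q_j$. The paper simply declares $\mathrm{Cov}(\mathbf z_i)=\mathbf I$ ``for notational simplicity''. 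That is harmless exactly when $\bm\Sigma_{ii}\propto\mathbf I$ (e.g.\ already-whitened encoders), since then $\mathbf Q_i=\mathbf I$.

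\textbf{The gap is how you dispose of $\mathbf Q_i$.}

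\emph{Why the bound fails.} The bound $\|\mathbf Q_i-\mathbf I\|_2\le c'\varepsilon$ does not follow from the functional-calculus estimates for $(\mathbf I+\mathbf E)^{\pm1/2}$. The reason is that $\mathbf Q_i$ contains $\widehat{\bm\Sigma}_{ii}^{-1/2}=(\bm\Sigma_{ii}^{1/2}\widehat{\mathbf S}_{ii}\bm\Sigma_{ii}^{1/2})^{-1/2}$, which is not a function of $\widehat{\mathbf S}_{ii}$ when the two matrices do not commute. Concretely, $\mathbf Q_i$ is the orthogonal polar factor of $\bm\Sigma_{ii}^{1/2}\widehat{\mathbf S}_{ii}^{1/2}$. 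Write $\widehat{\mathbf S}_{ii}^{1/2}=\mathbf I+\mathbf G$ and linearize. This gives $\mathbf Q_i=\mathbf I+\mathbf K+O(\|\mathbf G\|_2^2)$, where $K_{ab}=\frac{h_a-h_b}{h_a+h_b}G_{ab}$ in an eigenbasis of $\bm\Sigma_{ii}^{1/2}$ with eigenvalues $h_a$.
\begin{itemize}
\item Entrywise $|K_{ab}|\le|G_{ab}|$, so Frobenius control is free, but passing to operator norm that way costs a factor $\sqrt{d_{\mathcal Z}}$.
\item The operator-norm Schur-multiplier norm of $\big(\frac{h_a-h_b}{h_a+h_b}\big)$ is not uniformly bounded. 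When the $h_a$ spread over many scales, it approaches a triangular-truncation-type sign pattern and grows like $\log d_{\mathcal Z}$.
\end{itemize}
So from $\|\widehat{\mathbf S}_{ii}-\mathbf I\|_2\le\varepsilon$ alone, no universal $c'$ exists. The first-order effect of $\mathbf Q_i,\mathbf Q_j$ on $\widehat{\mathbf R}_{ij}$ is $\mathbf K_i\mathbf R_{ij}-\mathbf R_{ij}\mathbf K_j$. Hence your route does not establish \eqref{eq:Rij_conc} with a universal $C$ for anisotropic $\bm\Sigma_{ii}$.

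\emph{Your fallback does not help.} Comparing subspaces ``modulo the whitening ambiguity'' does not rescue the argument. Modulo an arbitrary orthogonal map, any two $r_{ij}$-dimensional subspaces coincide, so the $\sin\Theta$ bound would become vacuous.

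\emph{Workable repairs.}
\begin{enumerate}
\item[(i)] Assume population-whitened fixed encoders, as the paper implicitly does. Then $\mathbf Q_i=\mathbf I$ and your argument reduces to the paper's.
\item[(ii)] State the result in a reparametrization-invariant form: the span of the empirical canonical variates $\widehat{\mathbf U}^\top\widehat{\bm\Sigma}_{ii}^{-1/2}\mathbf z_i$, with $\widehat{\mathbf U}$ spanning $\widehat{\mathcal U}_{i\mid j}$, measured in $L^2$. In $\tilde{\mathbf z}_i$-coordinates their coefficient space is $\widehat{\mathbf S}_{ii}^{-1/2}$ applied to the top-$r_{ij}$ left singular subspace of $\widehat{\mathbf S}_{ii}^{-1/2}\widehat{\mathbf S}_{ij}\widehat{\mathbf S}_{jj}^{-1/2}$. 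So $\mathbf Q_i$ cancels exactly and only $\|\widehat{\mathbf S}_{ii}^{-1/2}-\mathbf I\|_2=O(\varepsilon)$ enters.
\item[(iii)] Accept a constant that depends on the spectrum of $\bm\Sigma_{ii}$.
\end{enumerate}
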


\begin{remark}[Scaling with $n$]
Combining \autoref{eq:Rij_conc}--\autoref{eq:pairwise_sintheta} 
gives the parametric
$O_{\mathbb P}(n^{-1/2})$ rate. In particular,
$\|\sin\Theta\|_2\le\varepsilon$ is achieved once
$n \gtrsim \kappa^4(d_{\mathcal Z}+\log(N/\delta))/(\Delta_{ij}^2\varepsilon^2)$.
\end{remark}

To recover the multi-view jointly correlated subspace
$\mathcal U_i^{\mathrm{mv}}$ from noisy pairwise
estimates, we use an averaged-projector intersection filter. Let
$\mathbf P_{i\mid j}$ and $\widehat{\mathbf P}_{i\mid j}$ denote the orthogonal projectors
onto $\mathcal U_{i\mid j}$ and $\widehat{\mathcal U}_{i\mid j}$, and define
\[
\mathbf S_i:=\frac1{N-1}\sum_{j\ne i}\mathbf P_{i\mid j},
\qquad
\widehat{\mathbf S}_i:=\frac1{N-1}\sum_{j\ne i}\widehat{\mathbf P}_{i\mid j}.
\]
By the spectral characterization of averaged projectors, 
$\mathcal U_i^{\mathrm{mv}}$ is the eigenspace of $\mathbf S_i$ 
associated with eigenvalue $1$. We assume an intersection eigengap
\[
\Gamma_i:=1-\lambda_{r_i+1}(\mathbf S_i)>0,
\]
and estimate $\mathcal U_i^{\mathrm{mv}}$ by the top-$r_i$ eigenspace of
$\widehat{\mathbf S}_i$.
\begin{corollary}[Consistency of the intersection filter]
\label{cor:finite_sample_multiview}
Under the conditions of \autoref{thm:finite_sample_pairwise}, with probability at least
$1-\delta$, for every $i\in[N]$,
\begin{align*}
\|\sin\Theta(\widehat{\mathcal{U}}_i^{\mathrm{mv}}, \mathcal{U}_i^{\mathrm{mv}})\|_2 
&\le \frac{\|\widehat{\mathbf{S}}_i - \mathbf{S}_i\|_2}{\Gamma_i} \\
&\le \frac{2}{\Gamma_i} \max_{j\ne i} \|\sin\Theta(\widehat{\mathcal{U}}_{i\mid j}, \mathcal{U}_{i\mid j})\|_2.
\end{align*}
Consequently, $\widehat{\mathcal U}_i^{\mathrm{mv}}$ is consistent at rate
$O_{\mathbb P}(n^{-1/2})$, governed by $\{\Delta_{ij}\}$ and $\Gamma_i$.
\end{corollary}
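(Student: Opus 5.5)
The plan is to prove the two inequalities in turn. The first comes from a Davis--Kahan argument applied to the symmetric matrices $\mathbf S_i$ and $\widehat{\mathbf S}_i$. The second comes from bounding the perturbation of the averaged projector by the pairwise projector errors, which \autoref{thm:finite_sample_pairwise} controls. Throughout we work on the single high-probability event of \autoref{thm:finite_sample_pairwise}, on which \autoref{eq:Rij_conc} holds simultaneously for all pairs. The conclusion is then uniform over $i\in[N]$ with probability at least $1-\delta$, and no further union bound is needed.

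\textbf{Step 1: spectral structure of $\mathbf S_i$.} Each $\mathbf P_{i\mid j}$ satisfies $0\preceq\mathbf P_{i\mid j}\preceq\mathbf I$, so $\lambda_{\max}(\mathbf S_i)\le 1$. Moreover $\mathbf u^\top\mathbf S_i\mathbf u=\|\mathbf u\|^2$ holds if and only if $\mathbf P_{i\mid j}\mathbf u=\mathbf u$ for every $j\neq i$. Hence the eigenvalue-$1$ eigenspace of $\mathbf S_i$ is exactly $\mathcal U_i^{\mathrm{mv}}$, and it has dimension $r_i$. By the definition of $\Gamma_i$, the top $r_i$ eigenvalues of $\mathbf S_i$ equal $1$ and the remaining ones are at most $1-\Gamma_i$.

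\textbf{Step 2: Davis--Kahan.} We apply the $\sin\Theta$ theorem in the form that only needs the population gap, namely the Yu--Wang--Samworth variant or the original Davis--Kahan theorem with a separation between population and perturbed spectra. This gives $\|\sin\Theta(\widehat{\mathcal U}_i^{\mathrm{mv}},\mathcal U_i^{\mathrm{mv}})\|_2\le\|\widehat{\mathbf S}_i-\mathbf S_i\|_2/\Gamma_i$.

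This step is the main obstacle. Some forms of the theorem carry a factor of $2$, or require $\|\widehat{\mathbf S}_i-\mathbf S_i\|_2<\Gamma_i$ so that the top-$r_i$ eigenspace of $\widehat{\mathbf S}_i$ stays separated. The first approach is the original Davis--Kahan argument. Write $\widehat{\mathbf P}$ for the projector onto $\widehat{\mathcal U}_i^{\mathrm{mv}}$ and $\mathbf P^\perp$ for the projector onto $(\mathcal U_i^{\mathrm{mv}})^\perp$, and use the Sylvester equation
\[
\mathbf P^\perp\mathbf S_i\mathbf P^\perp\,\mathbf P^\perp\widehat{\mathbf P}-\mathbf P^\perp\widehat{\mathbf P}\,\widehat{\mathbf P}\widehat{\mathbf S}_i\widehat{\mathbf P}=\mathbf P^\perp(\mathbf S_i-\widehat{\mathbf S}_i)\widehat{\mathbf P}.
\]
Here the spectrum of $\mathbf S_i$ on the complement lies in $[0,1-\Gamma_i]$. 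The spectrum of $\widehat{\mathbf S}_i$ on $\widehat{\mathcal U}_i^{\mathrm{mv}}$ lies above $1-\|\widehat{\mathbf S}_i-\mathbf S_i\|_2$, by Weyl's inequality. This yields a bound with denominator $\Gamma_i-\|\widehat{\mathbf S}_i-\mathbf S_i\|_2$. If the stated constant $1/\Gamma_i$ is not available directly, we handle the two regimes separately. When the perturbation is small relative to $\Gamma_i$, the bound follows up to the stated constant. When it is large, the bound is trivial because $\|\sin\Theta\|_2\le1$. We would also state explicitly the regime on which the bound is asserted.

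\textbf{Step 3: projector perturbation.} The triangle inequality gives
\[
\|\widehat{\mathbf S}_i-\mathbf S_i\|_2\le\frac1{N-1}\sum_{j\neq i}\|\widehat{\mathbf P}_{i\mid j}-\mathbf P_{i\mid j}\|_2\le\max_{j\neq i}\|\widehat{\mathbf P}_{i\mid j}-\mathbf P_{i\mid j}\|_2 .
\]
For two subspaces of equal dimension, $\|\widehat{\mathbf P}-\mathbf P\|_2=\|\sin\Theta\|_2$. This holds because $\widehat{\mathcal U}_{i\mid j}$ is defined as the rank-$r_{ij}$ singular subspace, so its dimension matches. The bound $\|\widehat{\mathbf P}-\mathbf P\|_2\le2\|\sin\Theta\|_2$ also holds and covers any slack, which accounts for the factor $2$ in the statement.

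\textbf{Step 4: rate.} Combining with \autoref{eq:pairwise_sintheta} and \autoref{eq:Rij_conc} gives
\[
\|\sin\Theta(\widehat{\mathcal U}_i^{\mathrm{mv}},\mathcal U_i^{\mathrm{mv}})\|_2\lesssim\frac{\kappa^2}{\Gamma_i\min_j\Delta_{ij}}\sqrt{\frac{d_{\mathcal Z}+\log(N/\delta)}{n}},
\]
which is the claimed $O_{\mathbb P}(n^{-1/2})$ rate.
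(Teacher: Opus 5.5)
Your plan follows the paper's proof step for step: the $1$-eigenspace characterization of $\mathbf S_i$, Davis--Kahan with the gap $\Gamma_i$, the triangle inequality over $j$, then \autoref{thm:finite_sample_pairwise}. Your suspicion about Step~2 is justified. The first inequality with constant $1/\Gamma_i$ is false, so neither your two-regime patch nor the paper's bare appeal to Davis--Kahan can establish it.

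Here is a counterexample. Take $N=3$, $d_{\mathcal S_i}=2$, and $\mathcal U_{i\mid j}=\mathcal U_{i\mid k}=\mathrm{span}(\mathbf e_1)$, with $\mathbf e_1$ the first standard basis vector. Then $\mathbf S_i=\mathbf e_1\mathbf e_1^\top$, $r_i=1$ and $\Gamma_i=1$. Let $\widehat{\mathcal U}_{i\mid k}=\mathcal U_{i\mid k}$, and let $\widehat{\mathcal U}_{i\mid j}$ be spanned by $(\cos 2\theta,\sin 2\theta)^\top$ with $0<\theta<\pi/4$. The top eigenvector of $\widehat{\mathbf S}_i$ is the bisector $(\cos\theta,\sin\theta)^\top$, so $\|\sin\Theta(\widehat{\mathcal U}_i^{\mathrm{mv}},\mathcal U_i^{\mathrm{mv}})\|_2=\sin\theta$. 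On the other hand, $\|\widehat{\mathbf S}_i-\mathbf S_i\|_2/\Gamma_i=\tfrac12\sin 2\theta=\sin\theta\cos\theta<\sin\theta$, and this holds for arbitrarily small $\theta$.

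In your argument the failure shows up in two places. The ``trivial'' regime only begins at $\|\widehat{\mathbf S}_i-\mathbf S_i\|_2\ge\Gamma_i$. In the small regime, your Sylvester bound $\|\widehat{\mathbf S}_i-\mathbf S_i\|_2/(\Gamma_i-\|\widehat{\mathbf S}_i-\mathbf S_i\|_2)$ is strictly weaker than the claim. What the argument actually delivers, splitting at $\Gamma_i/2$, is $\|\sin\Theta(\widehat{\mathcal U}_i^{\mathrm{mv}},\mathcal U_i^{\mathrm{mv}})\|_2\le 2\|\widehat{\mathbf S}_i-\mathbf S_i\|_2/\Gamma_i$.

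The repair is already latent in your Step~3. Put the factor $2$ on Davis--Kahan, and replace the factor-$2$ projector bound by the exact equal-dimension identity $\|\widehat{\mathbf P}_{i\mid j}-\mathbf P_{i\mid j}\|_2=\|\sin\Theta(\widehat{\mathcal U}_{i\mid j},\mathcal U_{i\mid j})\|_2$. This gives
\[
\|\sin\Theta(\widehat{\mathcal U}_i^{\mathrm{mv}},\mathcal U_i^{\mathrm{mv}})\|_2
\le\frac{2\|\widehat{\mathbf S}_i-\mathbf S_i\|_2}{\Gamma_i}
\le\frac{2}{\Gamma_i}\max_{j\ne i}\|\sin\Theta(\widehat{\mathcal U}_{i\mid j},\mathcal U_{i\mid j})\|_2 .
\]
The outer bound and the $O_{\mathbb P}(n^{-1/2})$ rate therefore hold exactly as stated. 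The middle term, however, must carry the factor $2$, or else the denominator $\Gamma_i-\|\widehat{\mathbf S}_i-\mathbf S_i\|_2$.

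The paper's proof makes the opposite allocation. It uses constant $1$ in Davis--Kahan, which needs a gap between $\lambda_{r_i}(\widehat{\mathbf S}_i)$ and $\lambda_{r_i+1}(\mathbf S_i)$ rather than the population gap alone, and it uses the loose factor $2$ for the projectors. Its intermediate inequality therefore has the same defect. Commit to the corrected chain rather than leaving Step~2 conditional.
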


All proofs are deferred to the supplementary material.
\section{Experiments}
\label{sec:experiments}
In this section, we empirically test the main predictions of our
identifiability analysis using controlled generative experiments.
The experiments are designed to assess whether correlation-based
multi-view objectives recover the ground-truth canonical source
subspaces under known nonlinear view transformations. We first use
fully synthetic data, where the latent factors, canonical spectra,
and representation dimensions can be controlled exactly. We then use
3DIdent as a rendered stress test with annotated generative factors.
Our setup extends established disentanglement learning benchmarks 
\citep{zimmermann2021contrastive,matthes2023towards,han2026provable} 
to the multi-view ($N=3$) regime considered in this work.

\subsection{Experimental Setup}
\paragraph{Datasets.} 
We evaluate on two settings that provide ground-truth access to
generative factors:
\begin{itemize}
    \item \textbf{Synthetic Data}: We extend the single-generator setup of 
        \citet{zimmermann2021contrastive} to an $N=3$ multi-view regime. 
        Observations are produced by applying independent nonlinear 
        generating
        functions to view-specific sources sampled from our additive 
        model (\autoref{equ:additive}), enabling exact control over 
        the latent distributions and mixing matrices.

    \item \textbf{3DIdent} \citep{daunhawer2023multimodal}: 
        A physically rendered benchmark with 11 annotated generative factors.
        Since the factor grid is discrete, 3DIdent does not exactly instantiate
        our continuous source-level model. We therefore use it as a controlled
        rendered stress test. Specifically, we first sample continuous latent
        vectors according to \autoref{equ:additive}, normalize the factor
        coordinates to a common scale, and then select the rendered instance
        whose factor tuple is nearest in Euclidean distance. This procedure
        preserves controlled access to ground-truth factor labels while testing
        the learned representations on visually nontrivial observations.
\end{itemize}

\paragraph{Construction of Linear Mixing Matrices.}
To strictly control the pairwise canonical correlations of
$\mathbf{R}_{ij}$ in \autoref{eq:normalized_cross_covariance_matrix}, 
we construct the mixing matrices $\{\mathbf{A}_i\}_{i=1}^3$ to match 
target singular values $\{t_{ij,k}\}_{k=1}^{r_{ij}} \subset [0,1)$ for the 
population normalized cross-covariance matrices,
\[
\mathbf{R}_{ij} = (\mathbf{A}_i \mathbf{A}_i^\top + \mathbf{I})^{-1/2} \mathbf{A}_i \mathbf{A}_j^\top (\mathbf{A}_j \mathbf{A}_j^\top + \mathbf{I})^{-1/2}.
\]
By assigning a shared right singular matrix $\mathbf{P} \in O(d)$ 
across all $\mathbf{A}_i$, the singular values of $\mathbf{R}_{ij}$ 
factorize elementwise as $t_{ij,k} = g_{i,k} g_{j,k}$, where 
$g_{i,k} := \sigma_{i,k} / \sqrt{1+\sigma_{i,k}^2}$ and $\sigma_{i,k}$ 
are the 
singular values of $\mathbf{A}_i$. For $N=3$ views, this algebraic
system uniquely yields $g_{i,k} = (t_{ij,k} t_{ki,k} / t_{jk,k})^{1/2}$, 
subsequently fixing $\sigma_{i,k} = g_{i,k} / \sqrt{1-g_{i,k}^2}$. 
We then construct 
$\mathbf{A}_i := \mathbf{U}_i \mathrm{diag}(\sigma_{i,1}, \dots, \sigma_{i,r_{ij}}) \mathbf{P}^\top$, sampling independent random orthogonal matrices $\mathbf{U}_i \in O(d)$. 
This guarantees the exact prescribed canonical 
spectra while randomizing the signal subspace orientations.

\paragraph{Methods.}
We compare neural GCCA, the empirical objective aligned with our
population analysis, against representative self-supervised objectives
using the same encoder architecture and representation dimension:
Barlow Twins, W-MSE, and InfoNCE.
\begin{itemize}
    \item \textbf{Barlow Twins} \citep{zbontar2021barlow}: Enforces invariance and minimizes redundancy through cross-correlation diagonalization.  
    \item \textbf{W-MSE} \citep{ermolov2021whitening}: Aligns whitened multi-view neural representations by minimizing mean-squared error.  
    \item \textbf{InfoNCE} \citep{oord2018representation}: The standard 
    contrastive learning objective, maximizing mutual information 
    between views, shown to be identifiable under mild conditions
    \citep{zimmermann2021contrastive,matthes2023towards}.
    \item \textbf{Generalized CCA (GCCA)} \citep{kettenring1971canonical}: 
    The correlation-based multi-view objective studied in this work,
    implemented as a neural version of SUMCOR generalized CCA and
    corresponding to \autoref{eq:mvcca_obj}.
\end{itemize}

To approximate the view-specific encoding functions 
$\{\mathbf f_i\}_{i=1}^N$, we parameterize them as neural 
networks, leveraging their universal approximation capability. 
We use neural-network-parameterized GCCA as a practical empirical
approximation to the population GCCA objective when data are generated
via online sampling from the underlying distributions. This empirical
training procedure is not covered by the finite-sample perturbation
theorem, which analyzes fixed finite-dimensional representations.
All baselines are implemented 
using their official repositories when available, otherwise we re-implement following 
the authors' specifications.
\paragraph{Metrics.}
We assess subspace recovery via principal angles between 
the span of the learned whitened representation and the ground-truth canonical subspace. 
Specifically, we report the mean principal angle ($PA_{\text{mean}}$) 
and the maximum principal angle ($PA_{\text{max}}$), 
following standard subspace identification practice 
\citep{ma2020subspace,gao2017sparse,cai2018rate}.  

In all experiments, view-specific encoders are trained using Adam 
with a fixed learning rate of $10^{-4}$. The batch size is set to 
1024 for synthetic data and 256 for 3DIdent. 
On the synthetic dataset with $d_{\mathcal S}=5$, 
100{,}001 training steps require approximately 2.5 hours, 
whereas 40{,}000 steps on 3DIdent take about 20 hours 
on a single RTX A5000 GPU. 
All tables report mean $\pm$ standard deviation over five random seeds.
\begin{table}
    \centering
    \caption{Comparison of the mean and maximum principal angles ($PA_{\mathrm{mean}}$, $PA_{\mathrm{max}}$) on synthetic data ($d_{\mathcal{S}_i}=d_\mathcal{Z}=5, \forall i \in [3]$) under Gaussian prior ($p_\phi$).}
    \label{tab:synthetic_identifiability}
    \resizebox{\linewidth}{!}{
    \begin{tabular}{l*{6}{c}}
        \toprule
        Methods   & \multicolumn{2}{c}{$\tilde{\mathbf{f}}_1$}  & \multicolumn{2}{c}{$\tilde{\mathbf{f}}_2$}  & \multicolumn{2}{c}{$\tilde{\mathbf{f}}_3$} \\
        \cmidrule(r){2-3}
        \cmidrule(r){4-5}
        \cmidrule(r){6-7}
            & $PA_{\mathrm{mean}}$ & $PA_{\mathrm{max}}$   & $PA_{\mathrm{mean}}$ & $PA_{\mathrm{max}}$   & $PA_{\mathrm{mean}}$ & $PA_{\mathrm{max}}$ \\
        \midrule
        BarlowTwins & 34.12 $\pm$0.21 & 86.41 $\pm$0.95 & 31.85 $\pm$0.18 & 81.22 $\pm$0.82 & 33.40 $\pm$0.24 & 84.95 $\pm$0.91 \\
        InfoNCE     & 4.95 $\pm$0.06 & 8.15 $\pm$0.52 & 5.12 $\pm$0.09 & 8.42 $\pm$0.38 & \textbf{4.88 $\pm$0.08} & \textbf{7.98 $\pm$0.47} \\
        W-MSE       & 5.08 $\pm$0.08 & 8.31 $\pm$0.41 & \textbf{4.91 $\pm$0.05} & \textbf{7.95 $\pm$0.58} & 5.03 $\pm$0.07 & 8.24 $\pm$0.44 \\
        GCCA        & \textbf{4.85 $\pm$0.05} & \textbf{7.92 $\pm$0.35} & 4.98 $\pm$0.08 & 8.11 $\pm$0.42 & 4.92 $\pm$0.06 & 8.05 $\pm$0.51 \\
        \bottomrule
    \end{tabular}}
\end{table}
\begin{table}
    \centering
    \caption{Comparison of the mean and maximum principal angles ($PA_{\mathrm{mean}}$, $PA_{\mathrm{max}}$) on 3DIdent ($d_{{\mathcal S}_i}=d_{\mathcal Z}=11, \forall i \in [3]$) under Gaussian prior ($p_\phi$).}
    \label{tab:3d_identifiability}
    \resizebox{\linewidth}{!}{
    \begin{tabular}{l*{6}{c}}
        \toprule
        Methods   & \multicolumn{2}{c}{$\tilde{\mathbf{f}}_1$}  & \multicolumn{2}{c}{$\tilde{\mathbf{f}}_2$}  & \multicolumn{2}{c}{$\tilde{\mathbf{f}}_3$} \\
        \cmidrule(r){2-3}
        \cmidrule(r){4-5}
        \cmidrule(r){6-7}
            & $PA_{\mathrm{mean}}$ & $PA_{\mathrm{max}}$   & $PA_{\mathrm{mean}}$ & $PA_{\mathrm{max}}$   & $PA_{\mathrm{mean}}$ & $PA_{\mathrm{max}}$ \\
        \midrule
        BarlowTwins & 34.21 $\pm$0.45 & 86.42 $\pm$1.85 & 36.15 $\pm$0.52 & 82.75 $\pm$2.15 & 35.82 $\pm$0.61 & 88.14 $\pm$1.92 \\
        InfoNCE     & 8.05 $\pm$0.18 & 11.21 $\pm$0.65 & 8.12 $\pm$0.22 & 10.85 $\pm$0.82 & 7.98 $\pm$0.15 & 11.42 $\pm$0.91 \\
        W-MSE       & 8.15 $\pm$0.25 & 10.95 $\pm$0.78 & 7.92 $\pm$0.19 & 11.15 $\pm$0.55 & 8.11 $\pm$0.21 & 10.88 $\pm$0.88 \\
        GCCA        & \textbf{7.82 $\pm$0.16} & \textbf{10.51 $\pm$0.62} & \textbf{7.85 $\pm$0.18} & \textbf{10.45 $\pm$0.71} & \textbf{7.80 $\pm$0.15} & \textbf{10.48 $\pm$0.68} \\
        \bottomrule
    \end{tabular}}
\end{table}
\subsection{Validation of Theoretical Findings}
\paragraph{Subspace Identifiability.}
We empirically validate subspace identifiability under our additive 
generative model in \autoref{equ:additive}. 
For simplicity, we present the results using Gaussian prior. 
Results for other admissible non-Gaussian distributions are 
deferred to the supplementary material. 
Setting the latent dimensions 
to $d_{\mathcal{C}} = d_{\mathcal{S}_i} = 5$, we independently sample 
the shared content $\mathbf{c}$ and view-private noise 
$\{\bm{\epsilon}_i\}_{i=1}^N$ from the selected distributions 
(details in the supplementary material). 
To invert this process, the view-specific 
encoders $\{\mathbf{f}_i\}_{i=1}^N$ are implemented as residual MLPs 
equipped with batch normalization to ensure stable optimization.

\autoref{tab:synthetic_identifiability} and \autoref{tab:3d_identifiability}
show that GCCA, InfoNCE, and W-MSE all recover low-dimensional
source subspaces with small principal angles, while Barlow Twins
exhibits substantially larger subspace errors. On 3DIdent, GCCA
achieves the lowest errors across all views and metrics. On synthetic
data, GCCA is competitive with InfoNCE and W-MSE, with all three
methods attaining comparable subspace recovery. These results are
consistent with the view that whitening- and correlation-based
objectives are well aligned with the subspace structure predicted by
our theory, while the exact empirical ranking also reflects
optimization and objective-specific effects outside the population
identifiability analysis.
\begin{figure}[t]
  \centering
    \includegraphics[width=\linewidth]{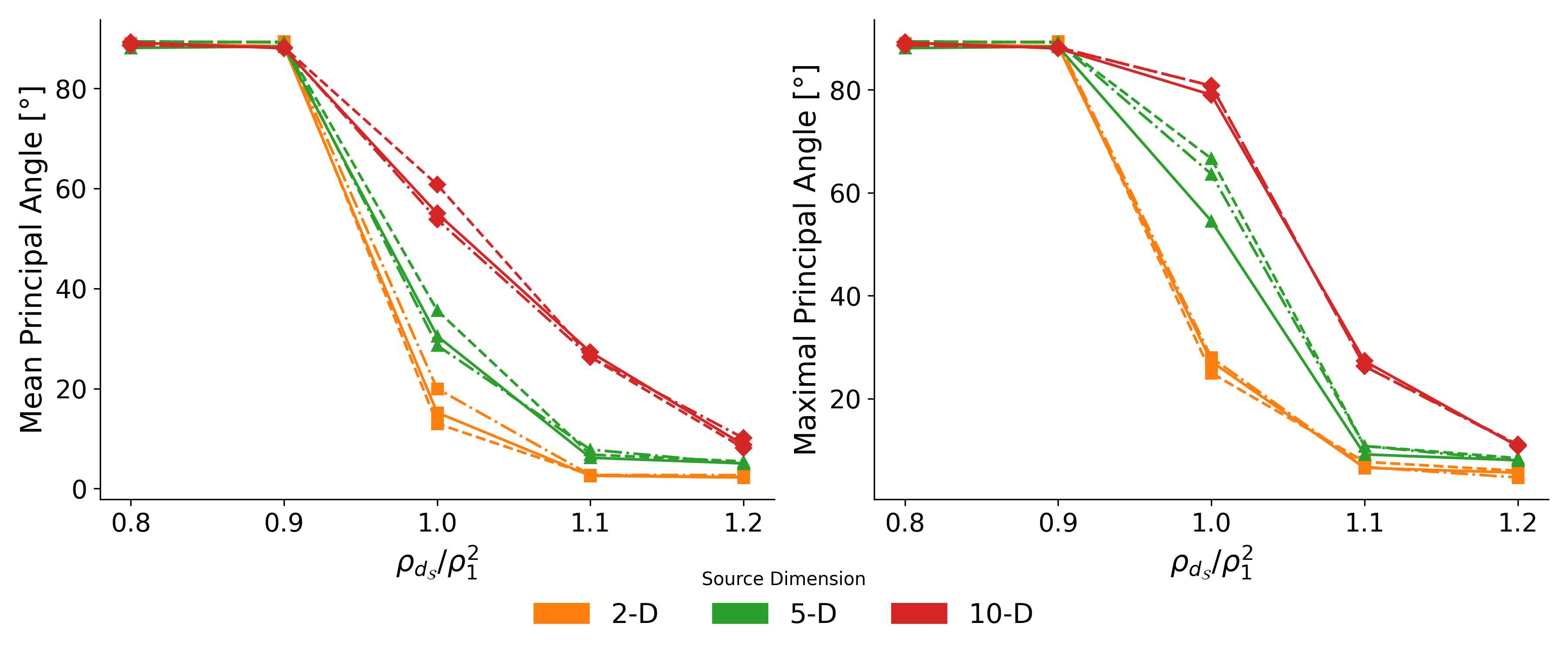}
\caption{Ablation over the first-order canonical ratio 
$\rho_{d_{\mathcal{S}}}/\rho_1^2$ ($d_{\mathcal{S}}=d_{\mathcal{Z}}$). 
Left and right panels display the mean 
($PA_{\mathrm{mean}}\downarrow$) and maximum 
($PA_{\mathrm{max}}\downarrow$) principal angles, respectively. 
Colors indicate source dimensions, while solid, dashed, and 
dash-dotted lines denote the view-specific encoders 
$\tilde{\mathbf{f}}_1$, $\tilde{\mathbf{f}}_2$, and $\tilde{\mathbf{f}}_3$.}
\label{fig:first_order_canonical_dominance}
\end{figure}
\paragraph{Ablation on First-order Canonical Dominance.} 
Let $\rho_{d_{\mathcal S}}/\rho_1^2$ denote the first-order canonical dominance ratio. 
We ablate \autoref{ass:first_order_dominance} by varying this ratio while 
keeping all other parameters fixed. 
\autoref{fig:first_order_canonical_dominance} reports the resulting mean 
and maximum principal angles on the synthetic dataset 
when the first-order canonical ratio are 
uniformly sampled from $[\rho_{d_{\mathcal S}},\rho_1]$.
When $\rho_{d_{\mathcal S}}/\rho_1^2<1$, at least one canonical direction 
violates the dominance condition, leading to persistently large 
$PA_{\max}$ and incomplete subspace recovery. 
Once the ratio exceeds $1$, both $PA_{\mathrm{mean}}$ and 
$PA_{\max}$ decrease sharply, indicating full 
identifiability of all canonical directions. 
The transition is more abrupt at lower source dimensions, 
reflecting more concentrated recoverability in 
low-dimensional regimes.
\begin{table}[t]
    \centering
    \caption{Mean principal angle ($PA_{\mathrm{mean}}\downarrow$) in the overcomplete regime ($d_{\mathcal{S}_i}=5$, $d_{\mathcal{Z}}=7, \forall i \in [3]$).}    
    \label{tab:dimension_mismatch_overcomplete}
    \resizebox{\linewidth}{!}{%
    \begin{tabular}{c*{5}{c}}  
        \toprule
        Encoder &
        Gaussian & Binomial & Gamma & Poisson & Hypergeometric \\
        \midrule
        $\tilde{\mathbf{f}}_1$ & 5.42 $\pm$0.08 & 5.58 $\pm$0.12 & 5.47 $\pm$0.05 & 5.61 $\pm$0.14 & 5.53 $\pm$0.09 \\
        $\tilde{\mathbf{f}}_2$ & 5.51 $\pm$0.11 & 5.39 $\pm$0.07 & 5.65 $\pm$0.15 & 5.48 $\pm$0.06 & 5.55 $\pm$0.10 \\
        $\tilde{\mathbf{f}}_3$ & 5.45 $\pm$0.09 & 5.62 $\pm$0.13 & 5.50 $\pm$0.08 & 5.41 $\pm$0.11 & 5.59 $\pm$0.07 \\
        \bottomrule
    \end{tabular}}
\end{table}
\begin{figure}[t]
  \centering
 
    \includegraphics[width=\linewidth]{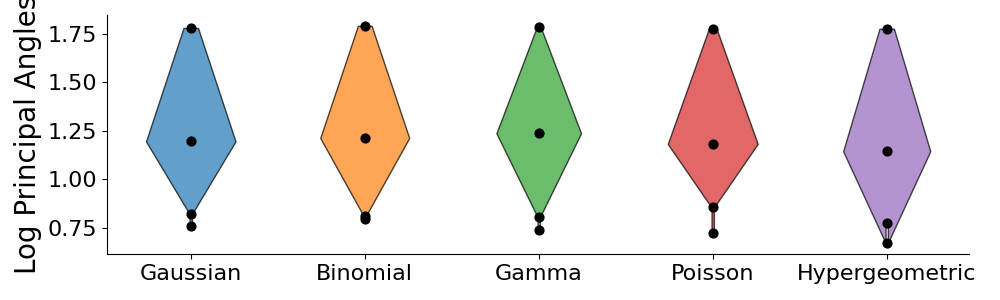}
\caption{Log principal angles of encoder $\mathbf{f}_1$ in 
the under-complete setup ($d_{\mathcal S_i}=5,\ d_{\mathcal Z}=4, \forall i \in [3]$). 
Black dots denote principal angles and the shaded region indicates 
the log-standard deviation. }
  \label{fig:log_principal_angle_dimension_mismatch}
\end{figure}
\paragraph{Ablation on Dimension Mismatch.} 
We study identifiability under dimension mismatch on synthetic dataset. 
In the under-complete regime ($d_{\mathcal S}>d_{\mathcal Z}$), 
\autoref{fig:log_principal_angle_dimension_mismatch} shows the log-principal-angle 
distribution across source distributions. 
Across all cases, several directions attain small angles, 
while at least one direction remains bounded away from zero, 
indicating partial but not full subspace recovery. 
The variability is limited, suggesting robustness to distributional choice, 
yet the missing dimension prevents complete identifiability.

In the over-complete regime ($d_{\mathcal S}<d_{\mathcal Z}$),
\autoref{tab:dimension_mismatch_overcomplete} reports consistently
small mean principal angles across source distributions, indicating
that the learned representations contain directions closely aligned
with the source subspace. However, the additional representation
coordinates are not uniquely determined by the first-order subspace
criterion. Therefore, the over-complete setting supports subspace
recovery but not global coordinate-level identifiability.
Future work will investigate this higher-order 
structure more systematically.

\section{Discussion and Conclusion}
We reframed nonlinear multi-view CCA as a basis-invariant subspace 
identification problem. By considering an additive multi-view generative 
process, we proved that for $N \ge 3$ views, the population 
generalized CCA acts as a provable intersection filter, 
recovering the jointly correlated signal subspaces up to an orthogonal transformation 
while filtering out view-private components.
The first-order canonical dominance condition theoretically guarantees 
the spectral separation between first-order shared Hermite components and 
higher-order components induced by nonlinear transformations
via a multivariate Mehler--Hermite expansion. 
This spectral gap further enables finite-sample subspace recovery when the relevant 
population operators are estimated from finite-dimensional feature representations.

While our theory assumes 
matched representation and source dimensions and full-rank source covariance
experiments demonstrate the behavior of CCA in capacity-mismatched regimes. 
Future work will systematically investigate the geometric isolation 
of higher-order Hermite components in these redundant dimensions, 
and extend this identifiability framework to rank-deficient source structures, 
e.g., partial observability, 
further bridging multivariate statistics with 
self-supervised learning.



\bibliography{main}

\onecolumn

\title{Supplementary Material}
\maketitle
\appendix

\printsupplementarytoc

\suppsection{Overview of the Supplementary Material}
\label{app:overview_supplementary}

This supplementary material is organized as follows. 

We first collect
auxiliary tools used throughout the analysis, including the whitened
encoder classes (\autoref{definition:whitend_encoder}), the population CCA
objective (\autoref{equ:CCA_objective}), pushforward identities
(\autoref{lemma:pushforward}), reparameterization invariance
(\autoref{lemma:rep-inv}), maximizer correspondence
(\autoref{lemma:rep-inv}, item~\ref{item:maximizer_correspondence}), and the canonical factorization
of the pairwise source density (\autoref{lemma:canonical_factorization}). 

We then provide the proof of
\autoref{lemma:coupling_density}, which establishes the normalized
multivariate Mehler--Hermite expansion underlying the spectral analysis.

The subsequent sections contain complete proofs of the main theoretical
claims. In particular, we prove the infinite-dimensional two-view subspace
identifiability result in
\autoref{thm:two_view_identifiability_infinite}, the finite-dimensional
two-view specialization in
\autoref{corollary:two_view_identifiability_finite}, the
infinite-dimensional multi-view identifiability result in
\autoref{thm:multi_view_identifiability_infinite}, the consistency of the
multi-view intersection filter in
\autoref{cor:finite_sample_multiview}, and the finite-sample subspace
recovery guarantee in
\autoref{thm:finite_sample_pairwise}. These proofs make explicit how
reparameterization invariance, whitening-induced canonicalization,
orthogonal-polynomial diagonalization, spectral separation, and standard
matrix perturbation arguments combine to yield the stated identifiability
and consistency guarantees.

Finally, we report additional experimental results complementing the main
text. These results further evaluate the behavior of the proposed
correlation-based multi-view objective across different latent
distributional settings and provide additional evidence for the role of the
stated spectral and dimensional assumptions.

\suppsection{Auxiliary Tools from Prior Works}

\suppsubsection{Reparameterization Invariance}
\begin{definition}[Whitened Encoder Classes with Post-orthogonal Closure]\label{definition:whitend_encoder}
Let $\mathcal{H}_{\mathcal{X}}\subset L^2(P_{\mathbf x};\mathbb R^{d_{\mathcal Z}})$ be Hilbert spaces of square-integrable vector-valued functions.
For base encoders $\forall \mathbf f\in\mathcal{H}_{\mathcal{X}}$ with covariance  matrix $\mathbf{\Sigma}_{\mathbf f}$ symmetric positive definite, choose any 
$\mathbf W_{\mathbf f}\in\mathrm{GL}(d_{\mathcal Z})$ such that $\mathbf W_{\mathbf f}\,\mathbf{\Sigma}_{\mathbf f}\,\mathbf W_{\mathbf f}^\top=\mathbf I_{d_{\mathcal Z}}$.
Define the whitened encoder class on domain $\mathcal{X}$ by
\begin{equation*}
\tilde{\mathcal F}_{\mathcal{X}}:= \Big\{\, \mathbf W_{\mathbf f}\big(\mathbf f-\mathbb E[\mathbf f(\mathbf{x})]\big)\ :\ \mathbf f\in\mathcal{H}_{\mathcal{X}},\ \mathbf{\Sigma}_{\mathbf f}\succ 0 \,\Big\},
\end{equation*}
and $\mathbf Q\,\tilde{\mathbf f}\in\tilde{\mathcal F}_{\mathcal{X}}, \forall \tilde{\mathbf f}\in\tilde{\mathcal F}_{\mathcal{X}}, \mathbf Q \in O(d_{\mathcal Z})$. Analogously, we define whitened encoder class $\tilde{\mathcal F}'_{\mathcal{X}'}$ for all base encoders $\mathbf{f}' \in \mathcal{H}_{\mathcal{X}'}\subset L^2(P_{\mathbf x'};\mathbb R^{d_{\mathcal Z}})$.
\end{definition}
\begin{assumption}[Whitened Encoder Classes with Post-orthogonal Closure]\label{assump:function-class}
Let $\mathcal{H}_{\mathcal{X}}\subset L^2(P_{\mathbf x};\mathbb R^{d_{\mathcal Z}})$ be Hilbert spaces of square-integrable vector-valued functions.
For base encoders $\forall \mathbf f\in\mathcal{H}_{\mathcal{X}}$ with covariance  matrix $\mathbf{\Sigma}_{\mathbf f}$ symmetric positive definite, choose any 
$\mathbf W_{\mathbf f}\in\mathrm{GL}(d_{\mathcal Z})$ such that $\mathbf W_{\mathbf f}\,\mathbf{\Sigma}_{\mathbf f}\,\mathbf W_{\mathbf f}^\top=\mathbf I_{d_{\mathcal Z}}$.
Define the whitened encoder class on domain $\mathcal{X}$ by
\begin{equation*}
\tilde{\mathcal F}_{\mathcal{X}}:= \Big\{\, \mathbf W_{\mathbf f}\big(\mathbf f-\mathbb E[\mathbf f(\mathbf{x})]\big)\ :\ \mathbf f\in\mathcal{H}_{\mathcal{X}},\ \mathbf{\Sigma}_{\mathbf f}\succ 0 \,\Big\},
\end{equation*}
and $\mathbf Q\,\tilde{\mathbf f}\in\tilde{\mathcal F}_{\mathcal{X}}, \forall \tilde{\mathbf f}\in\tilde{\mathcal F}_{\mathcal{X}}, \mathbf Q \in O(d_{\mathcal Z})$. Analogously, we define whitened encoder class $\tilde{\mathcal F}'_{\mathcal{X}'}$ for all base encoders $\mathbf{f}' \in \mathcal{H}_{\mathcal{X}'}\subset L^2(P_{\mathbf x'};\mathbb R^{d_{\mathcal Z}})$.
\end{assumption}
\begin{definition}[CCA Population Objective.]
Given the whitened encoder class in \autoref{assump:function-class}, nonlinear CCA solves the following population optimization problem:
\begin{equation}
\label{equ:CCA_objective}
\max_{\tilde{\mathbf f}\in \tilde{\mathcal F}_{\mathcal{X}},\;\tilde{\mathbf f'}\in \tilde{\mathcal F'}_{\mathcal{X}'}}\; J( \tilde{\mathbf f}, \tilde{\mathbf f'})=\sum_{i=1}^{d_\mathcal{Z}}
\sigma_i\big(\mathrm{Cov}(\tilde{\mathbf f}(\mathbf{x}),\,\tilde{\mathbf f'}(\mathbf{x'}))\big),
\end{equation}
where $\sigma_i(\cdot)$ denotes the $i$-th singular value and $\mathrm{Cov}(\cdot)$ the population covariance.
\end{definition}
\begin{lemma}[Pushforward identities and whitening preservation]
\label{lemma:pushforward}
Let ground-truth latent pair $(\mathbf{s}, \mathbf{s}')$ be square-integrable random vectors on $\mathcal S\times\mathcal S$ with joint law $P_{\mathbf s\mathbf s'}$.
Let $\mathbf g:\mathcal S\!\to\!\mathcal X$ and $\mathbf g':\mathcal S\!\to\!\mathcal X'$ be Borel-measurable and observation pair
$(\mathbf{x},\mathbf{x'})=(g(\mathbf{s}),g'(\mathbf{s}'))$ with joint law $P_{\mathbf x\mathbf x'}=(\mathbf g, \mathbf g')_\# P_{\mathbf s\mathbf s'}$.
Then the following properties hold.
\begin{enumerate}[label=\arabic*.]
\item \textbf{Square-integrability.} For any Borel-measurable $\tilde{\mathbf f}\in \tilde{\mathcal{F}}_\mathcal{X}$  with finite second moments on $\mathcal X$, the composition $\tilde{\mathbf f}\!\circ \mathbf g $ also have finite second moments on $\mathcal S$. Analogously, the same property also applies to all $\tilde{\mathbf f'}\in \tilde{\mathcal{F}}'_{\mathcal{X}'}$.

\item \textbf{Expectation and Covariance Preservation.} For all square-integrable $\tilde{\mathbf f}\in \tilde{\mathcal{F}}_\mathcal{X}$, $\tilde{\mathbf f'}\in \tilde{\mathcal{F}}'_{\mathcal{X}'}$,
\[
\mathbb E[\tilde{\mathbf f}(\mathbf{x})]=\mathbb E[\tilde{\mathbf f}\circ\mathbf g(\mathbf{s})],\quad
\mathbb E[\tilde{\mathbf f'}(\mathbf{x'})]=\mathbb E[\tilde{\mathbf f'}\circ \mathbf g'(\mathbf{s}')],
\]
\[
\mathrm{Cov}\big(\tilde{\mathbf f}(\mathbf{x})\big)=\mathrm{Cov}\big(\tilde{\mathbf f}\!\circ \mathbf g(\mathbf{s})\big),\quad
\mathrm{Cov}\big(\tilde{\mathbf f}'(\mathbf{x'})\big)=\mathrm{Cov}\big(\tilde{\mathbf f}'\!\circ \mathbf g'(\mathbf{s}')\big),
\]
\[
\mathrm{Cov}\!\big(\tilde{\mathbf f}(\mathbf{x}),\,\tilde{\mathbf f}'(\mathbf{x'})\big)
=\mathrm{Cov}\!\big(\tilde{\mathbf f}\!\circ \mathbf g(\mathbf{s}),\,\tilde{\mathbf f}'\!\circ \mathbf g'(\mathbf{s}')\big).
\]

\item \textbf{Whitening Preservation.} If $\tilde{\mathbf f}\in\tilde{\mathcal F}_\mathcal X$ and $\tilde{\mathbf f}'\in\tilde{\mathcal{F}}'_{\mathcal{X}'}$ are whitened with respect to $P_{\mathbf x}$ and $P_{\mathbf x'}$, respectively. Then
$\tilde{\mathbf f}\!\circ \mathbf g$ and
$\tilde{\mathbf f}'\!\circ \mathbf g'$ are whitened under $P_{\mathbf s}$ and $P_{\mathbf s'}$.
\end{enumerate}
\end{lemma}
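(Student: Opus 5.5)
The three claims are proved separately, and each one is essentially the law of the unconscious statistician (change of variables for pushforward measures), applied componentwise. Since $P_{\mathbf x}=\mathbf g_\# P_{\mathbf s}$, every Borel $\varphi:\mathcal X\to\mathbb R$ that is nonnegative or integrable satisfies
\[
\int_{\mathcal X}\varphi\,dP_{\mathbf x}=\int_{\mathcal S}\varphi\circ\mathbf g\,dP_{\mathbf s}.
\]
The analogous identity holds for $\mathbf g'$. For the joint law $P_{\mathbf x\mathbf x'}=(\mathbf g,\mathbf g')_\#P_{\mathbf s\mathbf s'}$ it reads
\[
\int\psi\,dP_{\mathbf x\mathbf x'}=\int\psi\circ(\mathbf g,\mathbf g')\,dP_{\mathbf s\mathbf s'}.
\]
First I would record that $\tilde{\mathbf f}\circ\mathbf g$ is Borel measurable, being a composition of Borel maps. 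This makes all the integrands below legitimate.

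\emph{Square-integrability.} Take $\varphi=\|\tilde{\mathbf f}\|_2^2\ge 0$, so the pushforward identity applies without any integrability hypothesis. It gives $\mathbb E\|\tilde{\mathbf f}\circ\mathbf g(\mathbf s)\|^2=\mathbb E\|\tilde{\mathbf f}(\mathbf x)\|^2<\infty$. The same argument applies to $\tilde{\mathbf f}'$.

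\emph{Expectation and covariance preservation.} By step 1 and Cauchy--Schwarz, the coordinates $\tilde f_k$ and the products $\tilde f_k\tilde f_l$ and $\tilde f_k\tilde f'_l$ are integrable. This holds for the products $\tilde f_k\tilde f'_l$ under the joint law as well, since their marginals are the given ones. Applying the pushforward identity with $\varphi=\tilde f_k$ gives equality of means. Applying it with $\varphi=\tilde f_k\tilde f_l$ gives equality of second moments. For the cross terms, apply the joint identity with $\psi(\mathbf x,\mathbf x')=\tilde f_k(\mathbf x)\tilde f'_l(\mathbf x')$. Then write $\mathrm{Cov}(\mathbf a,\mathbf b)=\mathbb E[\mathbf a\mathbf b^\top]-\mathbb E[\mathbf a]\mathbb E[\mathbf b]^\top$. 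Since every term on the right is preserved, the within-view and cross covariances agree entrywise.

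\emph{Whitening preservation.} Being whitened means $\mathbb E[\tilde{\mathbf f}(\mathbf x)]=\mathbf 0$ and $\mathrm{Cov}(\tilde{\mathbf f}(\mathbf x))=\mathbf I$. By step 2, the composition $\tilde{\mathbf f}\circ\mathbf g$ then has mean $\mathbf 0$ and covariance $\mathbf I$ under $P_{\mathbf s}$, which is the claim.

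I do not expect any real obstacle; the argument is routine measure theory. The only point needing care is integrability of the cross products. I would handle it by bounding $|\tilde f_k\tilde f'_l|\le\tfrac12(\tilde f_k^2+\tilde f_l'^2)$ and using the marginal square-integrability from step 1, which avoids invoking independence or any structure of the joint law.
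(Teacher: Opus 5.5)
Your proposal is correct and follows the paper's proof essentially step for step. You apply the pushforward (change-of-variables) identity to $\|\tilde{\mathbf f}\|^2$ for square-integrability, then componentwise to means, second moments and, via the joint pushforward, cross moments, subtract the mean outer products, and read off whitening preservation from step 2; your AM--GM bound on the cross products does the job of the paper's Cauchy--Schwarz remark, and using the identity for nonnegative integrands in step 1 is slightly cleaner than the paper's phrasing, which presupposes integrability.
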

\begin{proof} \ 

\textbf{1. Square-integrability.}

Let $\tilde{\mathbf f} \in \tilde{\mathcal{F}}_{\mathcal{X}}, \tilde{\mathbf f}' \in \tilde{\mathcal{F}}'_{\mathcal{X}'}$ 
be Borel-measurable and square-integrable under $P_{\mathbf x}$ and $P_{\mathbf x'}$, 
respectively. That is,
\[
\int_{\mathcal X}\|\tilde{\mathbf f}(\mathbf a)\|^2\,dP_{\mathbf x}(\mathbf a)<\infty, 
\qquad
\int_{\mathcal X'}\|\tilde{\mathbf f}'(\mathbf a)\|^2\,dP_{\mathbf x'}(\mathbf b)<\infty.
\]
Because $P_{\mathbf x}=\mathbf g_\# P_{\mathbf s}$ and 
$P_{\mathbf x'}=\mathbf g'_\# P_{\mathbf s'}$ are the pushforward meausre of $P_{\mathbf s}$ and $P_{\mathbf s'}$. The pushforward identity gives, 
for any nonnegative measurable $\bm\varphi \in L^1(P_{\mathbf x}), \bm\psi\in L^1(P_{\mathbf x'})$,
\begin{equation}
   \label{equ:1_1}
    \int_{\mathcal X} \bm\varphi(\mathbf a)\, dP_{\mathbf x}(\mathbf a)
=\int_{\mathcal S} (\bm\varphi\circ\mathbf g)(\mathbf b)\, dP_{\mathbf s}(\mathbf b)\quad\text{and}\quad \int_{\mathcal X'} \bm\psi(\mathbf a)\, dP_{\mathbf x'}(\mathbf a)
=\int_{\mathcal S} (\bm\psi\circ\mathbf g)(\mathbf b)\, dP_{\mathbf s'}(\mathbf b).
\end{equation}
Applying this to $\bm\varphi(\mathbf x)=\|\mathbf f(\mathbf x)\|^2$ yields
\begin{align*}
\label{equ:1_2}
    \mathbb E\big[\|(\tilde{\mathbf f}\!\circ \mathbf g)(\mathbf s)\|^2\big]
=\int_{\mathcal S}\|(\tilde{\mathbf f}\circ \mathbf g)(\mathbf a)\|^2\,dP_{\mathbf s}(\mathbf a)
=\int_{\mathcal X}\|\tilde{\mathbf f}(\mathbf a)\|^2\,dP_{\mathbf x}(\mathbf a)<\infty,\\
\mathbb E\big[\|(\tilde{\mathbf f}'\!\circ \mathbf g')(\mathbf s')\|^2\big]
=\int_{\mathcal S}\|(\tilde{\mathbf f}'\circ \mathbf g')(\mathbf b)\|^2\,dP_{\mathbf s'}(\mathbf b)
=\int_{\mathcal X'}\|\tilde{\mathbf f}'(\mathbf b)\|^2\,dP_{\mathbf x}(\mathbf b)<\infty.
\end{align*}
Thus $\tilde{\mathbf f}\!\circ \mathbf g\in L^2(P_{\mathbf s};\mathbb R^{d_\mathcal{Z}})$ and 
$\tilde{\mathbf f}'\!\circ \mathbf g'\in L^2(P_{\mathbf s'};\mathbb R^{d_\mathcal{Z}})$, as claimed.

\textbf{2. Expectation and covariance preservation.}

Let $\tilde{\mathbf f} \in \tilde{\mathcal{F}}_{\mathcal{X}}, \tilde{\mathbf f}' \in \tilde{\mathcal{F}}'_{\mathcal{X}'}$ be
Borel and square-integrable under $P_{\mathbf x}$ and $P_{\mathbf x'}$, respectively.

\emph{Expectations.}
Apply \autoref{equ:1_1} componentwise with $\bm\varphi=\tilde{\mathbf f}_j$ and $\bm\psi=\tilde{\mathbf f}'_j$:
\[
\mathbb E[\tilde{\mathbf f}(\mathbf x)] \;=\; \mathbb E[(\tilde{\mathbf f}\!\circ \mathbf g)(\mathbf s)],\qquad
\mathbb E[\tilde{\mathbf f}'(\mathbf x')] \;=\; \mathbb E[(\tilde{\mathbf f}'\!\circ \mathbf g')(\mathbf s')].
\]

\emph{Second moments and (marginal) covariances.}
Using \autoref{equ:1_1} with $\bm\varphi(\mathbf x)=\tilde{\mathbf f}(\mathbf x)\tilde{\mathbf f}(\mathbf x)^\top$ (integrable by Cauchy-Schwarz), $\forall i, j \in [1, \cdots, d_\mathcal{Z}]$,
\[
\mathbb E\!\big[\tilde{\mathbf f}(\mathbf x)\tilde{\mathbf f}(\mathbf x)^\top\big]
\;=\; \mathbb E\!\big[(\tilde{\mathbf f}\!\circ \mathbf g)(\mathbf s)\,(\tilde{\mathbf f}\!\circ \mathbf g)(\mathbf s)^\top\big].
\]
Subtracting outer products of the means yields
\[
\mathrm{Cov}\big(\tilde{\mathbf f}(\mathbf x)\big)
= \mathbb E\!\big[\tilde{\mathbf f}(\mathbf x)\tilde{\mathbf f}(\mathbf x)^\top\big]
  - \mathbb E[\tilde{\mathbf f}(\mathbf x)]\,\mathbb E[\tilde{\mathbf f}(\mathbf x)]^\top
= \mathrm{Cov}\big((\tilde{\mathbf f}\!\circ \mathbf g)(\mathbf s)\big).
\]
The same argument applies to $\tilde{\mathbf f}'$:
\[
\mathrm{Cov}\big(\tilde{\mathbf f}'(\mathbf x')\big)
= \mathrm{Cov}\big((\tilde{\mathbf f}'\!\circ \mathbf g')(\mathbf s')\big).
\]

\emph{Cross-covariances.}
For the joint law $P_{\mathbf x\mathbf x'}=(\mathbf g,\mathbf g')_\# P_{\mathbf s\mathbf s'}$ and any integrable $\Phi:\mathcal X\times\mathcal X'\to\mathbb R$, the well-established generalization of the pushforward measure to joint mappings is as follows:
\begin{equation}\label{equ:1-2}
\int_{\mathcal X\times\mathcal X'} \Phi(\mathbf a,\mathbf b)\,dP_{\mathbf x\mathbf x'}(\mathbf a,\mathbf b)
\;=\; \int_{\mathcal S\times\mathcal S'} \Phi(\mathbf g(\mathbf c),\mathbf g'(\mathbf d))\,dP_{\mathbf s\mathbf s'}(\mathbf c,\mathbf d).
\end{equation}

By \autoref{equ:1-2} with the matrix-valued integrand
$\Phi(\mathbf x)=\tilde{\mathbf f}(\mathbf x)\,\tilde{\mathbf f}'(\mathbf x')^\top$ (integrable by Cauchy--Schwarz), $\forall i, j \in [1, \cdots, d_\mathcal{Z}]$,
\[
\mathbb E\!\big[\tilde{\mathbf f}(\mathbf x)\,\tilde{\mathbf f}'(\mathbf x')^\top\big]
\;=\; \mathbb E\!\big[(\tilde{\mathbf f}\!\circ \mathbf g)(\mathbf s)\,(\tilde{\mathbf f}'\!\circ \mathbf g')(\mathbf s')^\top\big].
\]
Subtracting the product of means (already matched above) gives
\[
\mathrm{Cov}\!\big(\tilde{\mathbf f}(\mathbf x),\,\tilde{\mathbf f}'(\mathbf x')\big)
= \mathrm{Cov}\!\big((\tilde{\mathbf f}\!\circ \mathbf g)(\mathbf s),\,(\tilde{\mathbf f}'\!\circ \mathbf g')(\mathbf s')\big).
\]
All three identities are thus established.

\textbf{3. Whitening preservation.}
If $\mathbb E[\tilde{\mathbf f}(\mathbf{x})]=\mathbf 0$ and $\mathrm{Cov}(\tilde{\mathbf f}(\mathbf{x}))=\mathbf I$, then by step 2, we get
$\mathbb E[\tilde{\mathbf f}\!\circ g(\mathbf{s})]=\mathbf 0$ and
$\mathrm{Cov}(\tilde{\mathbf f}\!\circ g(\mathbf{s}))=\mathbf I$.
The argument for $\tilde{\mathbf f}'$ is identical.
\end{proof}
\begin{lemma}[Bijection under composition]
\label{lemma:comp-bijection}
Let $\mathcal S,\mathcal X,\mathcal X'$ be standard Borel spaces and $\mathbf{g}, \mathbf{g}'$ be Borel-measurable and injective.
Assume the pushforward measures $P_{\mathbf x}=\mathbf g_\# P_{\mathbf s}$ and
$P_{\mathbf x'}=\mathbf g'_\# P_{\mathbf s'}$. Define the whitened representable latent classes as in \autoref{definition:whitend_encoder}. Then:
\begin{enumerate}[label=\arabic*.]
\item \textbf{Composition isometries.}
The composition operators
\[
C_{\mathbf g}:L^2(P_{\mathbf x};\mathbb R^{d_{\mathcal Z}})\to L^2(P_{\mathbf s};\mathbb R^{d_{\mathcal Z}}),\quad 
\tilde{\mathbf f}\mapsto \tilde{\mathbf f}\circ \mathbf g,
\]
\[
C_{\mathbf g'}:L^2(P_{\mathbf x'};\mathbb R^{d_{\mathcal Z}})\to L^2(P_{\mathbf s'};\mathbb R^{d_{\mathcal Z}}),\quad 
\tilde{\mathbf f}'\mapsto \tilde{\mathbf f}'\circ \mathbf g',
\]
are linear isometries. 
\item \textbf{Bijection on whitened classes.}
The map
\[
\Psi:\ \tilde{\mathcal F}_{\mathcal X}\times \tilde{\mathcal F}'_{\mathcal X'}
\longrightarrow
\tilde{\mathcal F}_{\mathcal S}\times \tilde{\mathcal F}'_{\mathcal S'},\qquad
(\tilde{\mathbf f},\tilde{\mathbf f}')
\longmapsto
(\tilde{\mathbf f}\!\circ \mathbf g,\ \tilde{\mathbf f}'\!\circ \mathbf g')
\]
is a bijection (modulo null sets), with inverse
\[
\Psi^{-1}:\ (\tilde{\mathbf h},\tilde{\mathbf h}')
\longmapsto
(\tilde{\mathbf h}\!\circ \mathbf g^{-1},\ \tilde{\mathbf h}'\!\circ \mathbf g'^{-1}),
\]
where $\mathbf g^{-1}$ and $\mathbf g'^{-1}$ are the Borel inverses on the Borel images 
$\mathbf g(\mathcal S)$ and $\mathbf g'(\mathcal S')$, respectively. For any $Q,Q'\in O(d_{\mathcal Z})$,
\[
\Psi(Q\tilde{\mathbf f},Q'\tilde{\mathbf f}')=(Q(\tilde{\mathbf f}\!\circ \mathbf g),\ Q'(\tilde{\mathbf f}'\!\circ \mathbf g')).
\]
\end{enumerate}
\end{lemma}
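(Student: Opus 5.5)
We prove the two items of \autoref{lemma:comp-bijection} separately. Item~1 is a change-of-variables computation. Item~2 combines item~1 with the measurable-inverse theorem for injective Borel maps on standard Borel spaces.

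For item~1, we take $\tilde{\mathbf f}\in L^2(P_{\mathbf x};\mathbb R^{d_{\mathcal Z}})$. The pushforward identity \autoref{equ:1_1} from \autoref{lemma:pushforward} with $\bm\varphi=\|\tilde{\mathbf f}\|^2$ gives $\|\tilde{\mathbf f}\circ\mathbf g\|_{L^2(P_{\mathbf s})}=\|\tilde{\mathbf f}\|_{L^2(P_{\mathbf x})}$. The same identity shows that $\tilde{\mathbf f}=\tilde{\mathbf f}_2$ $P_{\mathbf x}$-a.e.\ implies $\tilde{\mathbf f}\circ\mathbf g=\tilde{\mathbf f}_2\circ\mathbf g$ $P_{\mathbf s}$-a.e., since $P_{\mathbf s}(\mathbf g^{-1}(N))=P_{\mathbf x}(N)=0$ for the null set $N$ where they differ. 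Hence $C_{\mathbf g}$ is well defined on equivalence classes. Linearity is immediate, so $C_{\mathbf g}$ is a linear isometry. The argument for $C_{\mathbf g'}$ is identical.

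For item~2, the key input is the Lusin--Souslin theorem: an injective Borel map between standard Borel spaces has a Borel image $\mathbf g(\mathcal S)$ and a Borel inverse $\mathbf g^{-1}:\mathbf g(\mathcal S)\to\mathcal S$. Since $P_{\mathbf x}(\mathbf g(\mathcal S))=P_{\mathbf s}(\mathcal S)=1$, any $\tilde{\mathbf h}\in L^2(P_{\mathbf s})$ yields $\tilde{\mathbf h}\circ\mathbf g^{-1}$. This function is Borel on a full-measure set, can be extended by $0$ elsewhere, and is square-integrable by the pushforward identity applied in reverse. Thus $C_{\mathbf g}$ is surjective with inverse $\tilde{\mathbf h}\mapsto\tilde{\mathbf h}\circ\mathbf g^{-1}$. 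The two compositions equal the identity modulo null sets because $\mathbf g^{-1}\circ\mathbf g=\mathrm{id}_{\mathcal S}$ everywhere and $\mathbf g\circ\mathbf g^{-1}=\mathrm{id}$ on $\mathbf g(\mathcal S)$.

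It remains to check that the whitened classes are mapped onto each other. We read $\tilde{\mathcal F}_{\mathcal S}$ as the whitened class in \autoref{definition:whitend_encoder} built on $\mathcal H_{\mathcal S}:=C_{\mathbf g}(\mathcal H_{\mathcal X})$. Take $\mathbf W_{\mathbf f}(\mathbf f-\mathbb E\mathbf f)$ in $\tilde{\mathcal F}_{\mathcal X}$. Composition with $\mathbf g$ commutes with left multiplication by constant matrices, and by \autoref{lemma:pushforward} it preserves means and covariances. The composed map is therefore $\mathbf W_{\mathbf f}(\mathbf f\circ\mathbf g-\mathbb E[\mathbf f\circ\mathbf g])$. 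Its whitener is the same $\mathbf W_{\mathbf f}$, because $\mathbf\Sigma_{\mathbf f\circ\mathbf g}=\mathbf\Sigma_{\mathbf f}$. So the image lies in $\tilde{\mathcal F}_{\mathcal S}$. The inverse direction uses the same argument with $\mathbf g^{-1}$, and applying both directions to each factor gives bijectivity of $\Psi$. The equivariance $\Psi(Q\tilde{\mathbf f},Q'\tilde{\mathbf f}')=(Q(\tilde{\mathbf f}\circ\mathbf g),Q'(\tilde{\mathbf f}'\circ\mathbf g'))$ holds because $(Q\tilde{\mathbf f})\circ\mathbf g=Q(\tilde{\mathbf f}\circ\mathbf g)$ pointwise. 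The post-orthogonal closure of both classes keeps the result inside the codomain.

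The main obstacle is making surjectivity precise. It needs the measurability of $\mathbf g^{-1}$, which is why standard Borel spaces are assumed and the Lusin--Souslin theorem is invoked. It also needs care that the source-side class $\tilde{\mathcal F}_{\mathcal S}$ is exactly the image class, which requires interpreting $\mathcal H_{\mathcal S}$ as $C_{\mathbf g}(\mathcal H_{\mathcal X})$ or assuming the representability condition. The remaining steps are routine consequences of \autoref{lemma:pushforward}.
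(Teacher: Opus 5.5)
Your proof is correct and follows essentially the same route as the paper. You use the pushforward change of variables for the isometry, Lusin--Souslin for the Borel inverse on $\mathbf g(\mathcal S)$, surjectivity via $\tilde{\mathbf h}\circ\mathbf g^{-1}$ extended off the image, and injectivity via the inverse or the isometry. You are also more careful than the paper on two points it glosses over: you check that $C_{\mathbf g}$ is well defined on a.e.-equivalence classes, and you check that composition preserves the whitener, so $\tilde{\mathcal F}_{\mathcal X}$ corresponds exactly to $\tilde{\mathcal F}_{\mathcal S}$ read as the image class (as in \autoref{lemma:rep-inv}).
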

\begin{proof} \ 

\textbf{1. Isometry.}

Since $\forall \tilde{\mathbf f}\in \tilde{F}_{\mathcal X}$,
\[
\| \tilde{\mathbf f}\circ \mathbf g\|_{L^2(P_{\mathbf s})}^2
=\int_{\mathcal S}\| (\tilde{\mathbf f} \circ \mathbf g)(\mathbf s)\|^2\,dP_{\mathbf s}(\mathbf s)
=\int_{\mathcal X}\| \tilde{\mathbf f}(\mathbf x)\|^2\,d(\mathbf g_\# P_{\mathbf s})(\mathbf x)
=\|\tilde{\mathbf f}\|_{L^2(P_{\mathbf x})}^2,
\]
so $C_{\mathbf g}$ is a linear isometry. The argument for $C_{\mathbf g'}$ is identical.

\textbf{2. Measurable inverses on images.}

Since $\mathcal S,\mathcal X$ are standard Borel and $\mathbf g$ is Borel and injective, 
the Lusin-Souslin theorem implies $\mathbf g(\mathcal S)$ is Borel in $\mathcal X$ and 
$\mathbf g^{-1}:\mathbf g(\mathcal S)\to \mathcal S$ is Borel. The same holds for $\mathbf g'$.

\textbf{3. Surjectivity.}

Let $\tilde{\mathbf h}\in\tilde{\mathcal F}_{\mathcal S}$. Define 
$\tilde{\mathbf f}:=\tilde{\mathbf h}\!\circ \mathbf g^{-1}$ on $\mathbf g(\mathcal S)$ and extend arbitrarily to a Borel function on $\mathcal X$.
Then, by Step 2, $\tilde{\mathbf f}\in\tilde{\mathcal F}_{\mathcal X}$ and
$(\tilde{\mathbf f}\!\circ \mathbf g)(s)=\tilde{\mathbf h}(s)$.
The same construction gives the primed component. Thus $\Psi$ is surjective

\textbf{4. Injectivity.}

If $\Phi(\tilde{\mathbf f}_1,\tilde{\mathbf f}_1')=\Phi(\tilde{\mathbf f}_2,\tilde{\mathbf f}_2')$, then 
$\tilde{\mathbf f}_1\!\circ \mathbf g=\tilde{\mathbf f}_2\!\circ \mathbf g$, hence
$\tilde{\mathbf f}_1=\tilde{\mathbf f}_2$. by applying $\mathbf g^{-1}$ on $\mathbf g(\mathcal S)$; similarly for the primed side.
Therefore $\Psi$ is injective.

Combining the steps yields the stated bijection and properties.
\end{proof}
\begin{lemma}[Continuity of the Whitening Map]
\label{lemma:whitening-continuity}
Let $\{\,\mathbf f_n\,\}_{n\ge1}\subset L^2(P_{\mathbf x};\mathbb R^{d_\mathcal{Z}})$ be a sequence of
square-integrable vector-valued functions such that $\mathbf f_n\to\mathbf f$ in $L^2(P_{\mathbf x})$.
Denote
\[
\bm\mu_n := \mathbb E[\mathbf f_n(\mathbf x)], \qquad
\bm\mu := \mathbb E[\mathbf f(\mathbf x)], \qquad
\mathbf\Sigma_{\mathbf f_n} := \mathrm{Cov}(\mathbf f_n(\mathbf x)), \qquad
\mathbf\Sigma_{\mathbf f} := \mathrm{Cov}(\mathbf f(\mathbf x)).
\]
Assume $\mathbf\Sigma_{\mathbf f}\succ 0$, and let
\[
\mathbf W_{\mathbf f_n} := \mathbf\Sigma_{\mathbf f_n}^{-1/2}, \qquad
\mathbf W_{\mathbf f} := \mathbf\Sigma_{\mathbf f}^{-1/2}.
\]
Then, for all $n$ sufficiently large,
$\mathbf\Sigma_{\mathbf f_n}\succ 0$ and
\begin{equation}\label{equ:whitened-convergence}
\mathbf W_{\mathbf f_n}\big(\mathbf f_n - \bm\mu_n\big)
\;\xrightarrow[n\to\infty]{L^2}\;
\mathbf W_{\mathbf f}\big(\mathbf f - \bm\mu\big).
\end{equation}
Moreover, if $\mathbf U_{\mathbf f_n}\in\mathrm{GL}(d_\mathcal{Z})$
are any whiteners satisfying $\mathbf U_{\mathbf f_n}\mathbf\Sigma_{\mathbf f_n}\mathbf U_{\mathbf f_n}^\top = \mathbf I_{d_\mathcal{Z}}$,
then there exist orthogonal matrices $\mathbf Q_n\in O(d_\mathcal{Z})$
such that
\[
\mathbf Q_n\,\mathbf U_{\mathbf f_n}(\mathbf f_n-\bm\mu_n)
\;\xrightarrow[n\to\infty]{L^2}\;
\mathbf W_{\mathbf f}(\mathbf f-\bm\mu).
\]
\end{lemma}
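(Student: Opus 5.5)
The plan is to prove the lemma in three steps: convergence of the first two moments, continuity of the matrix inverse square root on the positive definite cone, and an operator-norm bound that combines them; the arbitrary whiteners are then handled by polar decomposition.

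\textbf{Moments.} From $\mathbf f_n\to\mathbf f$ in $L^2(P_{\mathbf x})$ we get $\bm\mu_n\to\bm\mu$, since
\[
\|\bm\mu_n-\bm\mu\|\le\mathbb E\|\mathbf f_n-\mathbf f\|\le\|\mathbf f_n-\mathbf f\|_{L^2}
\]
by Jensen. Next write $\mathbb E[\mathbf f_n\mathbf f_n^\top]-\mathbb E[\mathbf f\mathbf f^\top]=\mathbb E[(\mathbf f_n-\mathbf f)\mathbf f_n^\top]+\mathbb E[\mathbf f(\mathbf f_n-\mathbf f)^\top]$. Cauchy--Schwarz bounds this by $\|\mathbf f_n-\mathbf f\|_{L^2}(\|\mathbf f_n\|_{L^2}+\|\mathbf f\|_{L^2})\to0$, because $\|\mathbf f_n\|_{L^2}$ is bounded. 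Together with the mean convergence, this gives $\bm\Sigma_{\mathbf f_n}\to\bm\Sigma_{\mathbf f}$ in operator norm.

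\textbf{Eventual positive definiteness and convergence of $\mathbf W_{\mathbf f_n}$.} Weyl's inequality gives $\lambda_{\min}(\bm\Sigma_{\mathbf f_n})\ge\lambda_{\min}(\bm\Sigma_{\mathbf f})-\|\bm\Sigma_{\mathbf f_n}-\bm\Sigma_{\mathbf f}\|_2$. For large $n$ this is at least $\lambda_{\min}(\bm\Sigma_{\mathbf f})/2>0$, so $\bm\Sigma_{\mathbf f_n}\succ0$ eventually. The map $\bm\Sigma\mapsto\bm\Sigma^{-1/2}$ is continuous on the open cone of positive definite matrices. One way to see this is that it is $F(\bm\Sigma)$ for $F(x)=x^{-1/2}$, which is smooth on a compact interval bounded away from $0$; hence $F$ is operator-Lipschitz there. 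Alternatively, compose matrix inversion with the principal square root, using $\|\mathbf A^{1/2}-\mathbf B^{1/2}\|\le\|\mathbf A-\mathbf B\|/(\lambda_{\min}(\mathbf A)^{1/2}+\lambda_{\min}(\mathbf B)^{1/2})$. Either way, $\mathbf W_{\mathbf f_n}\to\mathbf W_{\mathbf f}$.

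\textbf{$L^2$ convergence of the whitened maps.} Decompose
\[
\mathbf W_{\mathbf f_n}(\mathbf f_n-\bm\mu_n)-\mathbf W_{\mathbf f}(\mathbf f-\bm\mu)=(\mathbf W_{\mathbf f_n}-\mathbf W_{\mathbf f})(\mathbf f_n-\bm\mu_n)+\mathbf W_{\mathbf f}\big((\mathbf f_n-\mathbf f)-(\bm\mu_n-\bm\mu)\big).
\]
The first term has $L^2$ norm at most $\|\mathbf W_{\mathbf f_n}-\mathbf W_{\mathbf f}\|_2\,\|\mathbf f_n-\bm\mu_n\|_{L^2}$. This tends to $0$ because $\|\mathbf f_n-\bm\mu_n\|_{L^2}$ stays bounded. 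The second term is at most $\|\mathbf W_{\mathbf f}\|_2\,(\|\mathbf f_n-\mathbf f\|_{L^2}+\|\bm\mu_n-\bm\mu\|)\to0$. This proves \eqref{equ:whitened-convergence}.

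\textbf{Arbitrary whiteners.} Suppose $\mathbf U_{\mathbf f_n}\bm\Sigma_{\mathbf f_n}\mathbf U_{\mathbf f_n}^\top=\mathbf I$. Set $\mathbf M_n:=\mathbf U_{\mathbf f_n}\bm\Sigma_{\mathbf f_n}^{1/2}$; then $\mathbf M_n\mathbf M_n^\top=\mathbf I$, so $\mathbf M_n$ is orthogonal and $\mathbf U_{\mathbf f_n}=\mathbf M_n\mathbf W_{\mathbf f_n}$. Choosing $\mathbf Q_n:=\mathbf M_n^\top\in O(d_{\mathcal Z})$ gives $\mathbf Q_n\mathbf U_{\mathbf f_n}(\mathbf f_n-\bm\mu_n)=\mathbf W_{\mathbf f_n}(\mathbf f_n-\bm\mu_n)$. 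The claim then reduces to the previous step.

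The only mildly delicate point is the continuity of the inverse square root, which needs the uniform spectral lower bound from Weyl's inequality; everything else is Cauchy--Schwarz bookkeeping.
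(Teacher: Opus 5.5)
Your proposal is correct and follows essentially the same route as the paper: moment convergence, Weyl's inequality for eventual positive definiteness, continuity of $\bm\Sigma\mapsto\bm\Sigma^{-1/2}$, an $L^2$ splitting of the difference, and polar decomposition for general whiteners. The differences are minor: the paper proves continuity by Weierstrass polynomial approximation where you use an operator-Lipschitz or Sylvester-type bound, and your choice $\mathbf Q_n:=\mathbf M_n^\top$ is the correct one. The paper's write-up sets $\mathbf Q_n:=\mathbf U_{\mathbf f_n}\bm\Sigma_{\mathbf f_n}^{1/2}$ and then asserts $\mathbf Q_n\mathbf U_{\mathbf f_n}=\bm\Sigma_{\mathbf f_n}^{-1/2}$, which is only true with the transpose you supplied.
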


\begin{proof} \ 

\textbf{1. Convergence of moments.}

Since $\mathbf f_n \to \mathbf f$ in $L^2(P_{\mathbf x})$, we have
$\|\mathbf f_n - \mathbf f\|_{L^2}^2
   = \mathbb E[\|\mathbf f_n - \mathbf f\|^2] \to 0$.
By the Cauchy-Schwarz inequality,
\[
\|\bm\mu_n - \bm\mu\|
= \|\mathbb E[\mathbf f_n - \mathbf f]\|
\le \mathbb E[\|\mathbf f_n - \mathbf f\|]
\le \|\mathbf f_n - \mathbf f\|_{L^2}
\;\longrightarrow\; 0.
\]

For the second moment matrices,
each entry of $\mathbb E[\mathbf f_n\mathbf f_n^\top]$ converges to the corresponding entry of
$\mathbb E[\mathbf f\mathbf f^\top]$ because
\[
\|\mathbf f_n\mathbf f_n^\top - \mathbf f\mathbf f^\top\|_{\mathrm{F}}
\;\le\;
(\|\mathbf f_n\|+\|\mathbf f\|)\,\|\mathbf f_n-\mathbf f\|,
\]
and taking expectations yields
\[
\|\mathbb E[\mathbf f_n\mathbf f_n^\top] - \mathbb E[\mathbf f\mathbf f^\top]\|
\le
\mathbb E[\|\mathbf f_n\mathbf f_n^\top - \mathbf f\mathbf f^\top\|]
\le
(\|\mathbf f_n\|_{L^2}+\|\mathbf f\|_{L^2})\,\|\mathbf f_n-\mathbf f\|_{L^2}
\;\longrightarrow\; 0,
\]
so $\mathbb E[\mathbf f_n\mathbf f_n^\top]\to\mathbb E[\mathbf f\mathbf f^\top]$ in operator norm.

Combining the above,
\[
\mathbf\Sigma_{\mathbf f_n}
=\mathbb E[(\mathbf f_n-\bm\mu_n)(\mathbf f_n-\bm\mu_n)^\top]
=\mathbb E[\mathbf f_n\mathbf f_n^\top]
   - \bm\mu_n\bm\mu_n^\top
\;\xrightarrow[n\to\infty]{\|\cdot\|}\;
\mathbb E[\mathbf f\mathbf f^\top] - \bm\mu\bm\mu^\top
=\mathbf\Sigma_{\mathbf f}.
\]

\textbf{2. Positive definiteness and bounded whitening operators.}

Since $\mathbf\Sigma_{\mathbf f}\succ0$, all its eigenvalues are positive.
Let $\sigma_{\min}>0$ denote its smallest eigenvalue, i.e.,
\[
\sigma_{\min} := \sigma_{\min}(\mathbf\Sigma_{\mathbf f})
\quad\text{so that}\quad
\mathbf\Sigma_{\mathbf f}\succeq \sigma_{\min}\mathbf I.
\]
Because $\mathbf\Sigma_{\mathbf f_n}\to\mathbf\Sigma_{\mathbf f}$ in operator norm,
we have
\[
\|\mathbf\Sigma_{\mathbf f_n}-\mathbf\Sigma_{\mathbf f}\|
\;\xrightarrow[n\to\infty]{}\;0.
\]
By Weyl's eigenvalue perturbation inequality,
\[
\forall i \in [1,\cdots,d_\mathcal{Z}],\quad |\sigma_i(\mathbf\Sigma_{\mathbf f_n})-\sigma_i(\mathbf\Sigma_{\mathbf f})|
\le \|\mathbf\Sigma_{\mathbf f_n}-\mathbf\Sigma_{\mathbf f}\|.
\]
Hence for sufficiently large $n$,
\[
\sigma_{\min}(\mathbf\Sigma_{\mathbf f_n})
\ge \sigma_{\min}(\mathbf\Sigma_{\mathbf f})-\|\mathbf\Sigma_{\mathbf f_n}-\mathbf\Sigma_{\mathbf f}\|
\ge \tfrac{\sigma_{\min}}{2}>0.
\]
Therefore each $\mathbf\Sigma_{\mathbf f_n}$ is symmetric positive definite and satisfies
\[
\mathbf\Sigma_{\mathbf f_n}\succeq \tfrac{\sigma_{\min}}{2}\mathbf I.
\]
Taking inverse square roots preserves the Löwner order on the SPD cone, so
\[
\|\mathbf W_{\mathbf f_n}\|
=\|\mathbf\Sigma_{\mathbf f_n}^{-1/2}\|
=\big(\sigma_{\min}(\mathbf\Sigma_{\mathbf f_n})\big)^{-1/2}
\le (\sigma_{\min}/2)^{-1/2}.
\]
Thus the sequence $\{\mathbf W_{\mathbf f_n}\}$ is uniformly bounded in operator norm.

\textbf{3. Continuity of matrix square roots.}

Since $\mathbf\Sigma_{\mathbf f}\succ0$ and 
$\mathbf\Sigma_{\mathbf f_n}\to \mathbf\Sigma_{\mathbf f}$ in operator norm, 
there exist constants
\[
m := \tfrac{\sigma_{\min}(\mathbf\Sigma_{\mathbf f})}{2} > 0,
\qquad 
M := \|\mathbf\Sigma_{\mathbf f}\| + 1,
\]
and an integer $N$ such that for all $n\ge N$,
the spectra satisfy $\sigma(\mathbf\Sigma_{\mathbf f_n}),\,\sigma(\mathbf\Sigma_{\mathbf f}) \subset [m,M]$.

Fix $\varepsilon>0$.  
By the Weierstrass approximation theorem, there exists a polynomial $p$ such that
\[
\sup_{\sigma\in[m,M]}\big|\sigma^{-1/2}-p(\sigma)\big| < \varepsilon/3.
\]
Using the continuous functional calculus for self-adjoint matrices, we have
\[
\big\|\mathbf\Sigma_{\mathbf f_n}^{-1/2}-\mathbf\Sigma_{\mathbf f}^{-1/2}\big\|
\;\le\;
\big\|\mathbf\Sigma_{\mathbf f_n}^{-1/2}-p(\mathbf\Sigma_{\mathbf f_n})\big\|
+\big\|p(\mathbf\Sigma_{\mathbf f_n})-p(\mathbf\Sigma_{\mathbf f})\big\|
+\big\|p(\mathbf\Sigma_{\mathbf f})-\mathbf\Sigma_{\mathbf f}^{-1/2}\big\|.
\]
The first and third terms are each bounded by 
$\sup_{\sigma\in[m,M]}|\sigma^{-1/2}-p(\sigma)|<\varepsilon/3$.
For the middle term, note that $p(A)=\sum_k a_k A^k$ is a finite polynomial,
hence norm-continuous; since $\mathbf\Sigma_{\mathbf f_n}\to\mathbf\Sigma_{\mathbf f}$,
we have $\|p(\mathbf\Sigma_{\mathbf f_n})-p(\mathbf\Sigma_{\mathbf f})\|<\varepsilon/3$ for all large $n$.
Therefore $\|\mathbf\Sigma_{\mathbf f_n}^{-1/2}-\mathbf\Sigma_{\mathbf f}^{-1/2}\|<\varepsilon$ for $n$ sufficiently large, i.e.
\[
\mathbf W_{\mathbf f_n}=\mathbf\Sigma_{\mathbf f_n}^{-1/2}
\;\xrightarrow[n\to\infty]{\|\cdot\|}
\mathbf\Sigma_{\mathbf f}^{-1/2}=\mathbf W_{\mathbf f}.
\]


\textbf{4. $L^2$-continuity of whitening.}

Define the difference
\[
\bm\Delta_n
:= \mathbf W_{\mathbf f_n}(\mathbf f_n-\bm\mu_n)
   - \mathbf W_{\mathbf f}(\mathbf f-\bm\mu)
   = (\mathbf W_{\mathbf f_n}-\mathbf W_{\mathbf f})(\mathbf f-\bm\mu)
     + \mathbf W_{\mathbf f_n}\big[(\mathbf f_n-\bm\mu_n)-(\mathbf f-\bm\mu)\big].
\]
Taking $L^2$ norms and applying Cauchy-Schwarz,
\[
\|\bm\Delta_n\|_{L^2}
\le \|\mathbf W_{\mathbf f_n}-\mathbf W_{\mathbf f}\|\,\|\mathbf f-\bm\mu\|_{L^2}
   + \|\mathbf W_{\mathbf f_n}\|\,\|(\mathbf f_n-\bm\mu_n)-(\mathbf f-\bm\mu)\|_{L^2}.
\]
The first term tends to $0$ because $\mathbf W_{\mathbf f_n}\to\mathbf W_{\mathbf f}$
and $\|\mathbf f-\bm\mu\|_{L^2}<\infty$.
For the second term, $\|\mathbf W_{\mathbf f_n}\|$ is uniformly bounded and
\[
\|(\mathbf f_n-\bm\mu_n)-(\mathbf f-\bm\mu)\|_{L^2}
\le \|\mathbf f_n-\mathbf f\|_{L^2} + \|\bm\mu_n-\bm\mu\| \;\to\; 0.
\]
Thus $\|\bm\Delta_n\|_{L^2}\to0$, proving \autoref{equ:whitened-convergence}.

\textbf{5. Orthogonal alignment for general whiteners.}

Let $\mathbf U_{\mathbf f_n}\in\mathrm{GL}(d_\mathcal{Z})$ satisfy
$\mathbf U_{\mathbf f_n}\mathbf\Sigma_{\mathbf f_n}\mathbf U_{\mathbf f_n}^\top = \mathbf I_{d_\mathcal{Z}}$.
Using the polar decomposition,
\[
\mathbf U_{\mathbf f_n}\mathbf\Sigma_{\mathbf f_n}^{1/2}
= \mathbf Q_n,\qquad \mathbf Q_n\in O(d_\mathcal{Z}),
\]
which implies $\mathbf Q_n\mathbf U_{\mathbf f_n}
=\mathbf\Sigma_{\mathbf f_n}^{-1/2}=\mathbf W_{\mathbf f_n}$.
Therefore,
\[
\mathbf Q_n\,\mathbf U_{\mathbf f_n}(\mathbf f_n-\bm\mu_n)
= \mathbf W_{\mathbf f_n}(\mathbf f_n-\bm\mu_n)
\;\xrightarrow[n\to\infty]{L^2}\;
\mathbf W_{\mathbf f}(\mathbf f-\bm\mu)
\]
by the first part, completing the proof.
\end{proof}
\begin{lemma}[Reparameterization Invariance and Representational Universality of CCA]
\label{lemma:rep-inv}
Let $\mathcal S,\mathcal X,\mathcal X'$ be standard Borel spaces and $\mathbf{g}, \mathbf{g}'$ be Borel-measurable and injective. In addition, let $\tilde{\mathcal F}_{\mathcal{X}},\tilde{\mathcal F}'_{\mathcal{X}'}$ be the whitened
encoder classes of \autoref{definition:whitend_encoder} and assume that the base encoders $\mathbf{f}, \mathbf{f}'$ underlying $\tilde{\mathcal F}_{\mathcal X},\tilde{\mathcal F}'_{\mathcal{X}'}$
are dense in $L^2(P_{\mathbf{x}};\mathbb R^{d_{\mathcal{Z}}})$ and $L^2(P_{\mathbf{x}'}; \mathbb R^{d_{\mathcal{Z}}})$, respectively.
Define whitened representable latent classes:
\[
\tilde{\mathcal F}_{\mathcal{S}}:=\{\tilde{\mathbf f}\!\circ \mathbf{g}:\ \tilde{\mathbf f}\in\tilde{\mathcal F}_{\mathcal{X}}\},
\qquad
\tilde{\mathcal F}'_{\mathcal{S}}:=\{\tilde{\mathbf f}'\!\circ \mathbf{g}':\ \tilde{\mathbf f}'\in\tilde{\mathcal F}'_{\mathcal{X}'}\},
\]
and the whitened feasible latent classes:
\begin{align*}
    \hat{\mathcal F}_{\mathcal S} := \{\, \hat{\mathbf h} \in L^2(P_{\mathbf s};d_\mathcal Z): 
\mathbb E[\hat{\mathbf h}(\mathbf s)]=0,\ \mathrm{Cov}(\hat{\mathbf h}(\mathbf s))=\mathbf I_{d_\mathcal Z} \,\},\\
\hat{\mathcal F'_\mathcal S} := \{\, \hat{\mathbf h} \in L^2(P_{\mathbf s'};d_\mathcal Z): 
\mathbb E[\hat{\mathbf h}(\mathbf s')]=0,\ \mathrm{Cov}(\hat{\mathbf h}(\mathbf s'))=\mathbf I_{d_\mathcal Z} \,\}.
\end{align*}

Further define the population CCA objective with respect to whitened feasible latent classes $\hat{\mathcal F}_\mathcal S,\hat{\mathcal F'_\mathcal S}$ on their domains $\mathcal{S}\times\mathcal{S}$:
\begin{equation}
\label{equ:CCA_objective_source}
J_\mathcal{S}(\hat{\mathbf h}, \hat{\mathbf h'})=\sum_{i=1}^{d_\mathcal{Z}}
\sigma_i\big(\mathrm{Cov}(\hat{\mathbf h}(\mathbf{s}),\,\hat{\mathbf h'}(\mathbf{s'}))\big),
\end{equation}
where $\sigma_i(\cdot)$ denotes the $i$-th singular value. Then the following properties hold.

\begin{enumerate}[label=\arabic*., ref=\arabic*]
\item \textbf{Objective Preservation.}
$\forall (\tilde{\mathbf f},\tilde{\mathbf f}')\in \tilde{\mathcal{F}}_{\mathcal{X}}\times\tilde{\mathcal{F}}'_{\mathcal{X}'}$,
\(
J(\tilde{\mathbf f},\tilde{\mathbf f}')\;=\;
J_\mathcal S\big(\tilde{\mathbf f}\!\circ \mathbf{g},\ \tilde{\mathbf f}'\!\circ \mathbf{g}'\big).
\)

\item \textbf{Maximizer Correspondence.}\label{item:maximizer_correspondence}
If $(\tilde{\mathbf{f}}^*, \tilde{\mathbf{f}'}^*)$ is a maximizer of $J$ in 
\autoref{equ:CCA_objective} over 
$ \tilde{\mathcal{F}}_\mathcal{X}\times \tilde{\mathcal{F}}_{\mathcal{X}'}$, 
then its composition 
$(\tilde{\mathbf{f}}\circ \mathbf{g}, \tilde{\mathbf{f}'}\circ \mathbf{g}')$ 
is also a maximizer of $J_\mathcal{S}$ in \autoref{equ:CCA_objective_source} 
over $ \tilde{\mathcal{F}}_\mathcal{S}\times \tilde{\mathcal{F}}_{\mathcal{S}'}$, 
and conversely.

\item \textbf{Representation Universality.} For any $\hat{\mathbf h}\in\hat{\mathcal F}_{\mathcal S}$, $\hat{\mathbf h}'\in\hat{\mathcal F}'_{\mathcal S}$, and any $\epsilon,\epsilon'>0$, 
there exist $\tilde{\mathbf h}\in\tilde{\mathcal F}_{\mathcal S}$ and $\tilde{\mathbf h}'\in\tilde{\mathcal F}'_{\mathcal S}$ such that
\begin{align*}
    \mathbb E\!\left[\|\hat{\mathbf h}(\mathbf s)-\tilde{\mathbf h}(\mathbf s)\|^2\right]& < \epsilon^2 \quad \text{and} \quad
\mathbb E\!\left[\|\hat{\mathbf h}'(\mathbf s')-\tilde{\mathbf h}'(\mathbf s')\|^2\right] < {\epsilon'}^2.
\end{align*}
\end{enumerate}
\end{lemma}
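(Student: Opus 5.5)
We prove the three items of \autoref{lemma:rep-inv} in order, relying on \autoref{lemma:pushforward}, \autoref{lemma:comp-bijection} and \autoref{lemma:whitening-continuity}.

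\textbf{Objective preservation.} Fix $(\tilde{\mathbf f},\tilde{\mathbf f}')\in\tilde{\mathcal F}_{\mathcal X}\times\tilde{\mathcal F}'_{\mathcal X'}$. Item~2 of \autoref{lemma:pushforward} gives
\[
\mathrm{Cov}(\tilde{\mathbf f}(\mathbf x),\tilde{\mathbf f}'(\mathbf x'))=\mathrm{Cov}(\tilde{\mathbf f}\circ\mathbf g(\mathbf s),\tilde{\mathbf f}'\circ\mathbf g'(\mathbf s')).
\]
The two cross-covariance matrices are therefore identical, so they have the same singular values. Summing these gives $J=J_{\mathcal S}$. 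Item~3 of \autoref{lemma:pushforward} guarantees that the compositions are still whitened, so $J_{\mathcal S}$ is evaluated on admissible arguments.

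\textbf{Maximizer correspondence.} By \autoref{lemma:comp-bijection}, the map $\Psi$ is a bijection from $\tilde{\mathcal F}_{\mathcal X}\times\tilde{\mathcal F}'_{\mathcal X'}$ onto $\tilde{\mathcal F}_{\mathcal S}\times\tilde{\mathcal F}'_{\mathcal S}$, and item~1 gives $J=J_{\mathcal S}\circ\Psi$. Hence
\[
\sup J=\sup J_{\mathcal S},
\]
and $(\tilde{\mathbf f}^*,\tilde{\mathbf f}'^*)$ attains the left supremum if and only if $\Psi(\tilde{\mathbf f}^*,\tilde{\mathbf f}'^*)$ attains the right one. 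For the converse direction we apply $\Psi^{-1}$. Since $\Psi$ commutes with left multiplication by orthogonal matrices, the post-orthogonal closure of the classes is respected.

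\textbf{Representation universality.} This is the step that needs care. Take $\hat{\mathbf h}\in\hat{\mathcal F}_{\mathcal S}$ and set $\mathbf f_\star:=\hat{\mathbf h}\circ\mathbf g^{-1}$ on $\mathbf g(\mathcal S)$, extended Borel-measurably; this is possible by the Lusin--Souslin step in \autoref{lemma:comp-bijection}.
\begin{enumerate}[label=(\roman*)]
\item By the isometry $C_{\mathbf g}$, $\mathbf f_\star\in L^2(P_{\mathbf x})$, and by \autoref{lemma:pushforward} it is centered with identity covariance under $P_{\mathbf x}$.
\item Density of the base encoders gives $\mathbf f_n\in\mathcal H_{\mathcal X}$ with $\mathbf f_n\to\mathbf f_\star$ in $L^2(P_{\mathbf x})$.
\item Because $\mathrm{Cov}(\mathbf f_\star)=\mathbf I\succ0$, \autoref{lemma:whitening-continuity} applies. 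For large $n$ we get $\mathbf\Sigma_{\mathbf f_n}\succ0$ and, for any whiteners $\mathbf U_{\mathbf f_n}$, orthogonal matrices $\mathbf Q_n$ with
\[
\mathbf Q_n\mathbf U_{\mathbf f_n}(\mathbf f_n-\bm\mu_n)\to\mathbf W_{\mathbf f_\star}(\mathbf f_\star-\mathbf 0)=\mathbf f_\star .
\]
\item By post-orthogonal closure, $\tilde{\mathbf f}_n:=\mathbf Q_n\mathbf U_{\mathbf f_n}(\mathbf f_n-\bm\mu_n)$ lies in $\tilde{\mathcal F}_{\mathcal X}$.
\item Setting $\tilde{\mathbf h}_n:=\tilde{\mathbf f}_n\circ\mathbf g\in\tilde{\mathcal F}_{\mathcal S}$, the isometry $C_{\mathbf g}$ gives
\[
\mathbb E\|\hat{\mathbf h}-\tilde{\mathbf h}_n\|^2=\|\mathbf f_\star-\tilde{\mathbf f}_n\|^2_{L^2(P_{\mathbf x})}\to0 .
\]
\end{enumerate}
Choosing $n$ large makes this error smaller than $\epsilon^2$. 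The primed view is handled identically, with $\epsilon'$ in place of $\epsilon$.

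The main obstacle is step (iii): whitening is nonlinear, so $L^2$-density of the raw encoders does not by itself give density of the whitened ones. The continuity lemma resolves this because the target covariance is the identity, which keeps $\mathbf\Sigma_{\mathbf f_n}$ uniformly positive definite for large $n$.
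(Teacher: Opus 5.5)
Your proof is correct and follows the paper's argument essentially step for step: objective preservation via the pushforward covariance identities, maximizer correspondence via the bijection $\Psi$ of \autoref{lemma:comp-bijection}, and universality by pulling $\hat{\mathbf h}$ back through $\mathbf g^{-1}$, approximating with base encoders, passing to the limit with \autoref{lemma:whitening-continuity}, and pushing forward with the composition isometry. The only cosmetic difference is that the paper whitens the approximants directly with $\mathbf\Sigma_{\mathbf f_n}^{-1/2}$, whereas you use an arbitrary whitener together with the orthogonal alignment $\mathbf Q_n$ and post-orthogonal closure to certify membership in $\tilde{\mathcal F}_{\mathcal X}$; this is a slightly more careful treatment of the ``choose any $\mathbf W_{\mathbf f}$'' convention in the class definition.
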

\begin{proof}
We work in $L^2$, identifying functions that agree $P$-a.s. Let $J$ be the CCA objective on $\mathcal X\times\mathcal X'$
from \autoref{equ:CCA_objective} and $J_\mathcal S$ the latent CCA objective on $\mathcal S\times\mathcal S'$
from \autoref{equ:CCA_objective_source}.

\medskip\noindent
\textbf{1. Objective preservation.}

By the whitening preservation of \autoref{lemma:pushforward}.3, the two cross-covariance
matrices $\mathrm{Cov}(\tilde{\mathbf f},\tilde{\mathbf f}'), \mathrm{Cov}(\tilde{\mathbf f}\!\circ \mathbf g,\ \tilde{\mathbf f}'\!\circ \mathbf g')$ coincide, hence so do their sum of the singular values:
\[
J(\tilde{\mathbf f},\tilde{\mathbf f}') \;=\; J_\mathcal S(\tilde{\mathbf f}\!\circ \mathbf g,\ \tilde{\mathbf f}'\!\circ \mathbf g').
\]

\medskip\noindent
\textbf{2. Maximizer correspondence (between representable classes).}

Since the composition map
\[
\Psi:\ \tilde{\mathcal F}_{\mathcal X}\times \tilde{\mathcal F}'_{\mathcal X'} \to
\tilde{\mathcal F}_{\mathcal S}\times \tilde{\mathcal F}'_{\mathcal S},\qquad
\Psi(\tilde{\mathbf f},\tilde{\mathbf f}')=(\tilde{\mathbf f}\!\circ \mathbf g,\ \tilde{\mathbf f}'\!\circ \mathbf g').
\] is bijective, given by \autoref{lemma:comp-bijection}.2, and the objective preservation in \autoref{lemma:rep-inv}.1 implies 
\[
\Phi\!\left(\operatorname*{argmax}_{\tilde{\mathcal F}_{\mathcal X}\times \tilde{\mathcal F}'_{\mathcal X'}} J\right)
\;=\;
\operatorname*{argmax}_{\tilde{\mathcal F}_{\mathcal S}\times \tilde{\mathcal F}'_{\mathcal S}} J_\mathcal S.
\]
If $(\tilde{\mathbf f}^*,\tilde{\mathbf f}'^{*})$ is a maximizer on
$\tilde{\mathcal F}_{\mathcal X}\times \tilde{\mathcal F}'_{\mathcal X'}$, then
$(\tilde{\mathbf f}^*\!\circ \mathbf g,\tilde{\mathbf f}'^{*}\!\circ \mathbf g')$ is also a maximizer on
$\tilde{\mathcal F}_{\mathcal S}\times \tilde{\mathcal F}'_{\mathcal S}$, and conversely.

\paragraph{3. Representation Universality.}\ 

Fix $\hat{\mathbf h}\in\hat{\mathcal F}_{\mathcal S}$ and $\epsilon>0$. 
We show there exists $\tilde{\mathbf h}\in\tilde{\mathcal F}_{\mathcal S}$ with 
$\mathbb E\!\left[\|\hat{\mathbf h}(\mathbf s)-\tilde{\mathbf h}(\mathbf s)\|^2\right]<\epsilon^2$.
(The primed side is identical.)

\medskip\noindent
\textit{(i) Pull back $\hat{\mathbf h}$ to the observation domain.}
By \autoref{lemma:comp-bijection}.2 (measurable inverses on images), the Borel inverse
$\mathbf g^{-1}:\mathbf g(\mathcal S)\to\mathcal S$ exists. Define on $\mathbf g(\mathcal S)$
\[
\hat{\mathbf h}_{\mathcal X}:=\hat{\mathbf h}\circ \mathbf g^{-1},
\]
and extend it arbitrarily to a Borel function on $\mathcal X$ (the extension does not affect $L^2(P_{\mathbf x})$ since $P_{\mathbf x}$ is supported on $\mathbf g(\mathcal S)$).
By \autoref{lemma:pushforward}.2-3 (expectation/covariance preservation and whitening preservation),
$\hat{\mathbf h}\in\hat{\mathcal F}_{\mathcal S}$ implies 
\[
\hat{\mathbf h}_{\mathcal X}\in \hat{\mathcal F}_{\mathcal X}
:=\big\{\mathbf u\in L^2(P_{\mathbf x};\mathbb R^{d_{\mathcal Z}}):\ 
\mathbb E[\mathbf u(\mathbf x)]=\mathbf 0,\ \mathrm{Cov}(\mathbf u(\mathbf x))=\mathbf I\big\}.
\]

\medskip\noindent
\textit{(ii) Approximate in $L^2(P_{\mathbf x})$ by base encoders.}
By the density assumption in \autoref{definition:whitend_encoder}, there exists a sequence
$\{\mathbf f_n\}_{n\ge1}\subset\mathcal H_{\mathcal X}$ such that
\[
\|\mathbf f_n-\hat{\mathbf h}_{\mathcal X}\|_{L^2(P_{\mathbf x})}\xrightarrow[n\to\infty]{}0.
\]
Write $\bm\mu_n=\mathbb E[\mathbf f_n(\mathbf x)]$ and 
$\mathbf\Sigma_{\mathbf f_n}=\mathrm{Cov}(\mathbf f_n(\mathbf x))$.
Since $\hat{\mathbf h}_{\mathcal X}\in L^2$ and $\mathbf f_n\to\hat{\mathbf h}_{\mathcal X}$ in $L^2$,
\autoref{lemma:whitening-continuity}.1 implies $\bm\mu_n\to\mathbf 0$ and 
$\mathbf\Sigma_{\mathbf f_n}\to\mathbf I$ in operator norm; in particular,
$\mathbf\Sigma_{\mathbf f_n}\succ0$ for all large $n$.

\medskip\noindent
\textit{(iii) Whiten the approximants and pass limits through whitening.}
Let $\mathbf W_{\mathbf f_n}=\mathbf\Sigma_{\mathbf f_n}^{-1/2}$ and define the whitened encoders
\[
\tilde{\mathbf f}_n:=\mathbf W_{\mathbf f_n}\big(\mathbf f_n-\bm\mu_n\big)\in\tilde{\mathcal F}_{\mathcal X}.
\]
By \autoref{lemma:whitening-continuity}.4,
\[
\tilde{\mathbf f}_n \xrightarrow[n\to\infty]{L^2(P_{\mathbf x})} 
\mathbf\Sigma_{\hat{\mathbf h}_{\mathcal X}}^{-1/2}\big(\hat{\mathbf h}_{\mathcal X}-\mathbb E[\hat{\mathbf h}_{\mathcal X}]\big)
=\hat{\mathbf h}_{\mathcal X},
\]
since $\hat{\mathbf h}_{\mathcal X}$ is already whitened (mean $0$, covariance $\mathbf I$).
Hence there exists $N$ such that $\|\tilde{\mathbf f}_N-\hat{\mathbf h}_{\mathcal X}\|_{L^2(P_{\mathbf x})}<\epsilon$.

\medskip\noindent
\textit{(iv) Push forward to the latent domain and conclude.}
Set $\tilde{\mathbf h}:=\tilde{\mathbf f}_N\circ \mathbf g\in\tilde{\mathcal F}_{\mathcal S}$ by definition of the representable class.
By \autoref{lemma:comp-bijection}.1 (composition isometry),
\[
\|\tilde{\mathbf h}-\hat{\mathbf h}\|_{L^2(P_{\mathbf s})}
= \|\tilde{\mathbf f}_N\circ \mathbf g - \hat{\mathbf h}_{\mathcal X}\circ \mathbf g\|_{L^2(P_{\mathbf s})}
= \|\tilde{\mathbf f}_N-\hat{\mathbf h}_{\mathcal X}\|_{L^2(P_{\mathbf x})}
< \epsilon.
\]
Therefore $\mathbb E[\|\hat{\mathbf h}(\mathbf s)-\tilde{\mathbf h}(\mathbf s)\|^2]<\epsilon^2$.
Since $\epsilon>0$ was arbitrary, the claim follows. The proof on the primed side is identical,
yielding $\tilde{\mathbf h}'\in\tilde{\mathcal F}'_{\mathcal S}$ with
$\mathbb E[\|\hat{\mathbf h}'(\mathbf s')-\tilde{\mathbf h}'(\mathbf s')\|^2]<{\epsilon'}^2$ for any $\epsilon'>0$.
\end{proof}

\suppsubsection{Normalized Hermite-Mehler expansion of joint Gaussian distribution}
\begin{lemma}[Hermite--Mehler expansion of a bivariate Gaussian pair]
\label{lemma:hermite_expansion_bivariate_gaussian}
Let $(S,S')$ be jointly Gaussian with means $\mu_S,\mu_{S'}$, variances
$\sigma_S^2,\sigma_{S'}^2$, and correlation coefficient $t\in(-1,1)$.
Define the standardized coordinates
\[
U:=\frac{S-\mu_S}{\sigma_S},
\qquad
V:=\frac{S'-\mu_{S'}}{\sigma_{S'}}.
\]
Let $He_n$ denote the probabilists' Hermite polynomials and define
\[
\psi_n(z):=\frac{1}{\sqrt{n!}}\,He_n(z),
\qquad n\in\mathbb N_0.
\]
Let
\[
\nu(dz)=\phi(z)\,dz
=
\frac{1}{\sqrt{2\pi}}e^{-z^2/2}\,dz
\]
denote the standard Gaussian measure on $\mathbb R$. Then
$\{\psi_n\}_{n\ge 0}$ is an orthonormal basis of $L^2(\nu)$, namely,
\[
\int_{\mathbb R}\psi_m(z)\psi_n(z)\,\nu(dz)
=
\delta_{mn}.
\]
The joint density of $(U,V)$ admits the Hermite--Mehler expansion
\begin{equation}
\label{eq:Hermite-expansion-standardized-density}
p_{U,V}(u,v)
=
\phi(u)\phi(v)
\sum_{n=0}^{\infty} t^n\,\psi_n(u)\psi_n(v),
\end{equation}
Equivalently,
\begin{equation}
\label{eq:Hermite-expansion-density}
p_{S,S'}(s,s')
=\frac{1}{\sigma_S\sigma_{S'}}p_{U,V}(u,v)
=
\frac{1}{\sigma_S\sigma_{S'}}
\phi(u)\phi(v)
\sum_{n=0}^{\infty} t^n\,\psi_n(u)\psi_n(v).
\end{equation}
Finally, the Hermite polynomials diagonalize the covariance operator of the standardized pair $(U,V)$:
\[
\mathbb E[\psi_m(U)\psi_n(V)]
=
t^n\,\delta_{mn},
\qquad m,n\in\mathbb N_0.
\]
\end{lemma}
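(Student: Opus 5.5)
We reduce to the scalar case $\sigma_S=\sigma_{S'}=1$, $\mu_S=\mu_{S'}=0$. The density identity \eqref{eq:Hermite-expansion-density} then follows from \eqref{eq:Hermite-expansion-standardized-density} by the change of variables $s=\mu_S+\sigma_S u$, $s'=\mu_{S'}+\sigma_{S'}v$, whose Jacobian is $\sigma_S\sigma_{S'}$.

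For orthonormality and completeness of $\{\psi_n\}$ in $L^2(\nu)$, we use the generating function
\[
\sum_{n\ge0}He_n(z)\,\frac{a^n}{n!}=e^{az-a^2/2}.
\]
Multiplying two copies and integrating against $\nu$ gives
\[
\int e^{az-a^2/2}\,e^{bz-b^2/2}\,\nu(dz)=e^{ab}.
\]
Comparing coefficients of $a^mb^n$ yields $\int He_mHe_n\,d\nu=n!\,\delta_{mn}$. Completeness follows from density of polynomials in $L^2(\nu)$. Polynomials are dense because $\nu$ has finite exponential moments: if $f\perp$ all polynomials, then $\int f(z)e^{\mathrm{i}\lambda z}\nu(dz)$ extends to an entire function whose derivatives at $0$ all vanish, so it is identically zero and $f=0$.

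The core step is Mehler's formula. We compute the joint moment generating function of $(U,V)$ in two ways. Directly from the Gaussian law,
\[
\mathbb E\bigl[e^{aU-a^2/2}\,e^{bV-b^2/2}\bigr]=e^{tab}.
\]
Expanding both exponentials in Hermite generating functions gives
\[
\sum_{m,n}\frac{a^mb^n}{m!\,n!}\,\mathbb E[He_m(U)He_n(V)]=\sum_n\frac{(tab)^n}{n!}.
\]
Identifying coefficients gives $\mathbb E[\psi_m(U)\psi_n(V)]=t^n\delta_{mn}$, which is the final claim. Interchanging expectation and summation is justified by dominated convergence, since $\sum_n |He_n(z)|\,|a|^n/n!$ is bounded by an exponential in $|z|$, which is integrable under Gaussian tails.

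To upgrade this to the density expansion, define the kernel
\[
K(u,v)=\sum_n t^n\psi_n(u)\psi_n(v).
\]
Because $|t|<1$, $\sum t^{2n}<\infty$, so the series converges in $L^2(\nu\otimes\nu)$. Its coefficients against the orthonormal basis $\{\psi_m\otimes\psi_n\}$ are $t^n\delta_{mn}$. The likelihood ratio $p_{U,V}/(\phi\otimes\phi)$ lies in $L^2(\nu\otimes\nu)$ when $|t|<1$: a direct Gaussian integral gives $\int p^2/(\phi\phi)<\infty$, and it equals $(1-t^2)^{-1}$. This ratio has the same coefficients $\mathbb E[\psi_m(U)\psi_n(V)]$. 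By completeness of the tensor basis, the two functions agree in $L^2$, which proves \eqref{eq:Hermite-expansion-standardized-density}. Pointwise equality, if desired, follows from the classical closed form
\[
\sum_n\frac{t^n}{n!}He_n(u)He_n(v)=(1-t^2)^{-1/2}\exp\!\Big(\frac{2tuv-t^2(u^2+v^2)}{2(1-t^2)}\Big),
\]
with absolute convergence for $|t|<1$.

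The main obstacle is the justification of the interchanges and the $L^2$ membership of the likelihood ratio. We will handle both by the explicit Gaussian integral computation.
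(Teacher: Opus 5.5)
Your proof is correct, but it runs in the opposite logical direction from the paper's.

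\emph{The paper's route.} It takes Mehler's closed-form identity as a black box and multiplies it by $\phi(u)\phi(v)$ to recognize the standardized bivariate Gaussian density, so the density expansion holds pointwise from the start. It asserts, rather than proves, that $\{\psi_n\}$ is an orthonormal basis. Only afterwards does it derive $\mathbb E[\psi_m(U)\psi_n(V)]=t^n\delta_{mn}$, by integrating $\psi_m(u)\psi_n(v)$ against the series and justifying the sum--integral interchange by $L^2(\nu\otimes\nu)$ convergence.

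\emph{Your route.} You obtain the diagonalization first, directly from $\mathbb E[e^{aU-a^2/2}e^{bV-b^2/2}]=e^{tab}$ and the Hermite generating function. You then recover the density as an $L^2(\nu\otimes\nu)$ identity via Parseval, using completeness of $\{\psi_m\otimes\psi_n\}$ and your computation $\int p_{U,V}^2/(\phi\otimes\phi)\,du\,dv=(1-t^2)^{-1}$, which is correct.

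\emph{What each buys.} Your argument is more self-contained. It actually proves orthonormality and completeness, which the paper takes for granted. It also reaches the cross-moment diagonalization, which is the part the downstream theorems rely on, without needing any density expansion. Its structure (moments first, kernel recovered by completeness) is also the one that carries over to the Lancaster priors in the supplement. The paper's argument is shorter and gives pointwise equality immediately. Your main argument yields the density identity only $\nu\otimes\nu$-almost everywhere, and you re-import Mehler's closed form to get the pointwise version. If you want to avoid that black box entirely, note that $|\psi_n(x)|\le e^{x^2/4}$ (Cramér's inequality). Hence $\sum_n t^n\psi_n(u)\psi_n(v)$ converges locally uniformly to a continuous function, which must then agree everywhere with the continuous Gaussian density.
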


\begin{proof}
The classical Mehler formula for the probabilists' Hermite polynomials states
that, for $|t|<1$,
\[
\sum_{n=0}^{\infty}\frac{t^n}{n!}He_n(u)He_n(v)
=
\frac{1}{\sqrt{1-t^2}}
\exp\!\left(
\frac{2tuv-t^2(u^2+v^2)}{2(1-t^2)}
\right).
\]
Multiplying both sides by $\phi(u)\phi(v)$ gives
\begin{align*}
\phi(u)\phi(v)
\sum_{n=0}^{\infty}\frac{t^n}{n!}He_n(u)He_n(v)
&=
\frac{1}{2\pi\sqrt{1-t^2}}
\exp\!\left(
-\frac{u^2-2tuv+v^2}{2(1-t^2)}
\right).
\end{align*}
The right-hand side is precisely the density of a centered bivariate
Gaussian vector with unit variances and correlation coefficient $t$.
Hence,
\[
p_{U,V}(u,v)
=
\phi(u)\phi(v)
\sum_{n=0}^{\infty}\frac{t^n}{n!}He_n(u)He_n(v).
\]
Since
\[
\psi_n(u)\psi_n(v)
=
\frac{1}{n!}He_n(u)He_n(v),
\]
we obtain
\[
p_{U,V}(u,v)
=
\phi(u)\phi(v)
\sum_{n=0}^{\infty} t^n\,\psi_n(u)\psi_n(v).
\]
This proves the expansion for the standardized pair $(U,V)$.

The expansion for $(S,S')$ follows by the change of variables
\[
u=\frac{s-\mu_S}{\sigma_S},
\qquad
v=\frac{s'-\mu_{S'}}{\sigma_{S'}},
\]
whose Jacobian determinant is
\[
\left|
\det
\begin{pmatrix}
1/\sigma_S & 0 \\
0 & 1/\sigma_{S'}
\end{pmatrix}
\right|
=
\frac{1}{\sigma_S\sigma_{S'}}.
\]
Therefore,
\[
p_{S,S'}(s,s')
=
\frac{1}{\sigma_S\sigma_{S'}}
p_{U,V}\!\left(
\frac{s-\mu_S}{\sigma_S},
\frac{s'-\mu_{S'}}{\sigma_{S'}}
\right),
\]
which gives \eqref{eq:Hermite-expansion-density}.

Finally, since $\{\psi_n\}_{n\ge 0}$ is an orthonormal basis of
$L^2(\nu)$,
\[
\int_{\mathbb R}\psi_m(z)\psi_n(z)\,\nu(dz)
=
\delta_{mn}.
\]
Using the density expansion above,
\begin{align*}
\mathbb E[\psi_m(U)\psi_n(V)]
&=
\int_{\mathbb R^2}
\psi_m(u)\psi_n(v)
\frac{p_{U,V}(u,v)}{\phi(u)\phi(v)}
\,\nu(du)\nu(dv) \\
&=
\int_{\mathbb R^2}
\psi_m(u)\psi_n(v)
\left(
\sum_{k=0}^{\infty}t^k\psi_k(u)\psi_k(v)
\right)
\nu(du)\nu(dv) \\
&=
\sum_{k=0}^{\infty}t^k
\left(
\int_{\mathbb R}\psi_m(u)\psi_k(u)\,\nu(du)
\right)
\left(
\int_{\mathbb R}\psi_n(v)\psi_k(v)\,\nu(dv)
\right) \\
&=
\sum_{k=0}^{\infty}t^k\delta_{mk}\delta_{nk} \\
&=
t^n\delta_{mn}.
\end{align*}
The interchange between summation and integration is justified, for
$|t|<1$, by the $L^2(\nu\otimes\nu)$ convergence of
\[
\sum_{k=0}^{\infty}t^k\psi_k\otimes\psi_k,
\]
since
\[
\sum_{k=0}^{\infty}t^{2k}<\infty.
\]
This completes the proof.
\end{proof}

\begin{proposition}[Hermite--Mehler expansion of a multivariate joint Gaussian density]
\label{prop:multi_variante_hermite}
Let
\[
\mathbf S=(S_1,\dots,S_{d_{\mathcal S}}),
\qquad
\mathbf S'=(S'_1,\dots,S'_{d_{\mathcal S}})
\]
be jointly Gaussian random vectors in $\mathbb R^{d_{\mathcal S}}$.
Assume that the coordinate pairs $(S_i,S'_i)$ are mutually independent
across $i=1,\dots,d_{\mathcal S}$, and that each pair $(S_i,S'_i)$ is
bivariate Gaussian with means $\mu_{S_i},\mu_{S'_i}$, variances
$\sigma_{S_i}^2,\sigma_{S'_i}^2$, and correlation coefficient
$t_i\in(-1,1)$.

For each multi-index
\[
\mathbf n=(n_1,\dots,n_{d_{\mathcal S}})
\in\mathbb N_0^{d_{\mathcal S}},
\]
define the normalized multivariate probabilists' Hermite polynomial
\[
\Psi_{\mathbf n}(\mathbf x)
=
\prod_{i=1}^{d_{\mathcal S}}\psi_{n_i}(x_i),
\qquad
\mathbf x=(x_1,\dots,x_{d_{\mathcal S}})\in\mathbb R^{d_{\mathcal S}},
\]
where
\[
\psi_n(z)=\frac{1}{\sqrt{n!}}He_n(z).
\]
For $\mathbf s,\mathbf s'\in\mathbb R^{d_{\mathcal S}}$, define the
standardized coordinates
\[
u_i=\frac{s_i-\mu_{S_i}}{\sigma_{S_i}},
\qquad
v_i=\frac{s'_i-\mu_{S'_i}}{\sigma_{S'_i}},
\qquad
i=1,\dots,d_{\mathcal S}.
\]
Then the joint density of $(\mathbf S,\mathbf S')$ admits the expansion
\begin{equation}
\label{eq:multivariate-hermite-mehler-density}
p_{\mathbf S,\mathbf S'}(\mathbf s,\mathbf s')
=
c\,
\exp\!\left(
-\frac{1}{2}\bigl(\|\mathbf u\|^2+\|\mathbf v\|^2\bigr)
\right)
\sum_{\mathbf n\in\mathbb N_0^{d_{\mathcal S}}}
t_{\mathbf n}\,
\Psi_{\mathbf n}(\mathbf u)\Psi_{\mathbf n}(\mathbf v),
\end{equation}
where
\[
c
=
(2\pi)^{-d_{\mathcal S}}
\prod_{i=1}^{d_{\mathcal S}}
\frac{1}{\sigma_{S_i}\sigma_{S'_i}},
\qquad
t_{\mathbf n}
=
\prod_{i=1}^{d_{\mathcal S}} t_i^{\,n_i}.
\]
Equivalently, if
\[
\phi_{d_{\mathcal S}}(\mathbf x)
=
(2\pi)^{-d_{\mathcal S}/2}
\exp\!\left(-\frac{1}{2}\|\mathbf x\|^2\right)
\]
denotes the standard Gaussian density on $\mathbb R^{d_{\mathcal S}}$,
then
\begin{equation}
\label{eq:multivariate-hermite-mehler-density-compact}
p_{\mathbf S,\mathbf S'}(\mathbf s,\mathbf s')
=
\left(
\prod_{i=1}^{d_{\mathcal S}}
\frac{1}{\sigma_{S_i}\sigma_{S'_i}}
\right)
\phi_{d_{\mathcal S}}(\mathbf u)\phi_{d_{\mathcal S}}(\mathbf v)
\sum_{\mathbf n\in\mathbb N_0^{d_{\mathcal S}}}
t_{\mathbf n}\,
\Psi_{\mathbf n}(\mathbf u)\Psi_{\mathbf n}(\mathbf v).
\end{equation}
\end{proposition}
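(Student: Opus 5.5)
The plan is to reduce the multivariate statement to the bivariate Hermite--Mehler expansion of \autoref{lemma:hermite_expansion_bivariate_gaussian}, using independence across coordinate pairs to factor the joint density, and then to turn the product of series into a single series over multi-indices.

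First, because the pairs $(S_i,S'_i)$ are mutually independent, the joint density of $(\mathbf S,\mathbf S')$ factorizes as
\[
p_{\mathbf S,\mathbf S'}(\mathbf s,\mathbf s')=\prod_{i=1}^{d_{\mathcal S}} p_{S_i,S'_i}(s_i,s'_i).
\]
Apply \eqref{eq:Hermite-expansion-density} to each factor. This gives
\[
p_{S_i,S'_i}(s_i,s'_i)=\frac{1}{\sigma_{S_i}\sigma_{S'_i}}\phi(u_i)\phi(v_i)\sum_{n_i\ge0}t_i^{n_i}\psi_{n_i}(u_i)\psi_{n_i}(v_i).
\]
Collecting the prefactors yields $\prod_i(\sigma_{S_i}\sigma_{S'_i})^{-1}$ and $\prod_i\phi(u_i)\phi(v_i)=\phi_{d_{\mathcal S}}(\mathbf u)\phi_{d_{\mathcal S}}(\mathbf v)$. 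Writing the Gaussians explicitly, $\phi_{d_{\mathcal S}}(\mathbf u)\phi_{d_{\mathcal S}}(\mathbf v)=(2\pi)^{-d_{\mathcal S}}\exp(-\tfrac12(\|\mathbf u\|^2+\|\mathbf v\|^2))$, which matches the constant $c$. This establishes the equivalence of \eqref{eq:multivariate-hermite-mehler-density} and \eqref{eq:multivariate-hermite-mehler-density-compact} once the series identity is in place.

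Second, expand the product of the $d_{\mathcal S}$ one-dimensional series into a sum over $\mathbf n\in\mathbb N_0^{d_{\mathcal S}}$. Each term is $\prod_i t_i^{n_i}\psi_{n_i}(u_i)\psi_{n_i}(v_i)=t_{\mathbf n}\Psi_{\mathbf n}(\mathbf u)\Psi_{\mathbf n}(\mathbf v)$.

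The only step needing care is justifying this rearrangement, which I expect to be the main obstacle. I would give two routes.
\begin{itemize}
\item \emph{Pointwise.} Use absolute convergence. By Cram\'er's inequality $|\psi_n(z)|\le K e^{z^2/4}$, so each one-dimensional series converges absolutely for $|t_i|<1$. A finite product of absolutely convergent series equals the multi-indexed sum in any order (Cauchy product / Fubini for counting measure).
\item \emph{In $L^2$.} Alternatively, view each series as converging in $L^2(\nu\otimes\nu)$, as in the bivariate lemma. The tensor-product family $\{\Psi_{\mathbf n}\otimes\Psi_{\mathbf n}\}$ is orthonormal in $L^2(\nu_{d_{\mathcal S}}\otimes\nu_{d_{\mathcal S}})$, and the coefficients satisfy $\sum_{\mathbf n}t_{\mathbf n}^2=\prod_i(1-t_i^2)^{-1}<\infty$. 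Hence the multi-indexed series converges unconditionally in $L^2$, and its limit coincides with the product of the one-dimensional limits, since the partial sums over boxes $\{n_i\le K\}$ are exactly products of partial sums.
\end{itemize}
Either route identifies the density with the stated expansion.
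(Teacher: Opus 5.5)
Your proposal is correct and follows the paper's proof: factor the density by independence of the pairs, apply the bivariate Mehler expansion to each factor, collect the prefactors into $c$ (equivalently $\phi_{d_{\mathcal S}}(\mathbf u)\phi_{d_{\mathcal S}}(\mathbf v)$), and expand the finite product of series into a sum over multi-indices, with convergence justified by $\sum_{\mathbf n} t_{\mathbf n}^2=\prod_i(1-t_i^2)^{-1}<\infty$. Your extra pointwise route via Cram\'er's bound $|\psi_n(z)|\le K e^{z^2/4}$ makes the rearrangement step more careful than the paper's brief ``inherited convergence'' remark, and it gives equality everywhere rather than only almost everywhere; the overall argument is otherwise the same.
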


\begin{proof}
By the mutual independence of the coordinate pairs
$(S_i,S'_i)$ across $i=1,\dots,d_{\mathcal S}$, the joint density
factorizes as
\[
p_{\mathbf S,\mathbf S'}(\mathbf s,\mathbf s')
=
\prod_{i=1}^{d_{\mathcal S}}
p_{S_i,S'_i}(s_i,s'_i).
\]
For each coordinate pair, applying
\autoref{lemma:hermite_expansion_bivariate_gaussian} gives
\[
p_{S_i,S'_i}(s_i,s'_i)
=
\frac{1}{\sigma_{S_i}\sigma_{S'_i}}
\phi(u_i)\phi(v_i)
\sum_{n_i=0}^{\infty}
t_i^{\,n_i}\psi_{n_i}(u_i)\psi_{n_i}(v_i),
\]
where
\[
u_i=\frac{s_i-\mu_{S_i}}{\sigma_{S_i}},
\qquad
v_i=\frac{s'_i-\mu_{S'_i}}{\sigma_{S'_i}}.
\]
Therefore,
\begin{align*}
p_{\mathbf S,\mathbf S'}(\mathbf s,\mathbf s')
&=
\prod_{i=1}^{d_{\mathcal S}}
\left[
\frac{1}{\sigma_{S_i}\sigma_{S'_i}}
\phi(u_i)\phi(v_i)
\sum_{n_i=0}^{\infty}
t_i^{\,n_i}\psi_{n_i}(u_i)\psi_{n_i}(v_i)
\right] \\
&=
\left(
\prod_{i=1}^{d_{\mathcal S}}
\frac{1}{\sigma_{S_i}\sigma_{S'_i}}
\right)
\left(
\prod_{i=1}^{d_{\mathcal S}}\phi(u_i)
\right)
\left(
\prod_{i=1}^{d_{\mathcal S}}\phi(v_i)
\right) \\
&\qquad\qquad \times
\prod_{i=1}^{d_{\mathcal S}}
\left[
\sum_{n_i=0}^{\infty}
t_i^{\,n_i}\psi_{n_i}(u_i)\psi_{n_i}(v_i)
\right].
\end{align*}
Since
\[
\prod_{i=1}^{d_{\mathcal S}}\phi(u_i)
=
(2\pi)^{-d_{\mathcal S}/2}
\exp\!\left(-\frac{1}{2}\|\mathbf u\|^2\right),
\]
and analogously
\[
\prod_{i=1}^{d_{\mathcal S}}\phi(v_i)
=
(2\pi)^{-d_{\mathcal S}/2}
\exp\!\left(-\frac{1}{2}\|\mathbf v\|^2\right),
\]
we obtain
\[
\left(
\prod_{i=1}^{d_{\mathcal S}}\phi(u_i)
\right)
\left(
\prod_{i=1}^{d_{\mathcal S}}\phi(v_i)
\right)
=
(2\pi)^{-d_{\mathcal S}}
\exp\!\left(
-\frac{1}{2}\bigl(\|\mathbf u\|^2+\|\mathbf v\|^2\bigr)
\right).
\]
Moreover, since the product is finite,
\[
\prod_{i=1}^{d_{\mathcal S}}
\left[
\sum_{n_i=0}^{\infty}
t_i^{\,n_i}\psi_{n_i}(u_i)\psi_{n_i}(v_i)
\right]
=
\sum_{\mathbf n\in\mathbb N_0^{d_{\mathcal S}}}
\prod_{i=1}^{d_{\mathcal S}}
t_i^{\,n_i}\psi_{n_i}(u_i)\psi_{n_i}(v_i).
\]
Using
\[
t_{\mathbf n}
=
\prod_{i=1}^{d_{\mathcal S}}t_i^{\,n_i}
\]
and
\[
\Psi_{\mathbf n}(\mathbf u)\Psi_{\mathbf n}(\mathbf v)
=
\prod_{i=1}^{d_{\mathcal S}}
\psi_{n_i}(u_i)\psi_{n_i}(v_i),
\]
we get
\[
p_{\mathbf S,\mathbf S'}(\mathbf s,\mathbf s')
=
(2\pi)^{-d_{\mathcal S}}
\left(
\prod_{i=1}^{d_{\mathcal S}}
\frac{1}{\sigma_{S_i}\sigma_{S'_i}}
\right)
\exp\!\left(
-\frac{1}{2}\bigl(\|\mathbf u\|^2+\|\mathbf v\|^2\bigr)
\right)
\sum_{\mathbf n\in\mathbb N_0^{d_{\mathcal S}}}
t_{\mathbf n}
\Psi_{\mathbf n}(\mathbf u)\Psi_{\mathbf n}(\mathbf v).
\]
This is exactly \eqref{eq:multivariate-hermite-mehler-density}.

The convergence of the tensor-product expansion is inherited from the
one-dimensional Mehler expansions. Equivalently, the likelihood-ratio
series converges in
$L^2(\nu^{\otimes d_{\mathcal S}}\otimes\nu^{\otimes d_{\mathcal S}})$,
because
\[
\sum_{\mathbf n\in\mathbb N_0^{d_{\mathcal S}}}
t_{\mathbf n}^2
=
\prod_{i=1}^{d_{\mathcal S}}
\left(
\sum_{n_i=0}^{\infty}t_i^{2n_i}
\right)
=
\prod_{i=1}^{d_{\mathcal S}}
\frac{1}{1-t_i^2}
<\infty.
\]
This completes the proof.
\end{proof}

\suppsection{Proof of Lemma~\ref{lemma:canonical_factorization}}

\begin{proof}[Proof of Lemma~\ref{lemma:canonical_factorization}]
Fix a pair of views \(1\le i<j\le N\), and write
\[
p:=d_{\mathcal S_i},\qquad q:=d_{\mathcal S_j},\qquad
r:=r_{ij}.
\]
Under the Gaussian latent prior and the additive source model
\[
\mathbf s_i=\mathbf A_i\mathbf c+\bm\epsilon_i,
\qquad
\mathbf s_j=\mathbf A_j\mathbf c+\bm\epsilon_j,
\]
the pair \((\mathbf s_i,\mathbf s_j)\) is jointly Gaussian. Moreover, by
the mutual independence and standardization of
\(\mathbf c,\bm\epsilon_i,\bm\epsilon_j\),
\[
\mathbb E[\mathbf s_i]=\mathbf 0,\qquad
\mathbb E[\mathbf s_j]=\mathbf 0,
\]
and
\begin{align*}
\mathrm{Cov}(\mathbf s_i)
&=
\mathbf A_i\mathbf A_i^\top+\mathbf I_p
=: \bm\Sigma_i,\\
\mathrm{Cov}(\mathbf s_j)
&=
\mathbf A_j\mathbf A_j^\top+\mathbf I_q
=: \bm\Sigma_j,\\
\mathrm{Cov}(\mathbf s_i,\mathbf s_j)
&=
\mathbf A_i\mathbf A_j^\top .
\end{align*}
Thus the whitening matrices
\[
\mathbf W_i=\bm\Sigma_i^{-1/2},
\qquad
\mathbf W_j=\bm\Sigma_j^{-1/2}
\]
are invertible, and the whitened source variables
\[
\mathbf z_i:=\mathbf W_i\mathbf s_i,
\qquad
\mathbf z_j:=\mathbf W_j\mathbf s_j
\]
satisfy
\[
\mathrm{Cov}(\mathbf z_i)=\mathbf I_p,
\qquad
\mathrm{Cov}(\mathbf z_j)=\mathbf I_q,
\qquad
\mathrm{Cov}(\mathbf z_i,\mathbf z_j)
=
\mathbf W_i\mathbf A_i\mathbf A_j^\top\mathbf W_j^\top
=
\mathbf R_{ij}.
\]

Now use the singular value decomposition
\[
\mathbf R_{ij}
=
\mathbf U_{ij}\mathbf T_{ij}\mathbf V_{ij}^\top,
\qquad
\mathbf U_{ij}\in O(p),\quad
\mathbf V_{ij}\in O(q),
\]
and define the canonical coordinates
\[
\mathbf u=\mathbf U_{ij}^\top\mathbf z_i
=
\mathbf U_{ij}^\top\mathbf W_i\mathbf s_i,
\qquad
\mathbf v=\mathbf V_{ij}^\top\mathbf z_j
=
\mathbf V_{ij}^\top\mathbf W_j\mathbf s_j.
\]
Since \((\mathbf z_i,\mathbf z_j)\) is jointly Gaussian and the above
transformation is linear, \((\mathbf u,\mathbf v)\) is also jointly Gaussian.
Its covariance blocks are
\[
\mathrm{Cov}(\mathbf u)=\mathbf I_p,
\qquad
\mathrm{Cov}(\mathbf v)=\mathbf I_q,
\]
and
\[
\mathrm{Cov}(\mathbf u,\mathbf v)
=
\mathbf U_{ij}^\top
\mathbf R_{ij}
\mathbf V_{ij}
=
\mathbf T_{ij}.
\]
Therefore
\[
\mathrm{Cov}
\begin{pmatrix}
\mathbf u\\
\mathbf v
\end{pmatrix}
=
\begin{pmatrix}
\mathbf I_p & \mathbf T_{ij}\\
\mathbf T_{ij}^\top & \mathbf I_q
\end{pmatrix}.
\]

Because \(\mathbf T_{ij}\) is rectangular diagonal with nonzero diagonal
entries
\[
1>t_{ij,1}\ge \cdots \ge t_{ij,r}>0
\]
and all remaining singular values equal to zero, the only nonzero
cross-covariances between canonical coordinates are
\[
\mathrm{Cov}(u_k,v_k)=t_{ij,k},
\qquad k=1,\ldots,r.
\]
Equivalently, after permuting coordinates as
\[
(u_1,v_1,\ldots,u_r,v_r,u_{r+1},\ldots,u_p,
v_{r+1},\ldots,v_q),
\]
the covariance matrix is block diagonal:
\[
\bigoplus_{k=1}^r
\begin{pmatrix}
1 & t_{ij,k}\\
t_{ij,k} & 1
\end{pmatrix}
\;\oplus\;
\mathbf I_{p-r}
\;\oplus\;
\mathbf I_{q-r}.
\]
For jointly Gaussian random variables, block-diagonal covariance is
equivalent to independence across the corresponding blocks. Hence the
pairs \((u_k,v_k)\), \(k=1,\ldots,r\), and the remaining coordinates
\(\{u_m:m>r\}\) and \(\{v_n:n>r\}\), are mutually independent. Each
\((u_k,v_k)\) is a standardized bivariate Gaussian with correlation
\(t_{ij,k}\), whose density is
\[
\phi_{t_{ij,k}}(u_k,v_k)
=
\frac{1}{2\pi\sqrt{1-t_{ij,k}^2}}
\exp\!\left(
-\frac{u_k^2-2t_{ij,k}u_kv_k+v_k^2}
{2(1-t_{ij,k}^2)}
\right),
\]
whereas each unpaired coordinate has the standard normal density
\(\phi\). Consequently,
\[
p_{\mathbf u,\mathbf v}(\mathbf u,\mathbf v)
=
\prod_{k=1}^{r}
\phi_{t_{ij,k}}(u_k,v_k)
\prod_{m=r+1}^{p}\phi(u_m)
\prod_{n=r+1}^{q}\phi(v_n).
\]
Substituting back \(p=d_{\mathcal S_i}\), \(q=d_{\mathcal S_j}\), and
\(r=r_{ij}\) gives
\[
p_{\mathbf u,\mathbf v}(\mathbf u,\mathbf v)
=
\underbrace{
\prod_{k=1}^{r_{ij}}
\phi_{t_{ij,k}}(u_k,v_k)
}_{\mathcal K_{ij}(\mathbf u[1:r_{ij}],\mathbf v[1:r_{ij}])}
\prod_{m=r_{ij}+1}^{d_{\mathcal S_i}}\phi(u_m)
\prod_{n=r_{ij}+1}^{d_{\mathcal S_j}}\phi(v_n).
\]

It remains to verify the stated change-of-variables identity. The inverse
linear transformation is
\[
\mathbf s_i
=
\mathbf W_i^{-1}\mathbf U_{ij}\mathbf u,
\qquad
\mathbf s_j
=
\mathbf W_j^{-1}\mathbf V_{ij}\mathbf v.
\]
Hence the absolute Jacobian determinant of the inverse map
\((\mathbf u,\mathbf v)\mapsto(\mathbf s_i,\mathbf s_j)\) is
\[
\left|
\det(\mathbf W_i^{-1}\mathbf U_{ij})
\right|
\left|
\det(\mathbf W_j^{-1}\mathbf V_{ij})
\right|
=
|\det\mathbf W_i|^{-1}
|\det\mathbf W_j|^{-1},
\]
because \(\mathbf U_{ij}\) and \(\mathbf V_{ij}\) are orthogonal. Therefore,
by the change-of-variables formula,
\begin{align*}
p_{\mathbf u,\mathbf v}(\mathbf u,\mathbf v)
&=
p_{\mathbf s_i,\mathbf s_j}
\bigl(
\mathbf W_i^{-1}\mathbf U_{ij}\mathbf u,
\mathbf W_j^{-1}\mathbf V_{ij}\mathbf v
\bigr)\\
&\quad\cdot
|\det\mathbf W_i|^{-1}
|\det\mathbf W_j|^{-1}.
\end{align*}
This proves both the change-of-variables identity and the claimed canonical
factorization.
\end{proof}

\suppsection{Proof of Lemma~\ref{lemma:coupling_density}}
\begin{proof}
Recall \autoref{lemma:hermite_expansion_bivariate_gaussian}.
The tensor-product basis is orthonormal since
\[
\int_{\mathbb R^r}
\Psi_{\mathbf n}(\mathbf u)\Psi_{\mathbf m}(\mathbf u)
\phi_r(\mathbf u)\,d\mathbf u
=
\prod_{k=1}^{r}
\int_{\mathbb R}
\psi_{n_k}(u)\psi_{m_k}(u)\phi(u)\,du
=
\delta_{\mathbf n\mathbf m}.
\]
In canonical Gaussian coordinates, the pairs $(U_k,V_k)$ are mutually
independent standard bivariate Gaussian pairs with correlations $\rho_k$.
By the one-dimensional Mehler--Hermite expansion,
\[
p_{U_k,V_k}(u_k,v_k)
=
\phi(u_k)\phi(v_k)
\sum_{n_k=0}^{\infty}
\rho_k^{\,n_k}
\psi_{n_k}(u_k)\psi_{n_k}(v_k).
\]
Hence
\begin{align*}
\mathcal K_{ij}(\mathbf u,\mathbf v)
&=
\prod_{k=1}^{r}p_{U_k,V_k}(u_k,v_k) \\
&=
\phi_r(\mathbf u)\phi_r(\mathbf v)
\prod_{k=1}^{r}
\sum_{n_k=0}^{\infty}
\rho_k^{\,n_k}
\psi_{n_k}(u_k)\psi_{n_k}(v_k) \\
&=
\phi_r(\mathbf u)\phi_r(\mathbf v)
\sum_{\mathbf n\in\mathbb N_0^r}
\left(\prod_{k=1}^{r}\rho_k^{\,n_k}\right)
\Psi_{\mathbf n}(\mathbf u)
\Psi_{\mathbf n}(\mathbf v).
\end{align*}
This proves the claim. The expansion converges in
$L^2(\gamma_r\otimes\gamma_r)$ because
\[
\sum_{\mathbf n\in\mathbb N_0^r}t_{\mathbf n}^2
=
\prod_{k=1}^{r}
\sum_{n_k=0}^{\infty}\rho_k^{2n_k}
=
\prod_{k=1}^{r}\frac{1}{1-\rho_k^2}
<\infty .
\]
\end{proof}

\suppsection{Proof of Theorem 5.1}
\begin{proof}[Proof of \autoref{thm:two_view_identifiability_infinite}]
Fix a pair of distinct views $i\neq j$ and write $r:=r_{ij}=\mathrm{rank}(\mathbf A_i\mathbf A_j^\top)$.
Throughout the proof we work at the population level and assume $d_{\mathcal Z}\to\infty$.

\paragraph{Step 1: Reduce the problem to the source domain.}
Let $\tilde{\mathbf h}_\ell := \tilde{\mathbf f}_\ell\circ \mathbf g_\ell$ for $\ell\in\{i,j\}$.
By reparameterization invariance (cf.\ \autoref{lemma:rep-inv}), for any encoders
$\tilde{\mathbf f}_i,\tilde{\mathbf f}_j$ we have
\[
\mathrm{Cov}\!\big(\tilde{\mathbf f}_i(\mathbf x_i),\tilde{\mathbf f}_j(\mathbf x_j)\big)
=
\mathrm{Cov}\!\big(\tilde{\mathbf h}_i(\mathbf s_i),\tilde{\mathbf h}_j(\mathbf s_j)\big),
\qquad
\mathrm{Cov}\!\big(\tilde{\mathbf f}_\ell(\mathbf x_\ell)\big)=\mathrm{Cov}\!\big(\tilde{\mathbf h}_\ell(\mathbf s_\ell)\big),
\]
and therefore optimizing the two-view term of \autoref{eq:mvcca_obj} over
$(\tilde{\mathbf f}_i,\tilde{\mathbf f}_j)$ is equivalent to optimizing it over
$(\tilde{\mathbf h}_i,\tilde{\mathbf h}_j)$.
Since $(\tilde{\mathbf f}_i^\star,\tilde{\mathbf f}_j^\star)$ is a \emph{whitened} population maximizer,
its source-domain counterpart $(\tilde{\mathbf h}_i^\star,\tilde{\mathbf h}_j^\star)$ satisfies
\[
\mathbb E[\tilde{\mathbf h}_\ell^\star(\mathbf s_\ell)]=\mathbf 0,
\qquad
\mathrm{Cov}(\tilde{\mathbf h}_\ell^\star(\mathbf s_\ell))=\mathbf I,
\qquad \ell\in\{i,j\}.
\]
Thus the two-view population objective reduces to
\[
J_{ij}(\tilde{\mathbf h}_i,\tilde{\mathbf h}_j)
:=
\big\|\mathrm{Cov}(\tilde{\mathbf h}_i(\mathbf s_i),\tilde{\mathbf h}_j(\mathbf s_j))\big\|_*.
\]

\paragraph{Step 2: Canonical coordinates for the sources.}
Under \autoref{ass:latents_iid_family} (Gaussian case), each $(\mathbf s_i,\mathbf s_j)$ is jointly Gaussian.
Let
\[
\mathbf W_i := \mathrm{Cov}(\mathbf s_i)^{-1/2}=(\mathbf A_i\mathbf A_i^\top+\mathbf I)^{-1/2},
\qquad
\mathbf W_j := \mathrm{Cov}(\mathbf s_j)^{-1/2}.
\]
Define the normalized cross-covariance (cf.\ \autoref{eq:normalized_cross_covariance_matrix})
\[
\mathbf R_{ij}
:=\mathrm{Cov}(\mathbf W_i\mathbf s_i,\mathbf W_j\mathbf s_j)
=
\mathbf W_i\mathbf A_i\mathbf A_j^\top \mathbf W_j^\top,
\]
and take an SVD $\mathbf R_{ij}=\mathbf U_{ij}\mathbf T_{ij}\mathbf V_{ij}^\top$, where
$\mathbf T_{ij}$ is rectangular diagonal with singular values
$1>t_{ij,1}\ge\cdots\ge t_{ij,r}>0$ on its diagonal and zeros afterwards.
Introduce canonical coordinates (cf.\ \autoref{equ:canonicalizers})
\[
\mathbf u:=\mathbf U_{ij}^\top\mathbf W_i\mathbf s_i,\qquad
\mathbf v:=\mathbf V_{ij}^\top\mathbf W_j\mathbf s_j.
\]
Then $\mathrm{Cov}(\mathbf u)=\mathrm{Cov}(\mathbf v)=\mathbf I$ and $\mathrm{Cov}(\mathbf u,\mathbf v)=\mathbf T_{ij}$.
Moreover, by \autoref{lemma:canonical_factorization}, the joint density factorizes so that only
the first $r$ coordinate pairs $(u_k,v_k)$ are correlated, while all remaining coordinates are independent
standard normals.

\paragraph{Step 3: Hermite diagonalization of cross-view correlations.}
Let $\{\psi_n\}_{n\ge 0}$ be the normalized univariate Hermite polynomials.
For $\mathbf n=(n_1,\ldots,n_r)\in\mathbb N^{r}$ define
\[
\Psi_{\mathbf n}(\mathbf u) := \prod_{k=1}^r \psi_{n_k}(u_k),
\qquad
\Psi_{\mathbf n}(\mathbf v) := \prod_{k=1}^r \psi_{n_k}(v_k).
\]
By \autoref{lemma:coupling_density} (Mehler--Hermite expansion) and orthonormality of Hermites,
for all $\mathbf n,\mathbf m\in\mathbb N^r$,
\begin{equation}\label{eq:hermite_cross_moments}
\mathbb E\!\left[\Psi_{\mathbf n}(\mathbf u)\Psi_{\mathbf m}(\mathbf v)\right]
=
t_{\mathbf n}\,\delta_{\mathbf n\mathbf m},
\qquad
t_{\mathbf n}:=\prod_{k=1}^r t_{ij,k}^{\,n_k}.
\end{equation}
In particular, for the first-order multi-indices $\mathbf e_k$ we have
$\Psi_{\mathbf e_k}(\mathbf u)=\psi_1(u_k)=u_k$ and $\Psi_{\mathbf e_k}(\mathbf v)=v_k$, with
$\mathbb E[u_k v_k]=t_{ij,k}$.

\paragraph{Step 4: Rewrite the CCA objective in coefficient form.}
Define the whitened source-domain maximizers as functions of $(\mathbf u,\mathbf v)$:
\[
\tilde{\mathbf f}^\star(\mathbf u):=\tilde{\mathbf h}_i^\star(\mathbf W_i^{-1}\mathbf U_{ij}\mathbf u),
\qquad
\tilde{\mathbf g}^\star(\mathbf v):=\tilde{\mathbf h}_j^\star(\mathbf W_j^{-1}\mathbf V_{ij}\mathbf v).
\]
These satisfy 
$\mathrm{Cov}(\tilde{\mathbf f}^\star(\mathbf u))=\mathrm{Cov}(\tilde{\mathbf g}^\star(\mathbf v))=\mathbf I$.
For each coordinate $p\in[d_{\mathcal Z}]$, expand the (mean-zero) component functions in the orthonormal
Hermite basis on the correlated coordinates:
\[
\tilde f_p^\star(\mathbf u)=\sum_{\mathbf n\neq \mathbf 0}\alpha_{p,\mathbf n}\,\Psi_{\mathbf n}(\mathbf u),
\qquad
\tilde g_q^\star(\mathbf v)=\sum_{\mathbf n\neq \mathbf 0}\beta_{q,\mathbf n}\,\Psi_{\mathbf n}(\mathbf v).
\]
Let $\mathbf C_i:=(\alpha_{p,\mathbf n})_{p,\mathbf n}$ and $\mathbf C_j:=(\beta_{q,\mathbf n})_{q,\mathbf n}$
denote the (possibly infinite) coefficient matrices whose rows collect the coefficient vectors.
Whitening implies row-orthonormality (cf.\ the whitening-induced canonicalization
\autoref{eq:whitening_induced_canonicalization}):
\[
\mathbf C_i\mathbf C_i^\top=\mathbf I,\qquad \mathbf C_j\mathbf C_j^\top=\mathbf I.
\]
Using \autoref{eq:hermite_cross_moments}, the cross-covariance of the whitened representations becomes
\[
\tilde{\bm\Sigma}_{ij}^\star
:=\mathrm{Cov}(\tilde{\mathbf h}_i^\star(\mathbf s_i),\tilde{\mathbf h}_j^\star(\mathbf s_j))
=\mathrm{Cov}(\tilde{\mathbf f}^\star(\mathbf u),\tilde{\mathbf g}^\star(\mathbf v))
=\mathbf C_i\,\mathbf D\,\mathbf C_j^\top,
\]
where $\mathbf D:=\mathrm{diag}\big((t_{\mathbf n})_{\mathbf n\neq \mathbf 0}\big)$ is diagonal (and trace-class
since $t_{ij,k}\in(0,1)$ implies $\sum_{\mathbf n\neq \mathbf 0}t_{\mathbf n}<\infty$).
Thus,
\[
J_{ij}(\tilde{\mathbf h}_i^\star,\tilde{\mathbf h}_j^\star)=\|\tilde{\bm\Sigma}_{ij}^\star\|_*
=\|\mathbf C_i\mathbf D\mathbf C_j^\top\|_*.
\]

\paragraph{Step 5: Optimality and recovery of the linear canonical subspaces.}
We first upper bound $\|\mathbf C_i\mathbf D\mathbf C_j^\top\|_*$.
Using nuclear norm duality $\|\mathbf M\|_*=\sup_{\|\mathbf Q\|_2\le 1}\mathrm{tr}(\mathbf Q^\top\mathbf M)$,
together with $\|\mathbf C_i\|_2=\|\mathbf C_j\|_2=1$ (since they have orthonormal rows), we have
\begin{align*}
\|\mathbf C_i\mathbf D\mathbf C_j^\top\|_*
&=\sup_{\|\mathbf Q\|_2\le 1}\mathrm{tr}\!\big((\mathbf C_i^\top\mathbf Q\mathbf C_j)^\top\mathbf D\big)
\;\le\;\sup_{\|\mathbf H\|_2\le 1}\mathrm{tr}(\mathbf H^\top\mathbf D)
\;=\;\|\mathbf D\|_* \;=\;\sum_{\mathbf n\neq \mathbf 0} t_{\mathbf n}.
\end{align*}
This upper bound is achievable in the limit $d_{\mathcal Z}\to\infty$ by taking the representations
to be (an enumeration of) the Hermite basis itself, i.e.,
$\tilde{\mathbf f}(\mathbf u)=(\Psi_{\mathbf n}(\mathbf u))_{\mathbf n\neq \mathbf 0}$ and
$\tilde{\mathbf g}(\mathbf v)=(\Psi_{\mathbf n}(\mathbf v))_{\mathbf n\neq \mathbf 0}$, for which
$\mathbf C_i=\mathbf C_j=\mathbf I$ and hence $\tilde{\bm\Sigma}_{ij}=\mathbf D$.
Therefore the population optimum equals $\|\mathbf D\|_*$, and any whitened population maximizer must
achieve the same value.

At this point we invoke the standard CCA canonicalization convention:
because the objective and whitening constraints are invariant under
within-view orthogonal post-transformations of the \emph{feature coordinates},
we may choose an equivalent maximizer (within the unavoidable CCA orthogonal ambiguity)
whose cross-covariance is diagonalized and whose coordinates form a singular basis of the diagonal
operator $\mathbf D$ (i.e., a basis of Hermite modes; see \autoref{eq:hermite_cross_moments}).
Under this canonical representative, the $r$ first-order Hermite modes
$\{\Psi_{\mathbf e_k}\}_{k=1}^r$ appear among the feature coordinates, and
$\Psi_{\mathbf e_k}(\mathbf u)=u_k$, $\Psi_{\mathbf e_k}(\mathbf v)=v_k$.

Let $\mathbf P_r$ be the coordinate-selection matrix that extracts exactly those $r$ coordinates
corresponding to $\{\Psi_{\mathbf e_k}\}_{k=1}^r$.
Since the extracted coordinates form an orthonormal basis of the linear subspace
$\mathrm{span}\{u_1,\ldots,u_r\}$ (resp.\ $\mathrm{span}\{v_1,\ldots,v_r\}$), there exist
$\mathbf O_i,\mathbf O_j\in O(r)$ such that
\[
\mathbf P_r\,\tilde{\mathbf f}^\star(\mathbf u)=\mathbf O_i\,\mathbf u[1:r],
\qquad
\mathbf P_r\,\tilde{\mathbf g}^\star(\mathbf v)=\mathbf O_j\,\mathbf v[1:r].
\]
Finally, substituting back $\mathbf u=\mathbf U_{ij}^\top\mathbf W_i\mathbf s_i$ and
$\mathbf v=\mathbf V_{ij}^\top\mathbf W_j\mathbf s_j$ yields
\[
\mathbf P_r\,\tilde{\mathbf h}_i^\star(\mathbf s_i)
=\mathbf O_i\,\mathbf U_{ij}(:,1:r)^\top\mathbf W_i\mathbf s_i,
\qquad
\mathbf P_r\,\tilde{\mathbf h}_j^\star(\mathbf s_j)
=\mathbf O_j\,\mathbf V_{ij}(:,1:r)^\top\mathbf W_j\mathbf s_j,
\]
which is exactly \autoref{eq:two_view_identifiability_rankaware}.

\paragraph{Consequence (subspace identification).}
The right-hand sides lie in $\mathrm{col}(\mathbf U_{ij}(:,1:r))=\mathcal U_{i\mid j}$ and
$\mathrm{col}(\mathbf V_{ij}(:,1:r))=\mathcal U_{j\mid i}$, respectively, and the only remaining ambiguity
is the orthogonal mixing $\mathbf O_i,\mathbf O_j$ inside these $r$-dimensional subspaces.
Hence the maximizers identify the whitened correlated subspaces up to orthogonal transformations.
\end{proof}

\suppsection{Proof of Corollary 1}
\begin{proof}[Proof of \autoref{corollary:two_view_identifiability_finite}]
Fix a pair of views $i\neq j$ and write $r:=r_{ij}$ and $d:=d_{\mathcal Z}$, with $d\ge r$.
By reparameterization invariance (\autoref{lemma:rep-inv}), it suffices to analyze the
\emph{source-domain} problem: maximize the two-view CCA objective over whitened mappings
$\tilde{\mathbf h}_i,\tilde{\mathbf h}_j$ with $\mathbb E[\tilde{\mathbf h}_\ell]=\mathbf 0$ and
$\mathrm{Cov}(\tilde{\mathbf h}_\ell)=\mathbf I_d$ for $\ell\in\{i,j\}$.

Let $(\mathbf u,\mathbf v)$ be the canonical coordinates from \autoref{lemma:canonical_factorization},
\[
\mathbf u=\mathbf U_{ij}^\top\mathbf W_i\mathbf s_i,\qquad
\mathbf v=\mathbf V_{ij}^\top\mathbf W_j\mathbf s_j,
\]
and denote $\mathbf u_{1:r}:=(u_1,\dots,u_r)^\top$ and $\mathbf v_{1:r}:=(v_1,\dots,v_r)^\top$.
Let $\{\Psi_{\mathbf n}\}_{\mathbf n\in\mathbb N^{r}}$ be the normalized multivariate Hermite basis
from \autoref{lemma:coupling_density}, and define $t_{\mathbf n}:=\prod_{k=1}^r t_{ij,k}^{\,n_k}$.
(We always exclude $\mathbf n=\mathbf 0$ below, because mean-zero constraints remove the constant mode.)

\paragraph{Step 1: Diagonal cross-moments in the Hermite basis.}
From \autoref{lemma:coupling_density} and the orthonormality of $\{\Psi_{\mathbf n}\}$ in
$L^2(\phi)$, we obtain for any $\mathbf n,\mathbf m\in\mathbb N^r$,
\begin{align}
\mathbb E\!\left[\Psi_{\mathbf n}(\mathbf u_{1:r})\,\Psi_{\mathbf m}(\mathbf v_{1:r})\right]
&=\int \Psi_{\mathbf n}(\mathbf u)\Psi_{\mathbf m}(\mathbf v)\,
\phi(\mathbf u)\phi(\mathbf v)\sum_{\boldsymbol\ell\in\mathbb N^r}
t_{\boldsymbol\ell}\Psi_{\boldsymbol\ell}(\mathbf u)\Psi_{\boldsymbol\ell}(\mathbf v)\,d\mathbf u\,d\mathbf v \notag\\
&=\sum_{\boldsymbol\ell\in\mathbb N^r} t_{\boldsymbol\ell}
\Big(\int \Psi_{\mathbf n}(\mathbf u)\Psi_{\boldsymbol\ell}(\mathbf u)\phi(\mathbf u)\,d\mathbf u\Big)
\Big(\int \Psi_{\mathbf m}(\mathbf v)\Psi_{\boldsymbol\ell}(\mathbf v)\phi(\mathbf v)\,d\mathbf v\Big)\notag\\
&=t_{\mathbf n}\,\delta_{\mathbf n\mathbf m}.
\label{eq:hermite_crossmoment_diag}
\end{align}

\paragraph{Step 2: Coefficient representation of feasible whitened maps.}
Let $(\tilde{\mathbf h}_i,\tilde{\mathbf h}_j)$ be any feasible whitened pair in $L^2$ with output
dimension $d$. Expand each coordinate in the orthonormal basis:
for $p,q\in[d]$,
\[
\tilde h_{i,p}(\mathbf u_{1:r})
=\sum_{\mathbf n\neq\mathbf 0} a_{p,\mathbf n}\,\Psi_{\mathbf n}(\mathbf u_{1:r}),
\qquad
\tilde h_{j,q}(\mathbf v_{1:r})
=\sum_{\mathbf n\neq\mathbf 0} b_{q,\mathbf n}\,\Psi_{\mathbf n}(\mathbf v_{1:r}),
\]
where $\mathbf n=\mathbf 0$ is absent because $\mathbb E[\tilde h_{i,p}]=\mathbb E[\tilde h_{j,q}]=0$.
Let $\mathbf A=[a_{p,\mathbf n}]_{p,\mathbf n}$ and $\mathbf B=[b_{q,\mathbf n}]_{q,\mathbf n}$.
By whitening-induced canonicalization (\autoref{eq:whitening_induced_canonicalization}),
the coefficient rows are orthonormal:
\begin{equation}
\mathbf A\mathbf A^\top=\mathbf I_d,\qquad \mathbf B\mathbf B^\top=\mathbf I_d.
\label{eq:AB_row_orthonormal}
\end{equation}
Using \autoref{eq:hermite_crossmoment_diag}, the cross-covariance matrix
$\mathbf C:=\mathrm{Cov}(\tilde{\mathbf h}_i(\mathbf u_{1:r}),\tilde{\mathbf h}_j(\mathbf v_{1:r}))$
satisfies
\begin{equation}
C_{pq}
=\mathbb E\!\left[\tilde h_{i,p}(\mathbf u_{1:r})\,\tilde h_{j,q}(\mathbf v_{1:r})\right]
=\sum_{\mathbf n\neq\mathbf 0} a_{p,\mathbf n}\,b_{q,\mathbf n}\,t_{\mathbf n},
\qquad\Longleftrightarrow\qquad
\mathbf C=\mathbf A\,\mathbf T\,\mathbf B^\top,
\label{eq:C_ATB}
\end{equation}
where $\mathbf T$ is diagonal with diagonal entries $\{t_{\mathbf n}\}_{\mathbf n\neq\mathbf 0}$.

\paragraph{Step 3: A Ky Fan (nuclear-norm) upper bound.}
Let $t_{(1)}\ge t_{(2)}\ge\cdots$ be the nonincreasing rearrangement of $\{t_{\mathbf n}\}_{\mathbf n\neq\mathbf 0}$.
Then for any $\mathbf A,\mathbf B$ satisfying \autoref{eq:AB_row_orthonormal},
\begin{equation}
\|\mathbf C\|_*=\|\mathbf A\mathbf T\mathbf B^\top\|_*
\le \sum_{k=1}^{d} t_{(k)}.
\label{eq:kyfan_bound_diagonal}
\end{equation}
Indeed, by the Ky Fan variational characterization of the nuclear norm,
\[
\|\mathbf A\mathbf T\mathbf B^\top\|_*
=\max_{\mathbf U,\mathbf V\in O(d)}\mathrm{Tr}\!\left(\mathbf U^\top \mathbf A\mathbf T\mathbf B^\top \mathbf V\right)
=\max_{\mathbf U,\mathbf V\in O(d)}\mathrm{Tr}\!\left((\mathbf A^\top\mathbf U)^\top \mathbf T(\mathbf B^\top\mathbf V)\right).
\]
By \autoref{eq:AB_row_orthonormal}, both $\mathbf A^\top\mathbf U$ and $\mathbf B^\top\mathbf V$ have orthonormal
columns, so the last quantity is bounded by the Ky Fan $d$-norm of $\mathbf T$, i.e.
$\sum_{k=1}^d \sigma_k(\mathbf T)=\sum_{k=1}^d t_{(k)}$, proving \autoref{eq:kyfan_bound_diagonal}.
Moreover, the bound is achievable by choosing $\mathbf A,\mathbf B$ to \emph{select} the $d$ Hermite modes
with largest $t_{\mathbf n}$.

\paragraph{Step 4: First-order dominance separates linear from higher-order modes.}
For any multi-index $\mathbf n\neq\mathbf 0$ with total degree $|\mathbf n|:=\sum_{k=1}^r n_k\ge 2$,
\[
t_{\mathbf n}=\prod_{k=1}^r t_{ij,k}^{\,n_k}\le t_{ij,1}^{|\mathbf n|}\le t_{ij,1}^2,
\]
since $0\le t_{ij,k}\le t_{ij,1}<1$.
On the other hand, for the $r$ first-order indices $\mathbf e_k$ we have $t_{\mathbf e_k}=t_{ij,k}$.
If $r\ge 2$, \autoref{ass:first_order_dominance} gives $t_{ij,r}>t_{ij,1}^2$; if $r=1$, the strict inequality
$t_{ij,1}>t_{ij,1}^2$ holds automatically because $t_{ij,1}\in(0,1)$.
Therefore,
\[
t_{ij,r}>\,t_{ij,1}^2\ \ge\ t_{\mathbf n}
\qquad\forall\,\mathbf n\neq\mathbf 0\ \text{with}\ |\mathbf n|\ge 2.
\]
Consequently, the $r$ largest values among $\{t_{\mathbf n}\}_{\mathbf n\neq\mathbf 0}$ are exactly
$\{t_{ij,1},\dots,t_{ij,r}\}$, attained by the first-order Hermite modes
$\{\Psi_{\mathbf e_1},\dots,\Psi_{\mathbf e_r}\}$ (which are linear functions of $\mathbf u_{1:r}$).

\paragraph{Step 5: Conclusion and explicit selector.}
Let $(\tilde{\mathbf h}_i^\star,\tilde{\mathbf h}_j^\star)$ be any population maximizer with finite $d\ge r$
and let $\mathbf C^\star=\mathrm{Cov}(\tilde{\mathbf h}_i^\star,\tilde{\mathbf h}_j^\star)$.
By \autoref{eq:kyfan_bound_diagonal} and attainability, the optimal value equals $\sum_{k=1}^d t_{(k)}$.
By Step 4, the top $r$ singular values of the optimal correlation structure must therefore come from the
first-order modes, and these modes are strictly separated from all higher-order ones by the gap
$t_{ij,r}-t_{ij,1}^2>0$.

Now use the post-orthogonal closure of the whitened class
(\autoref{definition:whitend_encoder}): take an SVD
$\mathbf C^\star=\mathbf Q_i\,\mathrm{diag}(\sigma_1,\dots,\sigma_d)\,\mathbf Q_j^\top$ and replace
$\tilde{\mathbf h}_i^\star,\tilde{\mathbf h}_j^\star$ by
$\mathbf Q_i^\top\tilde{\mathbf h}_i^\star$ and $\mathbf Q_j^\top\tilde{\mathbf h}_j^\star$.
This produces another maximizer (same objective value) whose cross-covariance is diagonal with
$\sigma_1\ge\cdots\ge\sigma_d$.
By Step 4 and $d\ge r$, the first $r$ diagonal entries correspond to the first-order Hermite subspaces, hence
there exist $\mathbf O_i,\mathbf O_j\in O(r)$ such that, $P$-a.s.,
\[
\mathbf P_r\,\tilde{\mathbf h}_i^\star(\mathbf s_i)=\mathbf O_i\,\mathbf u_{1:r},
\qquad
\mathbf P_r\,\tilde{\mathbf h}_j^\star(\mathbf s_j)=\mathbf O_j\,\mathbf v_{1:r},
\]
with the explicit selector
\[
\mathbf P_r := [\mathbf I_r\ \mathbf 0]\in\mathbb R^{r\times d}.
\]
Finally, substituting $\mathbf u_{1:r}=\mathbf U_{ij}(:,1\!:\!r)^\top\mathbf W_i\mathbf s_i$ and
$\mathbf v_{1:r}=\mathbf V_{ij}(:,1\!:\!r)^\top\mathbf W_j\mathbf s_j$ yields exactly
\autoref{eq:two_view_identifiability_rankaware} with $\mathbf P_{r_{ij}}=\mathbf P_r$.
\end{proof}
\suppsection{Proof of Theorem 5.2}
\begin{proof}[Proof of \autoref{thm:multi_view_identifiability_infinite}]
We work entirely in the source domain using the induced whitened mappings
\[
\tilde{\mathbf h}_i := \tilde{\mathbf f}_i \circ \mathbf g_i,
\qquad i \in [N].
\]
By reparameterization invariance (\autoref{lemma:rep-inv}), optimizing
\autoref{eq:mvcca_obj} over $\{\tilde{\mathbf f}_i\}$ is equivalent to
optimizing it over $\{\tilde{\mathbf h}_i\}$, and each
$\tilde{\mathbf h}_i$ satisfies
\[
\mathbb E[\tilde{\mathbf h}_i(\mathbf s_i)] = \mathbf 0,
\qquad
\mathrm{Cov}(\tilde{\mathbf h}_i(\mathbf s_i)) = \mathbf I.
\]

Define the pairwise objectives
\[
J_{ij}(\tilde{\mathbf h}_i,\tilde{\mathbf h}_j)
:=
\left\|
\mathrm{Cov}\big(
\tilde{\mathbf h}_i(\mathbf s_i),
\tilde{\mathbf h}_j(\mathbf s_j)
\big)
\right\|_*,
\qquad 1 \le i < j \le N,
\]
so that
\[
J(\tilde{\mathbf h}_1,\dots,\tilde{\mathbf h}_N)
=
\sum_{1 \le i < j \le N}
J_{ij}(\tilde{\mathbf h}_i,\tilde{\mathbf h}_j).
\]

\medskip
\noindent
\textbf{Step 1: Pairwise optimality of a multi-view maximizer.}

Let
\[
M_{ij}
:=
\sup_{\text{$\tilde{\mathbf h}_i,\tilde{\mathbf h}_j$ whitened}}
J_{ij}(\tilde{\mathbf h}_i,\tilde{\mathbf h}_j)
\]
be the two-view population optimum (with $d_{\mathcal Z}\to\infty$).
For any feasible collection,
\[
J(\tilde{\mathbf h}_1,\dots,\tilde{\mathbf h}_N)
\le
\sum_{i<j} M_{ij}.
\]

Under \autoref{ass:isotropy}, by the canonical factorization
(\autoref{lemma:canonical_factorization}) and the
Mehler--Hermite expansion (\autoref{lemma:coupling_density}),
the cross-covariance operator between two views admits a diagonal
spectral decomposition in the multivariate Hermite basis,
with singular values $\{t_{\mathbf n}\}$.
The corresponding singular functions achieve
\[
J_{ij} = \sum_{\mathbf n} t_{\mathbf n} = M_{ij}.
\]
Thus the upper bound $\sum_{i<j} M_{ij}$ is attainable.

Therefore, if
$\{\tilde{\mathbf h}_i^\star\}_{i=1}^N$
is a global maximizer of $J$, we must have
\begin{equation}
\label{eq:pairwise_saturation_mv}
J_{ij}(\tilde{\mathbf h}_i^\star,\tilde{\mathbf h}_j^\star)
=
M_{ij},
\qquad
\forall\, 1 \le i < j \le N.
\end{equation}
Hence every pair $(\tilde{\mathbf h}_i^\star,\tilde{\mathbf h}_j^\star)$
is a two-view population maximizer.

\medskip
\noindent
\textbf{Step 2: Apply two-view subspace identifiability.}

Fix $i \ne j$ and let
$r_{ij} = \mathrm{rank}(\mathbf A_i \mathbf A_j^\top)$.
By \autoref{thm:two_view_identifiability_infinite},
there exists $\mathbf O_{i\mid j} \in O(r_{ij})$
and a selection matrix
$\mathbf P^{(i\mid j)}_{r_{ij}}$
such that
\begin{equation}
\label{eq:pairwise_id_mv}
\mathbf P^{(i\mid j)}_{r_{ij}}
\tilde{\mathbf h}_i^\star(\mathbf s_i)
=
\mathbf O_{i\mid j}
\mathbf U_{ij}(:,1:r_{ij})^\top
\mathbf W_i \mathbf s_i .
\end{equation}
Thus, for each fixed $i$ and every $j \ne i$,
the representation contains an $r_{ij}$-dimensional block
identifying the pairwise correlated subspace
$\mathcal U_{i\mid j}$.

\medskip
\noindent
\textbf{Step 3: Extract the multi-view intersection.}

By \autoref{def:subspaces},
\[
\mathcal U_i^{\mathrm{mv}}
=
\bigcap_{j \ne i}
\mathcal U_{i\mid j},
\qquad
r_i
=
\dim(\mathcal U_i^{\mathrm{mv}}).
\]
Let
$\mathbf U_i^{\mathrm{mv}} \in
\mathbb R^{d_{\mathcal S_i} \times r_i}$
be an orthonormal basis of
$\mathcal U_i^{\mathrm{mv}}$.

Since
$\mathcal U_i^{\mathrm{mv}}
\subseteq
\mathcal U_{i\mid j_0}$
for any fixed $j_0 \ne i$,
there exists
$\mathbf B_i \in \mathbb R^{r_{ij_0} \times r_i}$
with orthonormal columns such that
\[
\mathbf U_i^{\mathrm{mv}}
=
\mathbf U_{ij_0}(:,1:r_{ij_0})
\mathbf B_i .
\]
Multiplying \autoref{eq:pairwise_id_mv}
(on pair $(i,j_0)$)
by $\mathbf B_i^\top \mathbf O_{i\mid j_0}^\top$
gives
\[
\mathbf B_i^\top
\mathbf O_{i\mid j_0}^\top
\mathbf P^{(i\mid j_0)}_{r_{ij_0}}
\tilde{\mathbf h}_i^\star(\mathbf s_i)
=
(\mathbf U_i^{\mathrm{mv}})^\top
\mathbf W_i \mathbf s_i .
\]

Define
\[
\mathbf P_{r_i}
:=
\mathbf B_i^\top
\mathbf O_{i\mid j_0}^\top
\mathbf P^{(i\mid j_0)}_{r_{ij_0}}
\in
\mathbb R^{r_i \times d_{\mathcal Z}} .
\]
Then
\begin{equation}
\label{eq:mv_identification_step}
\mathbf P_{r_i}
\tilde{\mathbf h}_i^\star(\mathbf s_i)
=
(\mathbf U_i^{\mathrm{mv}})^\top
\mathbf W_i \mathbf s_i .
\end{equation}

Since $\mathbf U_i^{\mathrm{mv}}$ is defined only up to an
orthogonal change of basis,
absorbing that change into
$\mathbf O_i \in O(r_i)$
yields
\[
\mathbf P_{r_i}
\tilde{\mathbf h}_i^\star(\mathbf s_i)
=
\mathbf O_i
(\mathbf U_i^{\mathrm{mv}})^\top
\mathbf W_i \mathbf s_i .
\]

\medskip
\noindent
\textbf{Conclusion.}

\autoref{eq:multi_view_identifiability_infinite}
holds for every $i \in [N]$.
Hence the multi-view population maximizers
identify the jointly correlated subspaces
$\mathcal U_i^{\mathrm{mv}}$
up to view-specific orthogonal transformations.
\end{proof}
\suppsection{Proof of Corollary 2}
\begin{proof}[Proof of \autoref{cor:finite_sample_multiview}]
Fix an arbitrary $i\in[N]$. Recall that $\mathbf P_{i\mid j}$ and
$\widehat{\mathbf P}_{i\mid j}$ denote the orthogonal projectors onto
$\mathcal U_{i\mid j}$ and $\widehat{\mathcal U}_{i\mid j}$, respectively, and
\[
\mathbf S_i:=\frac1{N-1}\sum_{j\ne i}\mathbf P_{i\mid j},
\qquad
\widehat{\mathbf S}_i:=\frac1{N-1}\sum_{j\ne i}\widehat{\mathbf P}_{i\mid j}.
\]

\paragraph{Step 1: Spectral characterization of $\mathcal U_i^{\mathrm{mv}}$.}
We first show that $\mathcal U_i^{\mathrm{mv}}=\bigcap_{j\neq i}\mathcal U_{i\mid j}$
coincides with the eigenspace of $\mathbf S_i$ associated with eigenvalue $1$.

($\subseteq$) If $\mathbf v\in\mathcal U_i^{\mathrm{mv}}$, then $\mathbf P_{i\mid j}\mathbf v=\mathbf v$
for all $j\neq i$, hence
\[
\mathbf S_i \mathbf v
=\frac1{N-1}\sum_{j\ne i}\mathbf P_{i\mid j}\mathbf v
=\frac1{N-1}\sum_{j\ne i}\mathbf v
=\mathbf v,
\]
so $\mathbf v$ lies in the $1$-eigenspace of $\mathbf S_i$.

($\supseteq$) Conversely, suppose $\mathbf S_i\mathbf v=\mathbf v$. Taking inner products,
\begin{equation}\label{eq:Si_quadratic_form}
\|\mathbf v\|_2^2
=\mathbf v^\top \mathbf S_i \mathbf v
=\frac1{N-1}\sum_{j\ne i}\mathbf v^\top \mathbf P_{i\mid j}\mathbf v
=\frac1{N-1}\sum_{j\ne i}\|\mathbf P_{i\mid j}\mathbf v\|_2^2.
\end{equation}
Since $\mathbf P_{i\mid j}$ is an orthogonal projector, $\|\mathbf P_{i\mid j}\mathbf v\|_2\le \|\mathbf v\|_2$ for every $j$.
Thus the average in \autoref{eq:Si_quadratic_form} can equal $\|\mathbf v\|_2^2$ only if
$\|\mathbf P_{i\mid j}\mathbf v\|_2=\|\mathbf v\|_2$ for all $j\ne i$, which forces
$\mathbf P_{i\mid j}\mathbf v=\mathbf v$ for all $j\ne i$. Hence
$\mathbf v\in \bigcap_{j\ne i}\mathcal U_{i\mid j}=\mathcal U_i^{\mathrm{mv}}$.

Therefore, the $1$-eigenspace of $\mathbf S_i$ is exactly $\mathcal U_i^{\mathrm{mv}}$.
In particular, if $r_i=\dim(\mathcal U_i^{\mathrm{mv}})$, then
$\lambda_1(\mathbf S_i)=\cdots=\lambda_{r_i}(\mathbf S_i)=1$ and
\[
\Gamma_i
:=1-\lambda_{r_i+1}(\mathbf S_i)
=\lambda_{r_i}(\mathbf S_i)-\lambda_{r_i+1}(\mathbf S_i)
>0.
\]

\paragraph{Step 2: Davis--Kahan for the first inequality.}
Both $\mathbf S_i$ and $\widehat{\mathbf S}_i$ are symmetric.
Let $\mathcal U_i^{\mathrm{mv}}$ be the invariant subspace of $\mathbf S_i$
corresponding to the eigenvalue cluster at $1$ (multiplicity $r_i$), and let
$\widehat{\mathcal U}_i^{\mathrm{mv}}$ be the top-$r_i$ eigenspace of $\widehat{\mathbf S}_i$.
By the Davis--Kahan $\sin\Theta$ theorem for symmetric matrices applied to the
eigen-gap $\Gamma_i=\lambda_{r_i}(\mathbf S_i)-\lambda_{r_i+1}(\mathbf S_i)$,
\[
\big\|\sin\Theta(\widehat{\mathcal U}_i^{\mathrm{mv}},\mathcal U_i^{\mathrm{mv}})\big\|_2
\;\le\;
\frac{\|\widehat{\mathbf S}_i-\mathbf S_i\|_2}{\Gamma_i},
\]
which is the first displayed inequality.

\paragraph{Step 3: Bounding $\|\widehat{\mathbf S}_i-\mathbf S_i\|_2$ by pairwise errors.}
By linearity and the triangle inequality,
\[
\|\widehat{\mathbf S}_i-\mathbf S_i\|_2
=
\left\|\frac1{N-1}\sum_{j\ne i}\left(\widehat{\mathbf P}_{i\mid j}-\mathbf P_{i\mid j}\right)\right\|_2
\le
\frac1{N-1}\sum_{j\ne i}\|\widehat{\mathbf P}_{i\mid j}-\mathbf P_{i\mid j}\|_2
\le
\max_{j\ne i}\|\widehat{\mathbf P}_{i\mid j}-\mathbf P_{i\mid j}\|_2.
\]

It remains to relate projector error to principal-angle error.
Fix $j\ne i$ and let $\mathbf U,\widehat{\mathbf U}$ be orthonormal basis matrices for
$\mathcal U_{i\mid j}$ and $\widehat{\mathcal U}_{i\mid j}$, respectively, so that
$\mathbf P:=\mathbf U\mathbf U^\top$ and $\widehat{\mathbf P}:=\widehat{\mathbf U}\widehat{\mathbf U}^\top$.
Use the identity
\[
\widehat{\mathbf P}-\mathbf P
=(\mathbf I-\mathbf P)\widehat{\mathbf P}-\mathbf P(\mathbf I-\widehat{\mathbf P}),
\]
hence
\[
\|\widehat{\mathbf P}-\mathbf P\|_2
\le
\|(\mathbf I-\mathbf P)\widehat{\mathbf P}\|_2+\|\mathbf P(\mathbf I-\widehat{\mathbf P})\|_2.
\]
Since $\widehat{\mathbf P}$ is a projector onto $\mathrm{col}(\widehat{\mathbf U})$,
\[
\|(\mathbf I-\mathbf P)\widehat{\mathbf P}\|_2
=
\|(\mathbf I-\mathbf P)\widehat{\mathbf U}\|_2
=
\|\sin\Theta(\widehat{\mathcal U}_{i\mid j},\mathcal U_{i\mid j})\|_2,
\]
and similarly
\[
\|\mathbf P(\mathbf I-\widehat{\mathbf P})\|_2
=
\|(\mathbf I-\widehat{\mathbf P})\mathbf U\|_2
=
\|\sin\Theta(\widehat{\mathcal U}_{i\mid j},\mathcal U_{i\mid j})\|_2.
\]
Therefore,
\[
\|\widehat{\mathbf P}_{i\mid j}-\mathbf P_{i\mid j}\|_2
\le
2\,\|\sin\Theta(\widehat{\mathcal U}_{i\mid j},\mathcal U_{i\mid j})\|_2.
\]
Taking the maximum over $j\ne i$ and combining with the previous display yields
\[
\|\widehat{\mathbf S}_i-\mathbf S_i\|_2
\le
2\max_{j\ne i}\|\sin\Theta(\widehat{\mathcal U}_{i\mid j},\mathcal U_{i\mid j})\|_2.
\]
Plugging this into Step 2 proves the second displayed inequality:
\[
\big\|\sin\Theta(\widehat{\mathcal U}_i^{\mathrm{mv}},\mathcal U_i^{\mathrm{mv}})\big\|_2
\le
\frac{\|\widehat{\mathbf S}_i-\mathbf S_i\|_2}{\Gamma_i}
\le
\frac{2}{\Gamma_i}\max_{j\ne i}\|\sin\Theta(\widehat{\mathcal U}_{i\mid j},\mathcal U_{i\mid j})\|_2.
\]

\paragraph{Step 4: $O_{\mathbb P}(n^{-1/2})$ consistency.}
By \autoref{thm:finite_sample_pairwise}, $\|\widehat{\mathbf R}_{ij}-\mathbf R_{ij}\|_2
=O_{\mathbb P}(n^{-1/2})$ uniformly over pairs, and on the event
$\|\widehat{\mathbf R}_{ij}-\mathbf R_{ij}\|_2\le \Delta_{ij}/2$,
\[
\|\sin\Theta(\widehat{\mathcal U}_{i\mid j},\mathcal U_{i\mid j})\|_2
\le
\frac{2\|\widehat{\mathbf R}_{ij}-\mathbf R_{ij}\|_2}{\Delta_{ij}}
=O_{\mathbb P}(n^{-1/2}),
\]
with constants governed by $\Delta_{ij}$. Substituting into the inequality proved above
shows that
\[
\|\sin\Theta(\widehat{\mathcal U}_i^{\mathrm{mv}},\mathcal U_i^{\mathrm{mv}})\|_2
=O_{\mathbb P}(n^{-1/2}),
\]
with the pre-constant depending on $\Gamma_i^{-1}$ and $\{\Delta_{ij}^{-1}\}_{j\ne i}$, as claimed.
\end{proof}
\suppsection{Proof of Theorem 5.3}
\begin{lemma}[Sub-Gaussian Covariance Concentration] 
\label{lem:cov_concentration_subg} 
Let $\{\mathbf{y}^{(t)}\}_{t=1}^n$ be i.i.d.\ centered sub-Gaussian random 
vectors in $\mathbb{R}^m$ with covariance $\bm{\Sigma}$, satisfying 
$\mathbb{E}\exp\big((\mathbf{v}^\top \mathbf{y}^{(1)})^2 / (K^2 \mathbf{v}^\top\bm{\Sigma}\mathbf{v})\big) \le 2$ 
for all $\mathbf{v} \in \mathbb{S}^{m-1}$. For the empirical covariance 
$\widehat{\bm{\Sigma}} := \frac{1}{n}\sum_{t=1}^n \mathbf{y}^{(t)}\mathbf{y}^{(t)\top}$ 
and any $\delta \in (0,1)$, it holds with probability at least $1-\delta$ that: 
\[ \big\|\widehat{\bm{\Sigma}} - \bm{\Sigma}\big\|_2 \le c K^2\|\bm{\Sigma}\|_2 \left( \sqrt{\frac{m+\log(2/\delta)}{n}} + \frac{m+\log(2/\delta)}{n} \right), \] for a universal constant $c > 0$. 
\end{lemma}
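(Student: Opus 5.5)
The plan is the standard $\varepsilon$-net argument, reduced to a scalar Bernstein inequality. First we normalize. Setting $\mathbf w^{(t)}:=\bm\Sigma^{-1/2}\mathbf y^{(t)}$ when $\bm\Sigma\succ0$ (in general we restrict to $\mathrm{col}(\bm\Sigma)$) gives isotropic vectors. The hypothesis then becomes $\|\mathbf v^\top\mathbf w\|_{\psi_2}\lesssim K$ for every unit $\mathbf v$. Moreover
\[
\|\widehat{\bm\Sigma}-\bm\Sigma\|_2\le\|\bm\Sigma\|_2\,\Big\|\tfrac1n\sum_t\mathbf w^{(t)}\mathbf w^{(t)\top}-\mathbf I\Big\|_2 ,
\]
so it suffices to prove the claim for $\bm\Sigma=\mathbf I$ with the factor $\|\bm\Sigma\|_2$ pulled out.

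Next we discretize. Let $\mathcal N$ be a $1/4$-net of $\mathbb S^{m-1}$ with $|\mathcal N|\le 9^m$. For a symmetric matrix $\mathbf E$, the standard net lemma gives $\|\mathbf E\|_2\le 2\max_{\mathbf v\in\mathcal N}|\mathbf v^\top\mathbf E\mathbf v|$. This replaces the operator norm by finitely many quadratic forms.

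For each fixed $\mathbf v\in\mathcal N$ we apply a scalar bound. The quantity $\mathbf v^\top\mathbf E\mathbf v=\frac1n\sum_t\big((\mathbf v^\top\mathbf w^{(t)})^2-1\big)$ is an average of i.i.d. centered variables. Each summand is sub-exponential with $\psi_1$-norm $\lesssim K^2$, since the square of a sub-Gaussian variable is sub-exponential and centering costs only a constant factor. Bernstein's inequality then gives
\[
\mathbb P\big(|\mathbf v^\top\mathbf E\mathbf v|\ge K^2\varepsilon\big)\le 2\exp\!\big(-c\,n\min(\varepsilon,\varepsilon^2)\big).
\]

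The final step combines these. A union bound over the $9^m$ net points gives total failure probability at most $2\cdot 9^m\exp(-cn\min(\varepsilon,\varepsilon^2))$. We choose
\[
\varepsilon=C\Big(\sqrt{\tfrac{m+\log(2/\delta)}{n}}+\tfrac{m+\log(2/\delta)}{n}\Big).
\]
Then $n\min(\varepsilon,\varepsilon^2)\ge C\,(m+\log(2/\delta))$: the $\varepsilon^2$ branch matches the square-root term and the $\varepsilon$ branch matches the linear term. For $C$ large this absorbs the $m\log 9$ entropy term and leaves failure probability at most $\delta$. Multiplying back by $2\|\bm\Sigma\|_2$ yields the bound with $c K^2\|\bm\Sigma\|_2$.

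The main obstacle is bookkeeping rather than conceptual. Step two must check that the paper's normalized $\psi_2$ hypothesis transfers to $\psi_2$-norm $\lesssim K$ for the whitened projections. Step four must balance the two Bernstein regimes so that both terms in the stated bound appear with a single universal constant. Since $\widehat{\bm\Sigma}$ here is uncentered and the $\mathbf y^{(t)}$ are centered, no mean-correction term is needed.
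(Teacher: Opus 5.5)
Your argument is correct: whitening reduces the problem to the isotropic case with $\|\mathbf u^\top\mathbf w\|_{\psi_2}\le K$, since the hypothesis is homogeneous in $\mathbf v$, and the $1/4$-net, Bernstein, and union-bound bookkeeping with $\varepsilon=C\bigl(\sqrt{a/n}+a/n\bigr)$, $a=m+\log(2/\delta)$, goes through as you describe. The paper gives no proof of this lemma and invokes it as the standard covariance-estimation bound (Vershynin, High-Dimensional Probability, Thm.~4.7.1), so your proposal amounts to the textbook proof of that result.
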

\begin{proof}[Proof of \autoref{thm:finite_sample_pairwise}]
Fix $d:=d_{\mathcal Z}$ and $\delta\in(0,1)$.
Recall the \emph{population-whitened} representations
\[
\tilde{\mathbf z}_i
:=\bm\Sigma_{ii}^{-1/2}\big(\mathbf z_i-\mathbb E[\mathbf z_i]\big),
\qquad
\mathrm{Cov}(\tilde{\mathbf z}_i)=\mathbf I_d,
\]
and note that the population normalized cross-covariance is precisely
\[
\mathbf R_{ij}
=\bm\Sigma_{ii}^{-1/2}\bm\Sigma_{ij}\bm\Sigma_{jj}^{-1/2}
=\mathrm{Cov}(\tilde{\mathbf z}_i,\tilde{\mathbf z}_j).
\]
Throughout the proof, we work in these population-whitened coordinates (i.e.,
we replace $\tilde{\mathbf z}_i$ by $\mathbf z_i$ for notational simplicity).
Thus we may assume
\begin{equation}\label{eq:proof_isotropic}
\mathbb E[\mathbf z_i]=\mathbf 0,\qquad \mathrm{Cov}(\mathbf z_i)=\mathbf I_d,\qquad
\mathbf R_{ij}=\mathrm{Cov}(\mathbf z_i,\mathbf z_j).
\end{equation}
The empirical (cross-)covariances are defined as in the main text,
\[
\widehat{\bm\Sigma}_{ij}
:=\frac1n\sum_{t=1}^n(\mathbf z_i^{(t)}-\bar{\mathbf z}_i)(\mathbf z_j^{(t)}-\bar{\mathbf z}_j)^\top,
\qquad
\widehat{\mathbf R}_{ij}:=\widehat{\bm\Sigma}_{ii}^{-1/2}\widehat{\bm\Sigma}_{ij}\widehat{\bm\Sigma}_{jj}^{-1/2}.
\]

\paragraph{Step 1: sub-Gaussian concentration for pairwise (cross-)covariances.}
Fix a pair $1\le i<j\le N$ and define the stacked vector
\[
\mathbf y^{(t)}:=\begin{bmatrix}\mathbf z_i^{(t)}\\ \mathbf z_j^{(t)}\end{bmatrix}\in\mathbb R^{2d}.
\]
Let $\mathbf v=[\mathbf v_1^\top,\mathbf v_2^\top]^\top\in\mathbb S^{2d-1}$.
By the triangle inequality for the Orlicz $\psi_2$ norm and \autoref{ass:subgaussian_rep},
\[
\|\mathbf v^\top \mathbf y^{(t)}\|_{\psi_2}
=\|\mathbf v_1^\top\mathbf z_i^{(t)}+\mathbf v_2^\top\mathbf z_j^{(t)}\|_{\psi_2}
\le \|\mathbf v_1^\top\mathbf z_i^{(t)}\|_{\psi_2}+\|\mathbf v_2^\top\mathbf z_j^{(t)}\|_{\psi_2}
\le \kappa(\|\mathbf v_1\|_2+\|\mathbf v_2\|_2)
\le \sqrt2\,\kappa.
\]
Hence $\mathbf y^{(t)}$ is sub-Gaussian in $\mathbb R^{2d}$ with parameter $\lesssim \kappa$.
Its population covariance is the block matrix
\[
\bm\Sigma_{\mathbf y}
:=\mathrm{Cov}(\mathbf y^{(1)})
=
\begin{pmatrix}
\mathbf I_d & \mathbf R_{ij}\\
\mathbf R_{ij}^\top & \mathbf I_d
\end{pmatrix}.
\]
Moreover, $\|\mathbf R_{ij}\|_2\le 1$ by Cauchy--Schwarz: for any unit $\mathbf a,\mathbf b$,
\[
|\mathbf a^\top \mathbf R_{ij}\mathbf b|
=\big|\mathbb E[(\mathbf a^\top\mathbf z_i)(\mathbf b^\top\mathbf z_j)]\big|
\le \sqrt{\mathbb E(\mathbf a^\top\mathbf z_i)^2}\sqrt{\mathbb E(\mathbf b^\top\mathbf z_j)^2}
=1,
\]
so $\|\bm\Sigma_{\mathbf y}\|_2\le 2$.

Define the \emph{uncentered} empirical covariance
$\widetilde{\bm\Sigma}_{\mathbf y}:=\frac1n\sum_{t=1}^n \mathbf y^{(t)}\mathbf y^{(t)\top}$
and the centered one
$\widehat{\bm\Sigma}_{\mathbf y}:=\frac1n\sum_{t=1}^n(\mathbf y^{(t)}-\bar{\mathbf y})(\mathbf y^{(t)}-\bar{\mathbf y})^\top
=\widetilde{\bm\Sigma}_{\mathbf y}-\bar{\mathbf y}\bar{\mathbf y}^\top$.
Applying \autoref{lem:cov_concentration_subg} to $\{\mathbf y^{(t)}\}_{t=1}^n$ (with $m=2d$ and $K\asymp \kappa$) yields that,
with probability at least $1-\delta_{ij}/2$,
\begin{equation}\label{eq:uncentered_cov_conc}
\big\|\widetilde{\bm\Sigma}_{\mathbf y}-\bm\Sigma_{\mathbf y}\big\|_2
\le c_1\kappa^2\left(
\sqrt{\frac{2d+\log(4/\delta_{ij})}{n}}+\frac{2d+\log(4/\delta_{ij})}{n}
\right),
\end{equation}
for a universal constant $c_1>0$.

We also need to control the mean correction term $\bar{\mathbf y}\bar{\mathbf y}^\top$.
By standard sub-Gaussian mean concentration (e.g., Vershynin, 2018),
with probability at least $1-\delta_{ij}/2$,
\begin{equation}\label{eq:mean_conc}
\|\bar{\mathbf y}\|_2
\le c_2\kappa\sqrt{\frac{2d+\log(2/\delta_{ij})}{n}},
\end{equation}
for a universal $c_2>0$, hence
$\|\bar{\mathbf y}\bar{\mathbf y}^\top\|_2=\|\bar{\mathbf y}\|_2^2
\le c_2^2\kappa^2\frac{2d+\log(2/\delta_{ij})}{n}$.
Combining \autoref{eq:uncentered_cov_conc}--\autoref{eq:mean_conc} and absorbing the quadratic term
into the square-root term (and using a larger universal constant when the square-root term exceeds $1$),
we obtain that with probability at least $1-\delta_{ij}$,
\begin{equation}\label{eq:centered_cov_conc}
\big\|\widehat{\bm\Sigma}_{\mathbf y}-\bm\Sigma_{\mathbf y}\big\|_2
\le c_3\kappa^2\sqrt{\frac{2d+\log(4/\delta_{ij})}{n}},
\end{equation}
for a universal $c_3>0$.

Since each block of $\widehat{\bm\Sigma}_{\mathbf y}-\bm\Sigma_{\mathbf y}$
is a submatrix of the full difference, we have (on the same event)
\begin{equation}\label{eq:block_bounds}
\|\widehat{\bm\Sigma}_{ii}-\mathbf I\|_2\le \varepsilon_{ij},\qquad
\|\widehat{\bm\Sigma}_{jj}-\mathbf I\|_2\le \varepsilon_{ij},\qquad
\|\widehat{\bm\Sigma}_{ij}-\mathbf R_{ij}\|_2\le \varepsilon_{ij},
\end{equation}
where
\[
\varepsilon_{ij}
:=c_3\kappa^2\sqrt{\frac{2d+\log(4/\delta_{ij})}{n}}.
\]

\paragraph{Step 2: control of empirical whitening matrices.}
Assume $\varepsilon_{ij}<1$. Then Weyl's inequality gives
$\lambda_{\min}(\widehat{\bm\Sigma}_{ii})\ge 1-\varepsilon_{ij}>0$ and similarly for $j$,
so $\widehat{\bm\Sigma}_{ii}^{-1/2}$ and $\widehat{\bm\Sigma}_{jj}^{-1/2}$ exist.
Moreover, all eigenvalues of $\widehat{\bm\Sigma}_{ii}$ lie in $[1-\varepsilon_{ij},1+\varepsilon_{ij}]$,
hence those of $\widehat{\bm\Sigma}_{ii}^{-1/2}$ lie in
$[(1+\varepsilon_{ij})^{-1/2},(1-\varepsilon_{ij})^{-1/2}]$, and therefore
\begin{equation}\label{eq:inv_sqrt_bounds}
\|\widehat{\bm\Sigma}_{ii}^{-1/2}\|_2\le (1-\varepsilon_{ij})^{-1/2},
\qquad
\|\widehat{\bm\Sigma}_{ii}^{-1/2}-\mathbf I\|_2
\le (1-\varepsilon_{ij})^{-1/2}-1
\le \frac{\varepsilon_{ij}}{1-\varepsilon_{ij}}.
\end{equation}
The same bounds hold for $\widehat{\bm\Sigma}_{jj}^{-1/2}$.

\paragraph{Step 3: concentration of $\widehat{\mathbf R}_{ij}$.}
Write $\mathbf A_i:=\widehat{\bm\Sigma}_{ii}^{-1/2}$ and $\mathbf A_j:=\widehat{\bm\Sigma}_{jj}^{-1/2}$.
Then $\widehat{\mathbf R}_{ij}=\mathbf A_i\widehat{\bm\Sigma}_{ij}\mathbf A_j$ and
\[
\widehat{\mathbf R}_{ij}-\mathbf R_{ij}
=
\mathbf A_i(\widehat{\bm\Sigma}_{ij}-\mathbf R_{ij})\mathbf A_j
+\big(\mathbf A_i\mathbf R_{ij}\mathbf A_j-\mathbf R_{ij}\big).
\]
Using $\|\mathbf R_{ij}\|_2\le 1$, the triangle inequality, and \autoref{eq:inv_sqrt_bounds},
on the event $\varepsilon_{ij}\le 1/2$ we obtain
\begin{align*}
\|\widehat{\mathbf R}_{ij}-\mathbf R_{ij}\|_2
&\le
\|\mathbf A_i\|_2\,\|\widehat{\bm\Sigma}_{ij}-\mathbf R_{ij}\|_2\,\|\mathbf A_j\|_2
+\|(\mathbf A_i-\mathbf I)\mathbf R_{ij}\mathbf A_j\|_2+\|\mathbf R_{ij}(\mathbf A_j-\mathbf I)\|_2\\
&\le
(1-\varepsilon_{ij})^{-1}\varepsilon_{ij}
+\|\mathbf A_i-\mathbf I\|_2\|\mathbf A_j\|_2
+\|\mathbf A_j-\mathbf I\|_2\\
&\le
2\varepsilon_{ij}+2\varepsilon_{ij}\sqrt2+2\varepsilon_{ij}
\;\le\; 7\,\varepsilon_{ij}.
\end{align*}
If $\varepsilon_{ij}>1/2$, then the desired bound is trivial after enlarging constants,
because both $\|\widehat{\mathbf R}_{ij}\|_2\le 1$ and $\|\mathbf R_{ij}\|_2\le 1$
(immediate from Cauchy--Schwarz on the empirical and population distributions), so
$\|\widehat{\mathbf R}_{ij}-\mathbf R_{ij}\|_2\le 2$.

\paragraph{Step 4: union bound over all pairs.}
Set $\delta_{ij}:=\delta/N^2$ and take a union bound over all $\binom{N}{2}\le N^2/2$ pairs.
Using $2d+\log(4/\delta_{ij}) = 2d+\log(4N^2/\delta)\lesssim d+\log(N/\delta)$
and absorbing constants, we conclude that with probability at least $1-\delta$,
simultaneously for all $1\le i<j\le N$,
\[
\|\widehat{\mathbf R}_{ij}-\mathbf R_{ij}\|_2
\le
C\,\kappa^2\sqrt{\frac{d+\log(N/\delta)}{n}},
\]
for a universal constant $C>0$. This proves \autoref{eq:Rij_conc}.

\paragraph{Step 5: subspace perturbation bound (Wedin).}
Fix $i<j$ and let $\mathbf E:=\widehat{\mathbf R}_{ij}-\mathbf R_{ij}$.
Let $r:=r_{ij}$ and let $\mathcal U_{i\mid j}$ (resp.\ $\widehat{\mathcal U}_{i\mid j}$)
be the rank-$r$ left singular subspace of $\mathbf R_{ij}$ (resp.\ $\widehat{\mathbf R}_{ij}$).
Let $\Delta_{ij}:=\sigma_r(\mathbf R_{ij})-\sigma_{r+1}(\mathbf R_{ij})>0$.
By Weyl's inequality, for all $k$,
$|\sigma_k(\widehat{\mathbf R}_{ij})-\sigma_k(\mathbf R_{ij})|\le \|\mathbf E\|_2$.
Hence on the event $\|\mathbf E\|_2\le \Delta_{ij}/2$,
\[
\sigma_r(\widehat{\mathbf R}_{ij})-\sigma_{r+1}(\widehat{\mathbf R}_{ij})
\ge
\big(\sigma_r(\mathbf R_{ij})-\|\mathbf E\|_2\big)
-\big(\sigma_{r+1}(\mathbf R_{ij})+\|\mathbf E\|_2\big)
=
\Delta_{ij}-2\|\mathbf E\|_2
\ge \Delta_{ij}/2.
\]
Wedin's $\sin\Theta$ theorem for singular subspaces then gives
\[
\|\sin\Theta(\widehat{\mathcal U}_{i\mid j},\mathcal U_{i\mid j})\|_2
\le
\frac{\|\mathbf E\|_2}{\Delta_{ij}/2}
=
\frac{2\|\widehat{\mathbf R}_{ij}-\mathbf R_{ij}\|_2}{\Delta_{ij}},
\]
which is \autoref{eq:pairwise_sintheta}. Applying the same argument to
$\widehat{\mathbf R}_{ij}^\top$ and $\mathbf R_{ij}^\top$ yields the identical bound for the
right singular subspace $\widehat{\mathcal U}_{j\mid i}$.
\end{proof}

\suppsection{Proof of Additional Latent Priors}
\label{sec:add_latent_priors}

The Gaussian proof relies on the canonical Gaussian factorization in
\autoref{lemma:canonical_factorization} and the Mehler--Hermite expansion in
\autoref{lemma:coupling_density}. For the additional latent priors considered
in the experiments, the same argument applies after replacing the Hermite
system by the corresponding Lancaster orthogonal-polynomial system
\citep{lancaster1958structure,eagleson1964polynomial}. We state this extension
at the level needed for the subspace-identifiability results.

\begin{remark}[Scope of the non-Gaussian extension]
For non-Gaussian priors, the statement below should be read as an admissible
pairwise Lancaster condition in the canonical coordinates. Unlike the Gaussian
case, arbitrary real-valued whitening and orthogonal canonicalization do not in
general preserve Poisson, negative-binomial, Gamma, or hypergeometric product
structure. Thus the extension requires that, for each pair of views, the
canonical source coordinates themselves admit the Lancaster expansion below.
\end{remark}

\begin{assumption}[Pairwise Lancaster canonical coordinates]
\label{ass:pairwise_lancaster_coordinates}
Fix \(1\le i<j\le N\), write \(r:=r_{ij}\), and define the canonical
coordinates as in \autoref{equ:canonicalizers},
\[
\mathbf u=\mathbf U_{ij}^{\top}\mathbf W_i\mathbf s_i,
\qquad
\mathbf v=\mathbf V_{ij}^{\top}\mathbf W_j\mathbf s_j .
\]
For each correlated coordinate \(k\in[r]\), assume that the pair
\((u_k,v_k)\) has a bivariate Lancaster law with common standardized marginal
\(\nu_{ij,k}\), orthonormal polynomial basis
\[
\{\psi_{ij,k,n}\}_{n\in I_{ij,k}}\subset L^2(\nu_{ij,k}),
\qquad
\psi_{ij,k,0}\equiv 1,\qquad
\psi_{ij,k,1}(x)=x,
\]
and Lancaster coefficients
\(\{\lambda_{ij,k,n}\}_{n\in I_{ij,k}}\), with
\(\lambda_{ij,k,0}=1\), such that
\begin{equation}
\label{eq:scalar_lancaster_pairwise}
dP_{u_k,v_k}(a,b)
=
\left(
\sum_{n\in I_{ij,k}}
\lambda_{ij,k,n}\,
\psi_{ij,k,n}(a)\psi_{ij,k,n}(b)
\right)
d\nu_{ij,k}(a)d\nu_{ij,k}(b).
\end{equation}
Assume further that the correlated coordinate pairs
\(\{(u_k,v_k)\}_{k=1}^{r}\) are mutually independent, and that all coordinates
with index larger than \(r\) are cross-view independent. For the
infinite-dimensional statement, assume the resulting diagonal operator is
trace-class:
\[
\sum_{\mathbf n\neq \mathbf 0}
|\lambda_{ij,\mathbf n}|<\infty .
\]
\end{assumption}

For a multi-index
\(\mathbf n=(n_1,\ldots,n_r)\in\mathcal I_{ij}:=
\prod_{k=1}^{r}I_{ij,k}\), define
\[
\Psi_{ij,\mathbf n}(\mathbf u)
:=
\prod_{k=1}^{r}\psi_{ij,k,n_k}(u_k),
\qquad
\Psi_{ij,\mathbf n}(\mathbf v)
:=
\prod_{k=1}^{r}\psi_{ij,k,n_k}(v_k),
\]
and
\[
\lambda_{ij,\mathbf n}
:=
\prod_{k=1}^{r}\lambda_{ij,k,n_k}.
\]
The constant mode \(\mathbf n=\mathbf 0\) is omitted below because the
representations are centered.

\begin{lemma}[Lancaster diagonalization of pairwise correlations]
\label{lemma:lancaster_pairwise_diagonalization}
Under \autoref{ass:pairwise_lancaster_coordinates}, the tensor-product basis
\(\{\Psi_{ij,\mathbf n}\}_{\mathbf n\in\mathcal I_{ij}}\) is orthonormal, and
for all \(\mathbf m,\mathbf n\in\mathcal I_{ij}\),
\begin{equation}
\label{eq:lancaster_cross_moments}
\mathbb E\!\left[
\Psi_{ij,\mathbf m}(\mathbf u)
\Psi_{ij,\mathbf n}(\mathbf v)
\right]
=
\lambda_{ij,\mathbf n}\delta_{\mathbf m\mathbf n}.
\end{equation}
\end{lemma}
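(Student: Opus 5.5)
The plan is to mirror the Gaussian argument of \autoref{lemma:coupling_density}, replacing the Mehler--Hermite kernel with the scalar Lancaster expansions \eqref{eq:scalar_lancaster_pairwise}. The proof splits into orthonormality of the tensor basis and diagonality of the cross moments. Both reduce to one-dimensional statements via the independence across $k$ assumed in \autoref{ass:pairwise_lancaster_coordinates}.

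\emph{Orthonormality.} Setting $b$-integration aside, \eqref{eq:scalar_lancaster_pairwise} with $\psi_{ij,k,0}\equiv1$, $\lambda_{ij,k,0}=1$ and the orthonormality of $\{\psi_{ij,k,n}\}$ shows that the marginals of $u_k$ and $v_k$ are both $\nu_{ij,k}$. Indeed, integrating out $b$ kills every term except $n=0$. Since the pairs are mutually independent, $\mathbf u$ has law $\bigotimes_k\nu_{ij,k}$, and the same holds for $\mathbf v$. By Fubini,
\[
\mathbb E[\Psi_{ij,\mathbf m}(\mathbf u)\Psi_{ij,\mathbf n}(\mathbf u)]=\prod_{k=1}^r\int\psi_{ij,k,m_k}\psi_{ij,k,n_k}\,d\nu_{ij,k}=\prod_k\delta_{m_kn_k}=\delta_{\mathbf m\mathbf n}.
\]

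\emph{Cross moments.} By independence of the pairs $(u_k,v_k)$,
\[
\mathbb E[\Psi_{ij,\mathbf m}(\mathbf u)\Psi_{ij,\mathbf n}(\mathbf v)]=\prod_k\mathbb E[\psi_{ij,k,m_k}(u_k)\psi_{ij,k,n_k}(v_k)].
\]
So it suffices to show the scalar identity $\mathbb E[\psi_{ij,k,m}(u_k)\psi_{ij,k,n}(v_k)]=\lambda_{ij,k,n}\delta_{mn}$. Substitute \eqref{eq:scalar_lancaster_pairwise}, interchange the sum and the integral, and use orthonormality twice. This gives $\sum_\ell\lambda_{ij,k,\ell}\delta_{m\ell}\delta_{n\ell}$. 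Taking the product over $k$ then yields $\prod_k\lambda_{ij,k,n_k}\delta_{m_kn_k}=\lambda_{ij,\mathbf n}\delta_{\mathbf m\mathbf n}$, which is \eqref{eq:lancaster_cross_moments}.

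\emph{Main obstacle.} The only delicate point is justifying the interchange of summation and integration. Unlike the Gaussian case, there is no closed form like $\sum t^{2n}<\infty$ here. I would avoid it as follows. First, read the Lancaster expansion as the statement that the likelihood ratio $dP_{u_k,v_k}/d(\nu\otimes\nu)$ lies in $L^2(\nu\otimes\nu)$ with coefficients $\lambda_{ij,k,n}$ in the orthonormal system $\{\psi_n\otimes\psi_n\}$, which is the standard Lancaster meaning of the expansion. Second, note that the cross moment is exactly the $L^2(\nu\otimes\nu)$ inner product of this density ratio with $\psi_m\otimes\psi_n$. The result then follows by continuity of the inner product (Parseval), requiring only square-summability of $\{\lambda_{ij,k,n}\}$ and no pointwise convergence. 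If only weak convergence is intended, the same conclusion holds by defining $\lambda_{ij,k,n}:=\mathbb E[\psi_n(u_k)\psi_n(v_k)]$ together with the vanishing of off-diagonal moments, which is the standard Lancaster characterization.
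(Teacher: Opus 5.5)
Your proof is correct and follows essentially the same route as the paper's sketch. Orthonormality comes from independence across the canonical coordinates together with one-dimensional orthonormality, and the cross-moment identity comes from integrating against the Lancaster expansion and applying orthonormality coordinatewise. You factor the expectation over the independent pairs before expanding, whereas the paper first multiplies the scalar expansions into a joint product expansion; this ordering makes no real difference. Your reading of \eqref{eq:scalar_lancaster_pairwise} as an $L^2(\nu_{ij,k}\otimes\nu_{ij,k})$ expansion of the likelihood ratio, with the cross moment computed as an inner product, also supplies the sum--integral interchange that the paper's sketch leaves implicit.
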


\begin{proof}[Proof sketch]
The tensor-product basis is orthonormal by independence across canonical
coordinates and by the one-dimensional orthonormality of each
\(\{\psi_{ij,k,n}\}_{n\in I_{ij,k}}\). Multiplying the scalar Lancaster
expansions in \eqref{eq:scalar_lancaster_pairwise} over
\(k=1,\ldots,r\) gives the product expansion of the joint law of
\((\mathbf u[1:r],\mathbf v[1:r])\). Integrating
\(\Psi_{ij,\mathbf m}(\mathbf u)\Psi_{ij,\mathbf n}(\mathbf v)\) against this
product expansion and using orthonormality in each coordinate leaves only the
diagonal term \(\mathbf m=\mathbf n\), yielding
\eqref{eq:lancaster_cross_moments}.
\end{proof}

\begin{proposition}[Two-view subspace recovery under Lancaster priors]
\label{prop:lancaster_two_view_subspace}
Fix \(1\le i<j\le N\) and suppose
\autoref{ass:pairwise_lancaster_coordinates} holds for this pair. Let
\((\tilde{\mathbf h}_i,\tilde{\mathbf h}_j)\) be any feasible whitened
source-domain representation pair. Expanding each coordinate in the
Lancaster tensor basis gives
\[
\tilde h_{i,p}(\mathbf u)
=
\sum_{\mathbf n\neq\mathbf 0}
a_{p,\mathbf n}\Psi_{ij,\mathbf n}(\mathbf u),
\qquad
\tilde h_{j,q}(\mathbf v)
=
\sum_{\mathbf n\neq\mathbf 0}
b_{q,\mathbf n}\Psi_{ij,\mathbf n}(\mathbf v).
\]
Let
\[
\mathbf A=(a_{p,\mathbf n})_{p,\mathbf n},
\qquad
\mathbf B=(b_{q,\mathbf n})_{q,\mathbf n},
\qquad
\mathbf D_{ij}
:=
\operatorname{diag}\big((\lambda_{ij,\mathbf n})_{\mathbf n\neq\mathbf 0}\big).
\]
Then whitening implies
\[
\mathbf A\mathbf A^\top=\mathbf I,
\qquad
\mathbf B\mathbf B^\top=\mathbf I,
\]
and the whitened pairwise cross-covariance diagonalizes as
\begin{equation}
\label{eq:lancaster_pairwise_covariance}
\operatorname{Cov}\!\left(
\tilde{\mathbf h}_i(\mathbf s_i),
\tilde{\mathbf h}_j(\mathbf s_j)
\right)
=
\mathbf A\mathbf D_{ij}\mathbf B^\top .
\end{equation}
Consequently, the pairwise CCA objective selects the largest absolute
Lancaster coefficients.

If, in addition, the first-order coefficients satisfy the dominance condition
\begin{equation}
\label{eq:lancaster_first_order_dominance}
\min_{k\in[r]}|\lambda_{ij,k,1}|
>
\sup_{\substack{\mathbf n\in\mathcal I_{ij}\setminus\{\mathbf 0\}\\
\mathbf n\notin\{\mathbf e_1,\ldots,\mathbf e_r\}}}
|\lambda_{ij,\mathbf n}|,
\end{equation}
then the leading \(r\) selected modes are exactly the first-order modes
\(\Psi_{ij,\mathbf e_k}\), \(k\in[r]\). Since
\(\Psi_{ij,\mathbf e_k}(\mathbf u)=u_k\) and
\(\Psi_{ij,\mathbf e_k}(\mathbf v)=v_k\), the conclusion of
\autoref{thm:two_view_identifiability_infinite} holds with the Hermite modes
replaced by the first-order Lancaster modes. In particular, there exist
orthogonal matrices \(\mathbf O_i,\mathbf O_j\in O(r)\) such that
\[
\mathbf P_r\tilde{\mathbf h}_i^\star(\mathbf s_i)
=
\mathbf O_i\mathbf U_{ij}(:,1:r)^\top\mathbf W_i\mathbf s_i,
\qquad
\mathbf P_r\tilde{\mathbf h}_j^\star(\mathbf s_j)
=
\mathbf O_j\mathbf V_{ij}(:,1:r)^\top\mathbf W_j\mathbf s_j,
\]
up to the same CCA post-orthogonal ambiguity as in
\autoref{thm:two_view_identifiability_infinite}.
\end{proposition}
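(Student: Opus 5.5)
The plan is to transport the Gaussian proofs of \autoref{thm:two_view_identifiability_infinite} and \autoref{corollary:two_view_identifiability_finite} almost verbatim, replacing the Hermite system by the Lancaster tensor basis of \autoref{lemma:lancaster_pairwise_diagonalization}.

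\emph{Step 1: reduction to the source domain.} By reparameterization invariance (\autoref{lemma:rep-inv}), it suffices to work with whitened source-domain maps $\tilde{\mathbf h}_i,\tilde{\mathbf h}_j$. Under \autoref{ass:pairwise_lancaster_coordinates}, the coordinates of $\mathbf u$ and $\mathbf v$ beyond index $r$ are cross-view independent. Consequently, any component of $\tilde h_{i,p}$ that depends on them is orthogonal in the cross-covariance to everything on the other side. We therefore expand each coordinate in the full tensor basis (correlated modes times uncorrelated-coordinate modes). Modes that involve uncorrelated coordinates receive a zero coefficient in the diagonal operator, so they can be folded into $\mathbf D_{ij}$ as zero entries. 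This lets us write the expansion over the correlated basis $\{\Psi_{ij,\mathbf n}\}_{\mathbf n\neq\mathbf 0}$, and the omission of the constant mode is justified by centering.

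\emph{Step 2: whitening and the covariance formula.} Orthonormality of the basis together with $\mathrm{Cov}(\tilde{\mathbf h}_\ell)=\mathbf I$ gives $\mathbf A\mathbf A^\top=\mathbf B\mathbf B^\top=\mathbf I$, exactly as in \autoref{eq:whitening_induced_canonicalization}. Substituting the expansions and using \eqref{eq:lancaster_cross_moments} term by term yields $C_{pq}=\sum_{\mathbf n}a_{p,\mathbf n}b_{q,\mathbf n}\lambda_{ij,\mathbf n}$, that is, \eqref{eq:lancaster_pairwise_covariance}. The interchange of sum and expectation is justified by $L^2$ convergence of the coefficient series; the trace-class hypothesis ensures the limit operator is well defined.

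\emph{Step 3: the Ky Fan bound.} As in Step 3 of the corollary's proof, $\|\mathbf A\mathbf D_{ij}\mathbf B^\top\|_*\le\sum_{k\le d}|\lambda|_{(k)}$, since $\mathbf A^\top\mathbf U$ and $\mathbf B^\top\mathbf V$ have orthonormal columns. This bound is attained by selecting the top modes, with a sign flip on one side when $\lambda<0$. Any maximizer therefore concentrates its top-$r$ singular structure on the $r$ largest $|\lambda_{ij,\mathbf n}|$.

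\emph{Step 4: first-order dominance and extraction.} Dominance \eqref{eq:lancaster_first_order_dominance} identifies these $r$ modes as $\mathbf e_1,\dots,\mathbf e_r$, with a strict gap. We then use post-orthogonal closure: rotate by the SVD factors of $\mathbf C^\star$ to diagonalize, select with $\mathbf P_r$, and conclude that the selected coordinates span $\mathrm{span}\{u_1,\dots,u_r\}$ in $L^2$. Hence they equal $\mathbf O_i\mathbf u_{1:r}$ for some $\mathbf O_i\in O(r)$, and analogously on the $j$ side. Substituting $\mathbf u=\mathbf U_{ij}^\top\mathbf W_i\mathbf s_i$ then finishes the proof.

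\emph{Main obstacle.} The hard step is the rigidity claim in Step 4: equality in Ky Fan forces the leading singular subspace to be exactly the first-order span. The plan is a trace-equality argument. Equality in $\mathrm{Tr}(\mathbf X^\top\mathbf D\mathbf Y)\le\sum_{k}|\lambda|_{(k)}$ with $\mathbf X,\mathbf Y$ having orthonormal columns requires $\mathrm{col}(\mathbf X)$ to lie in the span of the top-$|\lambda|$ eigendirections of $|\mathbf D|$. The strict gap then pins the top-$r$ block. Two cases need extra care: when some first-order $|\lambda_{ij,k,1}|$ coincide, the argument only determines the span, which suffices for the claim; and ties at rank $d$ beyond $r$ are irrelevant to the conclusion.
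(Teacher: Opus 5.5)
Your proposal is correct and follows essentially the same route as the paper's proof sketch: reduce to the source domain, expand in the Lancaster tensor basis with row-orthonormal coefficient matrices, obtain the diagonal form $\mathbf A\mathbf D_{ij}\mathbf B^\top$, bound the objective by the Ky Fan norm of $\mathbf D_{ij}$, and then use first-order dominance plus a post-orthogonal rotation to isolate $\mathbf O_i\mathbf u_{1:r}$. You also fill in two steps the paper's sketch only asserts. First, you fold the trailing coordinates $u_{r+1},\dots$ into zero Lancaster coefficients; this needs them jointly independent of $(\mathbf u_{1:r},\mathbf v)$, as in the Gaussian factorization. Second, you prove rigidity via the equality case; the termwise bound $\sigma_k(\mathbf C^\star)\le|\lambda|_{(k)}$ is what lets you pass from the full Ky Fan equality to the top-$r$ block, where the strict gap pins it to the first-order span.
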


\begin{proof}[Proof sketch]
By reparameterization invariance (\autoref{lemma:rep-inv}), it suffices to
work in the source domain. After expanding the centered whitened maps in the
orthonormal Lancaster tensor basis, whitening gives row-orthonormal coefficient
matrices \(\mathbf A\mathbf A^\top=\mathbf I\) and
\(\mathbf B\mathbf B^\top=\mathbf I\), exactly as in
\autoref{eq:whitening_induced_canonicalization}. Using
\autoref{lemma:lancaster_pairwise_diagonalization}, the pairwise
cross-covariance takes the diagonal coefficient form
\eqref{eq:lancaster_pairwise_covariance}.

The nuclear-norm CCA objective is therefore bounded by the Ky Fan norm of the
diagonal operator \(\mathbf D_{ij}\), and this bound is achieved by selecting
the diagonal modes with largest absolute coefficients. Under
\eqref{eq:lancaster_first_order_dominance}, the leading \(r\) coefficients are
precisely those indexed by
\(\mathbf e_1,\ldots,\mathbf e_r\). These first-order basis functions are the
standardized canonical coordinates themselves. Hence the selected coordinates
span
\[
\operatorname{span}(u_1,\ldots,u_r)
=
\operatorname{span}\!\left(
\mathbf U_{ij}(:,1:r)^\top\mathbf W_i\mathbf s_i
\right)
\]
and analogously for view \(j\). The remaining ambiguity is only an
orthogonal transformation inside the selected \(r\)-dimensional block, which
proves pairwise correlated subspace recovery.
\end{proof}

\begin{corollary}[Multi-view extension under Lancaster priors]
\label{cor:lancaster_multiview_subspace}
Suppose \autoref{ass:pairwise_lancaster_coordinates} and
\eqref{eq:lancaster_first_order_dominance} hold for every pair
\(1\le i<j\le N\). Then the multi-view generalized CCA maximizer identifies
the jointly correlated subspace
\[
\mathcal U_i^{\mathrm{mv}}
=
\bigcap_{j\neq i}\mathcal U_{i\mid j}
\]
in the sense of \autoref{thm:multi_view_identifiability_infinite}.
\end{corollary}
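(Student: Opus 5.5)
Corollary~\ref{cor:lancaster_multiview_subspace} will be proved by repeating the proof of \autoref{thm:multi_view_identifiability_infinite}, with \autoref{prop:lancaster_two_view_subspace} used in place of \autoref{thm:two_view_identifiability_infinite}. We work in the source domain throughout. By \autoref{lemma:rep-inv}, maximizing \autoref{eq:mvcca_obj} over whitened encoders $\{\tilde{\mathbf f}_i\}$ is the same as maximizing it over the whitened source maps $\tilde{\mathbf h}_i=\tilde{\mathbf f}_i\circ\mathbf g_i$. The generalized objective is the sum $J=\sum_{i<j}J_{ij}$ of the pairwise nuclear norms.

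\textbf{Step 1: pairwise upper bound and attainability.}
Fix a pair $(i,j)$. By \autoref{prop:lancaster_two_view_subspace}, $\mathrm{Cov}(\tilde{\mathbf h}_i,\tilde{\mathbf h}_j)=\mathbf A\mathbf D_{ij}\mathbf B^\top$, where $\mathbf A$ and $\mathbf B$ have orthonormal rows. The Ky Fan and duality argument from the proof of \autoref{thm:two_view_identifiability_infinite} then gives $J_{ij}\le M_{ij}:=\sum_{\mathbf n\neq\mathbf 0}|\lambda_{ij,\mathbf n}|$. This quantity is finite by the trace-class condition in \autoref{ass:pairwise_lancaster_coordinates}.

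Next we argue that $\sum_{i<j}M_{ij}$ is attained by a single feasible collection $\{\tilde{\mathbf h}_i\}$. As $d_{\mathcal Z}\to\infty$, let each $\tilde{\mathbf h}_i$ enumerate an orthonormal basis of centered $L^2(P_{\mathbf s_i})$. This basis contains every pairwise Lancaster tensor basis up to an orthogonal change of coordinates, so every $J_{ij}$ reaches its supremum simultaneously. The step relies on the nuclear norm being invariant under isometries on both sides, and on $\mathrm{Cov}$ acting as the restriction of the cross-covariance operator, whose nuclear norm is $M_{ij}$ (signs of $\lambda$ are absorbed into the singular vectors).

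Since the upper bound is attained, any global maximizer must saturate every pair: $J_{ij}(\tilde{\mathbf h}_i^\star,\tilde{\mathbf h}_j^\star)=M_{ij}$ for all $i<j$. Hence each pair $(\tilde{\mathbf h}_i^\star,\tilde{\mathbf h}_j^\star)$ is itself a two-view maximizer.

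\textbf{Step 2: pairwise recovery.}
Apply \autoref{prop:lancaster_two_view_subspace} to each saturated pair, using the first-order dominance condition \eqref{eq:lancaster_first_order_dominance}. This yields selectors $\mathbf P^{(i\mid j)}_{r_{ij}}$ and orthogonal matrices $\mathbf O_{i\mid j}$ such that $\mathbf P^{(i\mid j)}_{r_{ij}}\tilde{\mathbf h}_i^\star(\mathbf s_i)=\mathbf O_{i\mid j}\mathbf U_{ij}(:,1{:}r_{ij})^\top\mathbf W_i\mathbf s_i$. In other words, for every $j\neq i$ the representation contains a block that identifies $\mathcal U_{i\mid j}$.

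\textbf{Step 3: extracting the intersection.}
Fix any $j_0\neq i$. Since $\mathcal U_i^{\mathrm{mv}}\subseteq\mathcal U_{i\mid j_0}$, we can write $\mathbf U_i^{\mathrm{mv}}=\mathbf U_{ij_0}(:,1{:}r_{ij_0})\mathbf B_i$ with $\mathbf B_i$ having orthonormal columns. Define $\mathbf L_i:=\mathbf B_i^\top\mathbf O_{i\mid j_0}^\top\mathbf P^{(i\mid j_0)}_{r_{ij_0}}$. This matrix has orthonormal rows, because it is a product of a row selector, an orthogonal matrix, and a matrix with orthonormal rows. It satisfies $\mathbf L_i\tilde{\mathbf h}_i^\star(\mathbf s_i)=(\mathbf U_i^{\mathrm{mv}})^\top\mathbf W_i\mathbf s_i$. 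The arbitrary choice of basis for $\mathbf U_i^{\mathrm{mv}}$ is absorbed into $\mathbf O_i\in O(r_i)$, which gives \autoref{eq:multi_view_identifiability_infinite}. This last step is purely linear-algebraic and does not depend on the prior.

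\textbf{Main obstacle.}
The delicate step is the joint attainability in Step 1. The pairwise optimizers for different partners $j$ use different canonical bases $\mathbf U_{ij}$, and the Lancaster structure is only assumed pair by pair. So we must show that one infinite whitened feature map per view simultaneously achieves every $M_{ij}$.

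The plan is to argue at the operator level. Take $\tilde{\mathbf h}_i$ to be a complete orthonormal system of centered $L^2(P_{\mathbf s_i})$. Then $\mathrm{Cov}(\tilde{\mathbf h}_i,\tilde{\mathbf h}_j)$ is unitarily equivalent to the full cross-covariance operator between the two $L^2$ spaces. By \autoref{lemma:lancaster_pairwise_diagonalization}, that operator has singular values $\{|\lambda_{ij,\mathbf n}|\}$, and the coordinates with index larger than $r$ are cross-view independent, so they contribute nothing. Its nuclear norm therefore equals $M_{ij}$ independently of the choice of basis.

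This requires reading the limit $d_{\mathcal Z}\to\infty$ as feature maps into $\ell^2$. That is the same convention the Gaussian proof uses implicitly, and I would adopt it explicitly here.
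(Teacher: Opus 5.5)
Your proposal is correct and follows the same route as the paper's sketch. It establishes pairwise saturation of the summed nuclear-norm objective, applies Proposition~\ref{prop:lancaster_two_view_subspace} to each saturated pair, and extracts $\mathcal U_i^{\mathrm{mv}}$ through a single fixed partner $j_0$ exactly as in the proof of Theorem~\ref{thm:multi_view_identifiability_infinite}. Your operator-level argument for joint attainability (a complete orthonormal system per view, together with basis-invariance of the trace norm of each cross-covariance operator) is a useful addition: the paper only asserts that ``the corresponding singular functions'' attain every $M_{ij}$, without addressing that these singular functions differ across partners.
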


\begin{proof}[Proof sketch]
The generalized objective in \autoref{eq:mvcca_obj} is the sum of pairwise
nuclear-norm CCA objectives. The pairwise Lancaster argument above gives the
same pairwise saturation property used in the proof of
\autoref{thm:multi_view_identifiability_infinite}: every globally optimal
multi-view solution must be pairwise optimal for each view pair. Applying
\autoref{prop:lancaster_two_view_subspace} to every pair yields the collection
of pairwise subspaces \(\{\mathcal U_{i\mid j}:j\neq i\}\). Intersecting these
subspaces gives the jointly correlated subspace
\(\mathcal U_i^{\mathrm{mv}}\), exactly as in \autoref{def:subspaces}.
\end{proof}

\paragraph{Poisson prior.}
For a raw Poisson marginal \(X\sim\operatorname{Poisson}(\lambda)\),
\[
\nu(x)=e^{-\lambda}\frac{\lambda^x}{x!},
\qquad x\in\mathbb N_0 .
\]
The corresponding orthogonal polynomials are the Charlier polynomials. The
first-degree normalized polynomial is
\[
\psi_1(x)=\frac{x-\lambda}{\sqrt{\lambda}} .
\]
Thus, after standardization, the first-degree Lancaster mode is the canonical
coordinate itself. Under \eqref{eq:lancaster_first_order_dominance}, CCA
therefore selects the Poisson first-order modes before any higher-order
Charlier modes, yielding the same pairwise and multi-view subspace recovery
claims as above.

\paragraph{Negative-binomial prior.}
Using the convention
\[
\nu(x)=
\binom{r+x-1}{x}p^r(1-p)^x,
\qquad x\in\mathbb N_0,\quad 0<p<1,
\]
the associated orthogonal polynomials are the Meixner polynomials. The mean
and variance are
\[
\mu=\frac{r(1-p)}{p},
\qquad
\sigma^2=\frac{r(1-p)}{p^2},
\]
so the first-degree normalized polynomial is
\[
\psi_1(x)=\frac{x-\mu}{\sigma}.
\]
Hence the first-order Meixner modes are affine in the latent coordinate.
Once \eqref{eq:lancaster_first_order_dominance} excludes higher-degree
Meixner modes, the proof of \autoref{prop:lancaster_two_view_subspace}
applies verbatim.

\paragraph{Hypergeometric prior.}
For \(X\sim\operatorname{Hypergeometric}(N,K,n)\),
\[
\nu(x)=
\frac{\binom{K}{x}\binom{N-K}{n-x}}{\binom{N}{n}},
\qquad
x=\max\{0,n-(N-K)\},\ldots,\min\{n,K\}.
\]
The associated orthogonal polynomials are the Hahn polynomials. Since the
support is finite, the polynomial system has finite maximal degree. The mean
and variance are
\[
\mu=n\frac{K}{N},
\qquad
\sigma^2=
n\frac{K}{N}\left(1-\frac{K}{N}\right)\frac{N-n}{N-1},
\]
and
\[
\psi_1(x)=\frac{x-\mu}{\sigma}.
\]
The Lancaster diagonalization is therefore finite-dimensional in this case.
The sign of the degree-one coefficient may be negative, e.g. under
sampling-without-replacement dependence, but the CCA objective uses singular
values; hence only \(|\lambda_{ij,k,1}|\) matters, and the sign is absorbed
into the orthogonal ambiguity.

\paragraph{Gamma prior.}
For \(X\sim\operatorname{Gamma}(k,\theta)\),
\[
d\nu(x)
=
\frac{1}{\Gamma(k)\theta^k}
x^{k-1}e^{-x/\theta}\mathbf 1_{(0,\infty)}(x)\,dx .
\]
The corresponding orthogonal polynomials are the generalized Laguerre
polynomials \(L_m^{(k-1)}(x/\theta)\). The mean and variance are
\[
\mu=k\theta,
\qquad
\sigma^2=k\theta^2,
\]
so
\[
\psi_1(x)=\frac{x-k\theta}{\sqrt{k}\theta}.
\]
Thus the first-degree Laguerre mode is affine in the raw Gamma coordinate and
equals the coordinate itself after standardization. Under
\eqref{eq:lancaster_first_order_dominance}, the CCA optimum selects these
first-order Laguerre modes before all higher-order modes, giving the same
subspace-identifiability conclusion.

\suppsection{Additional Experimental Results}\label{sec:results}
\begin{table}[t]
    \centering
    \caption{Comparison of the mean and maximum principal angles ($PA_{\mathrm{mean}}$, $PA_{\mathrm{max}}$) on synthetic data ($d_{\mathcal{S}_i}=d_\mathcal{Z}=5, \forall i \in [3]$) under Poisson distribution ($p_\phi$).}
    \resizebox{\linewidth}{!}{
    \begin{tabular}{l*{6}{c}}
        \toprule
        Methods   & \multicolumn{2}{c}{$\tilde{\mathbf{f}}_1$}  & \multicolumn{2}{c}{$\tilde{\mathbf{f}}_2$}  & \multicolumn{2}{c}{$\tilde{\mathbf{f}}_3$} \\
        \cmidrule(r){2-3}
        \cmidrule(r){4-5}
        \cmidrule(r){6-7}
            & $PA_{\mathrm{mean}}$ & $PA_{\mathrm{max}}$   & $PA_{\mathrm{mean}}$ & $PA_{\mathrm{max}}$   & $PA_{\mathrm{mean}}$ & $PA_{\mathrm{max}}$ \\
        \midrule
        BarlowTwins & 34.47 $\pm$0.20 & 86.87 $\pm$1.10 & 31.68 $\pm$0.16 & 82.34 $\pm$0.90 & 33.07 $\pm$0.29 & 84.62 $\pm$0.86 \\
        InfoNCE     & 5.12 $\pm$0.13 & \textbf{6.93 $\pm$0.46} & \textbf{4.40 $\pm$0.12} & 7.78 $\pm$0.24 & 5.92 $\pm$0.06 & 8.03 $\pm$0.33 \\
        W-MSE       & \textbf{4.70 $\pm$0.09} & 7.50 $\pm$0.45 & 4.49 $\pm$0.02 & \textbf{7.52 $\pm$0.77} & 5.02 $\pm$0.04 & 8.82 $\pm$0.32 \\
        GCCA        & 5.00 $\pm$0.15 & 6.98 $\pm$0.37 & 5.50 $\pm$0.10 & 8.03 $\pm$0.39 & \textbf{3.87 $\pm$0.01} & \textbf{7.72 $\pm$0.62} \\
        \bottomrule
    \end{tabular}}
\end{table}
\begin{table}[t]
    \centering
    \caption{Comparison of the mean and maximum principal angles ($PA_{\mathrm{mean}}$, $PA_{\mathrm{max}}$) on synthetic data ($d_{\mathcal{S}_i}=d_\mathcal{Z}=5, \forall i \in [3]$) under negative binomial distribution ($p_\phi$).}
    \resizebox{\linewidth}{!}{
    \begin{tabular}{l*{6}{c}}
        \toprule
        Methods   & \multicolumn{2}{c}{$\tilde{\mathbf{f}}_1$}  & \multicolumn{2}{c}{$\tilde{\mathbf{f}}_2$}  & \multicolumn{2}{c}{$\tilde{\mathbf{f}}_3$} \\
        \cmidrule(r){2-3}
        \cmidrule(r){4-5}
        \cmidrule(r){6-7}
            & $PA_{\mathrm{mean}}$ & $PA_{\mathrm{max}}$   & $PA_{\mathrm{mean}}$ & $PA_{\mathrm{max}}$   & $PA_{\mathrm{mean}}$ & $PA_{\mathrm{max}}$ \\
        \midrule
        BarlowTwins & 34.42 $\pm$0.19 & 86.24 $\pm$1.00 & 32.31 $\pm$0.30 & 79.76 $\pm$0.86 & 33.56 $\pm$0.42 & 84.23 $\pm$0.94 \\
        InfoNCE     & 3.98 $\pm$0.06 & 9.15 $\pm$0.66 & 5.49 $\pm$0.21 & 7.54 $\pm$0.28 & \textbf{4.21 $\pm$0.07} & \textbf{8.38 $\pm$0.57} \\
        W-MSE       & 4.13 $\pm$0.11 & 6.84 $\pm$0.34 & \textbf{5.76 $\pm$0.08} & \textbf{8.22 $\pm$0.67} & 5.00 $\pm$0.17 & 8.48 $\pm$0.49 \\
        GCCA        & \textbf{4.89 $\pm$0.14} & \textbf{8.00 $\pm$0.42} & 6.18 $\pm$0.31 & 8.26 $\pm$0.26 & 6.55 $\pm$0.09 & 8.17 $\pm$0.40 \\
        \bottomrule
    \end{tabular}}
\end{table}
\begin{table}[t]
    \centering
    \caption{Comparison of the mean and maximum principal angles ($PA_{\mathrm{mean}}$, $PA_{\mathrm{max}}$) on synthetic data ($d_{\mathcal{S}_i}=d_\mathcal{Z}=5, \forall i \in [3]$) under Gamma distribution ($p_\phi$).}
    \resizebox{\linewidth}{!}{
    \begin{tabular}{l*{6}{c}}
        \toprule
        Methods   & \multicolumn{2}{c}{$\tilde{\mathbf{f}}_1$}  & \multicolumn{2}{c}{$\tilde{\mathbf{f}}_2$}  & \multicolumn{2}{c}{$\tilde{\mathbf{f}}_3$} \\
        \cmidrule(r){2-3}
        \cmidrule(r){4-5}
        \cmidrule(r){6-7}
            & $PA_{\mathrm{mean}}$ & $PA_{\mathrm{max}}$   & $PA_{\mathrm{mean}}$ & $PA_{\mathrm{max}}$   & $PA_{\mathrm{mean}}$ & $PA_{\mathrm{max}}$ \\
        \midrule
        BarlowTwins  & 33.81$\pm$0.07   & 86.63$\pm$0.95   & 32.88$\pm$0.21   & 80.63$\pm$0.66   & 33.83$\pm$0.21   & 84.90$\pm$0.85   \\
        InfoNCE      & \textbf{3.54$\pm$0.15} & 8.49$\pm$0.54    & 5.59$\pm$0.01    & 9.81$\pm$0.35    & 5.13$\pm$0.15    & \textbf{6.05$\pm$0.64} \\
        W-MSE        & 5.12$\pm$0.01    & 8.05$\pm$0.68    & \textbf{4.74$\pm$0.01} & \textbf{7.83$\pm$0.62} & \textbf{4.59$\pm$0.01} & 9.08$\pm$0.52    \\
        GCCA         & 4.85$\pm$0.04    & \textbf{8.02$\pm$0.31} & 4.97$\pm$0.09    & 9.15$\pm$0.30    & 5.58$\pm$0.23    & 7.44$\pm$0.53    \\
        \bottomrule
    \end{tabular}}
\end{table}
\begin{table}[t]
    \centering
    \caption{Comparison of the mean and maximum principal angles ($PA_{\mathrm{mean}}$, $PA_{\mathrm{max}}$) on synthetic data ($d_{\mathcal{S}_i}=d_\mathcal{Z}=5, \forall i \in [3]$) under Gaussian prior ($p_\phi$).}
    \resizebox{\linewidth}{!}{
    \begin{tabular}{l*{6}{c}}
        \toprule
        Methods   & \multicolumn{2}{c}{$\tilde{\mathbf{f}}_1$}  & \multicolumn{2}{c}{$\tilde{\mathbf{f}}_2$}  & \multicolumn{2}{c}{$\tilde{\mathbf{f}}_3$} \\
        \cmidrule(r){2-3}
        \cmidrule(r){4-5}
        \cmidrule(r){6-7}
            & $PA_{\mathrm{mean}}$ & $PA_{\mathrm{max}}$   & $PA_{\mathrm{mean}}$ & $PA_{\mathrm{max}}$   & $PA_{\mathrm{mean}}$ & $PA_{\mathrm{max}}$ \\
        \midrule
        BarlowTwins  & 34.05$\pm$0.28  & 85.39$\pm$0.88  & 31.78$\pm$0.39  & 82.21$\pm$0.80  & 33.94$\pm$0.26  & 84.21$\pm$0.83  \\
        InfoNCE      & \textbf{4.20$\pm$0.05} & 8.26$\pm$0.34   & 5.87$\pm$0.09   & 8.30$\pm$0.35   & 6.00$\pm$0.06   & \textbf{7.85$\pm$0.54} \\
        W-MSE        & 5.08$\pm$0.16   & 8.37$\pm$0.42   & 5.76$\pm$0.04   & \textbf{7.31$\pm$0.45} & 5.07$\pm$0.01   & 8.77$\pm$0.47   \\
        GCCA         & 4.63$\pm$0.01   & \textbf{7.87$\pm$0.41} & \textbf{5.59$\pm$0.01} & 9.45$\pm$0.36   & \textbf{4.08$\pm$0.22} & 8.91$\pm$0.60   \\
        \bottomrule
    \end{tabular}}
\end{table}
\end{document}